\relax
\documentclass[letterpaper]{article} 
\usepackage{aaai22}  
\usepackage{times}  
\usepackage{helvet}  
\usepackage{courier}  
\usepackage[hyphens]{url}  
\usepackage{graphicx} 
\urlstyle{rm} 
\usepackage{natbib}  
\usepackage{caption} 
\DeclareCaptionStyle{ruled}{labelfont=normalfont,labelsep=colon,strut=off} 
\frenchspacing  
\setlength{\pdfpagewidth}{8.5in}  
\setlength{\pdfpageheight}{11in}  
%
\usepackage{algorithm}
\usepackage{algorithmic}


%
\usepackage{newfloat}
\usepackage{listings}
\lstset{%
	basicstyle={\footnotesize\ttfamily},
	numbers=left,numberstyle=\footnotesize,xleftmargin=2em,
	aboveskip=0pt,belowskip=0pt,%
	showstringspaces=false,tabsize=2,breaklines=true}
\floatstyle{ruled}
\newfloat{listing}{tb}{lst}{}
\floatname{listing}{Listing}
%
%
\pdfinfo{
/Title (AAAI Press Formatting Instructions for Authors Using LaTeX -- A Guide)
/Author (AAAI Press Staff, Pater Patel Schneider, Sunil Issar, J. Scott Penberthy, George Ferguson, Hans Guesgen, Francisco Cruz, Marc Pujol-Gonzalez)
/TemplateVersion (2022.1)
}

\setcounter{secnumdepth}{0} 

%


\usepackage{hyperref}


\usepackage{savesym,multirow,url,xspace,graphicx,hyperref,enumitem,color}
\usepackage[titletoc]{appendix}
\usepackage{subcaption}
\usepackage{dsfont}
\usepackage{multicol}

\usepackage{etoolbox}

\usepackage{tikz}
\usetikzlibrary{automata, positioning, arrows}
\tikzset{
	->, 
	every state/.style={thick, fill=gray!10}, 
	initial text=$ $, 
}
\usetikzlibrary{arrows.meta}
\usepackage{subcaption}
\usepackage{pgfplots}
\pgfplotsset{width=10cm,compat=1.9}
\usepackage{diagbox}
\usepackage{caption}

\usepackage{nameref}

\def\mycomments{1}
\if\mycomments1
\newcommand{\adish}[1]{{\textcolor{blue}{{\bf AS:} #1}}}
\newcommand{\kiarash}[1]{{\textcolor{cyan}{{\bf KB:} #1}}}
\newcommand{\goran}[1]{{\textcolor{green}{{\bf GR:} #1}}}
\newcommand{\jiarui}[1]{{\textcolor{purple}{{\bf JG:} #1}}}
\newcommand{\todo}[1]{{\textcolor{red}{{\bf TODO:} #1}}}
\else
\newcommand{\adish}[1]{\iffalse{\textcolor{blue}{{\bf AS:} #1}}\fi}
\newcommand{\kiarash}[1]{\iffalse{\textcolor{cyan}{{\bf KB:} #1}}\fi}
\newcommand{\goran}[1]{\iffalse{\textcolor{green}{{\bf GR:} #1}}\fi}
\newcommand{\jiarui}[1]{\iffalse{\textcolor{purple}{{\bf JG:} #1}}\fi}
\newcommand{\todo}[1]{\iffalse{\textcolor{red}{{\bf TODO:} #1}}\fi}
\fi
\makeatletter
\newif\if@restonecol
\makeatother

\usepackage{amsmath,amssymb,xfrac,amsthm}
\usepackage{mathtools}
\setitemize{noitemsep,topsep=1pt,parsep=1pt,partopsep=1pt, leftmargin=12pt}
\newtheorem{lemma}{Lemma}
\newtheorem{theorem}{Theorem}
\newtheorem*{theorem*}{Theorem}

\newtheorem{corollary}{Corollary}
\newtheorem{proposition}{Proposition}

\newtheorem{remark}{Remark}
\newtheorem{claim}{Claim}
\makeatletter

\newcommand{\Rmnum}[1]{\expandafter\@slowromancap\romannumeral #1@}
\makeatother
\usepackage{caption}

\usepackage{enumitem}
\usepackage{wasysym}

\DeclareMathOperator*{\argmin}{arg\,min}
\DeclareMathOperator*{\argmax}{arg\,max}





\newcommand{\opt}{\textsc{{Opt}}\xspace}


\newcommand{\qgreedy}{\textsc{QGreedy}}

\newcommand{\coopalgo}{\textsc{Constrain\&Optimize}}


\newcommand{\adm}{\textnormal{adm}}

\usepackage{bm}
\usepackage{bbm}

\newcommand{\norm}[1]{\left\lVert#1\right\rVert}

\newcommand{\vecdot}[2]{\left<#1, #2\right>}


\newcommand{\expct}[1]{\mathbb{E}\left[#1\right]}
\newcommand{\expctu}[2]{\mathbb{E}_{#1}\left[#2\right]}
\renewcommand{\Pr}[1]{\ensuremath{\mathbb{P}\left[#1\right] }}

\newcommand{\ind}[1]{\mathds{1}\left[#1\right]}
\newcommand{\pos}[1]{\left[#1\right]^+}
\newcommand{\CasesIf}{\quad\text{if}\quad}
\newcommand{\CasesOW}{\quad\text{o.w.}\quad}


\newcommand{\targetpi}{\ensuremath{{\pi_{\dagger}}}}

\newcommand{\score}{{\rho}}

\newcommand{\occstate}{{\mu}}

\newcommand{\optpi}{\pi^{*}}

\newcommand{\optallowpi}{\pi^{*}_{\textnormal{adm}}}


\newcommand{\allowedpi}{\Pi^{\textnormal{adm}}_{\textnormal{det}}}
\newcommand{\optEpsDet}{\opt^{\epsilon}_{\textnormal{det}}}

\newcommand{\PiDet}{\Pi_{\text{det}}}
\newcommand{\RsolA}{\widehat R_1}
\newcommand{\RsolB}{\widehat R_2}
\newcommand{\pisolB}{\pi_2}

\newcommand{\relval}{\Phi}
\newcommand{\maxgapqval}{{\Delta_Q^{\pi}}}
\newcommand{\minmaxgapqval}{{\Delta_Q}}
\newcommand{\scoregap}{{\Delta_{\score}^{\pi}}}
\newcommand{\minscoregap}{{\Delta_{\score}}}
\newcommand{\qgapqval}{{\delta}}

\newcommand{\posStates}{S_{\textnormal{pos}}}

\newcommand{\mincostallowpi}{\pi_{\textnormal{min}_c}}

\newcommand{\piqg}{\pi_{\textnormal{qg}}}

\newcommand{\pico}{\pi_{\textnormal{co}}}

\usepackage[utf8]{inputenc} 
\usepackage[T1]{fontenc}    
\usepackage{hyperref}       
\usepackage{url}            
\usepackage{booktabs}       
\usepackage{amsfonts}       
\usepackage{nicefrac}       
\usepackage{microtype}      
\usepackage{xcolor}         



\title{Admissible Policy Teaching through Reward Design}
\author {
    Kiarash Banihashem, 
    Adish Singla, 
    Jiarui Gan, 
    Goran Radanovic
}
\affiliations {
    Max Planck Institute for Software Systems\\
    \{kbanihas, adishs, jrgan, gradanovic\}@mpi-sws.org
}

\begin{document}

\maketitle

\newtoggle{longversion}
\settoggle{longversion}{true}
\begin{abstract}
\looseness-1We study reward design strategies for incentivizing a reinforcement learning agent to adopt a policy from a set of admissible policies. The goal of the reward designer is to modify the underlying reward function cost-efficiently while ensuring that any approximately optimal deterministic policy under the new reward function is admissible and performs well under the original reward function. This problem can be viewed as a dual to the problem of optimal reward poisoning attacks: instead of forcing an agent to adopt a specific policy, the reward designer incentivizes an agent to avoid taking actions that are inadmissible in certain states. Perhaps surprisingly, and in contrast to the problem of optimal reward poisoning attacks, we first show that the reward design problem for admissible policy teaching is computationally challenging, and it is NP-hard to find an approximately optimal reward modification.  We then proceed by formulating a surrogate problem whose optimal solution approximates the optimal solution to the reward design problem in our setting, but is more amenable to optimization techniques and analysis. For this surrogate problem, we present characterization results that provide bounds on the value of the optimal solution. Finally, we design a local search algorithm to solve the surrogate problem and showcase its utility using simulation-based experiments.
\end{abstract}

\section{Introduction}
\looseness-1Reinforcement learning (RL) \cite{sutton2018reinforcement} is a framework for deriving an agent's policy that maximizes its utility in sequential decision making tasks. In the standard formulation, the utility of an agent is defined via its reward function, which determines the decision making task of interest. Reward design plays a critical role in providing sound specifications of the task goals and supporting the agent's learning process~\cite{singh2009rewards,amodei2016concrete}.

There are different perspectives on reward design, which differ in the studied objectives. A notable example of reward design is {\em reward shaping} \cite{mataric1994reward,dorigo1994robot,ng1999policy} which modifies the reward function in order to accelerate the learning process of an agent.
Reward transformations that are similar to or are based on reward shaping are not only used for accelerating learning. For example, reward penalties are often used in safe RL to penalize the agent whenever it violates safety constraints \cite{tessler2018reward}. Similarly, reward penalties can be used in offline RL for ensuring robustness against model uncertainty \cite{yu2020mopo}, while exploration bonuses can be used as intrinsic motivation for an RL agent to reduce uncertainty \cite{bellemare2016unifying}. 

In this paper, we consider a different perspective on reward design, and study it in the context of {\em policy teaching} and closely related (targeted) {\em reward poisoning attacks}. In this line of work~\cite{DBLP:conf/aaai/ZhangP08,zhang2009policy,ma2019policy,rakhsha2020policy,rakhsha2020policy-jmlr}, the reward designer perturbs the original reward function to influence the choice of policy adopted by an optimal agent. For instance, \cite{DBLP:conf/aaai/ZhangP08,zhang2009policy} studied policy teaching from a principal's perspective who provides incentives to an agent to influence its policy. In reward poisoning attacks~\cite{ma2019policy,rakhsha2020policy,rakhsha2020policy-jmlr}, an attacker modifies the reward function with the goal of forcing a specific target policy of interest. Importantly, the reward modifications do not come for free, and the goal in this line of work is to alter the original reward function in a cost-efficient manner. The associated cost can, e.g., model the objective of minimizing additional incentives provided by the principal or ensuring the stealthiness of the attack.

The focus of this paper is on a dual problem to reward poisoning attacks. Instead of forcing 
a specific target policy, the reward designer's goal is to incentivize an agent to avoid taking actions that are inadmissible in certain states, while ensuring that the agent performs well under the original reward function. As in reward poisoning attacks, the reward designer cares about the cost of modifying the original reward function. Interestingly and perhaps surprisingly, the novel reward design problem leads to a considerably different characterization results, as we show in this paper. We call this problem {\em admissible policy teaching} since the reward designer aims to maximize the agent's utility w.r.t. the original reward function, but under constraints on admissibility of state-action pairs. 
These constraints could encode additional knowledge that the reward designer has about the safety and security of executing certain actions. Our key contributions are:
\begin{itemize}
\item We develop a novel optimization framework based on Markov Decision Processes (MDPs) for finding a minimal reward modifications which ensure that an optimal agent adopts a well-performing admissible policy.
\item We show that finding an optimal solution to the reward design problem for admissible policy teaching is computationally challenging, in particular, that it is NP-hard to find a solution that approximates the optimal solution.
\item We provide characterization results for a surrogate problem whose optimal solution approximates the optimal solution to our reward design problem. For a specific class of MDPs, which we call \emph{special} MDPs, we present an exact characterization of the optimal solution. For \emph{general} MDPs, we provide bounds on the optimal solution value.
\item We design a local search algorithm for solving the surrogate problem, and demonstrate its efficacy using simulation-based experiments.
\end{itemize}

\subsection{Related Work}

\paragraph{Reward design.} 
A considerable number of works is related to designing reward functions that improve an agent's learning procedures. The optimal reward problem focuses on finding a reward function that can support computationally bounded agents \cite{sorg2010internal,sorg2010reward}.
Reward shaping \cite{mataric1994reward,dorigo1994robot}, and in particular, potential-based reward shaping \cite{ng1999policy} and its extensions (e.g., \cite{devlin2012dynamic,grzes2017reward,zou2019reward}) densify the reward function so that the agent receives more immediate signals about its performance, and hence learns faster.
As already mentioned, similar reward transformations, such as reward penalties or bonuses, are often used for reducing uncertainty or for ensuring safety constraints \cite{bellemare2016unifying, yu2020mopo, tessler2018reward}.
Related to safe and secure RL are works that study reward specification problem and negative side affects of reward misspecification \cite{amodei2016concrete,hadfield2017inverse}. 
The key difference between the above papers and our work is that we
focus on policy teaching rather than on an agent's learning procedures.

\paragraph{Teaching and steering.} As already explained, our work relates to prior work on policy teaching and targeted reward poisoning attacks~\cite{DBLP:conf/aaai/ZhangP08,zhang2009policy,ma2019policy,DBLP:conf/gamesec/HuangZ19a,rakhsha2020policy,rakhsha2020policy-jmlr,xuezhou2020adaptive,sun2020vulnerability}. Another line of related work is on designing steering strategies. For example, \cite{nikolaidis2017game,dimitrakakis2017multi,radanovic2019learning} consider two-agent collaborative settings where a dominant agent can exert influence on the other agent, and the goal is to design a policy for the dominant agent that accounts for the imperfections of the other agent. Similar support mechanisms based on providing advice or helpful interventions have been studied by \cite{amir2016interactive,omidshafiei2019learning, tylkin2021learning}. In contrast, we consider steering strategies based on reward design. When viewed as a framework for supporting an agent's decision making, this paper is also related to works on designing agents that are robust against adversaries \cite{pinto2017robust, fischer2019online, lykouris2019corruption, zhang2020robust, zhang2021robust, zhang2021robustICML,banihashem2021defense}. These works focus on agent design and are complementary to our work on reward design.

\section{Problem Setup}\label{sec.setting}

In this section we formally describe our problem setup. 

\subsection{Environment} 

The environment in our setting is described by a discrete-time Markov Decision Process (MDP) $M = (S, A, R, P, \gamma, \sigma)$, where $S$ and $A$ are the discrete finite state and action spaces respectively\footnote{This setting can encode the case where states have different number of actions (e.g., by adding actions to the states with smaller number of actions and setting the reward of newly added state-action pairs to $-\infty$).},
$R: S \times A \rightarrow \mathds R$ is a reward function, $P:S \times A \times S \rightarrow [0, 1]$ specifies the transition dynamics with $P(s, a, s')$ denoting the probability of transitioning to state $s'$ from state $s$ by taking action $a$, $\gamma \in [0, 1)$ is the discounted factor, and $\sigma$ is the initial state distribution. A deterministic policy $\pi$ is a mapping from states to actions, i.e., $\pi:S \rightarrow A$, and the set of all deterministic policies is denoted by $\PiDet$.

Next, we define standard quantities in this MDP, which will be important for our analysis. First, we
define the
{\em score} of a policy $\pi$
as the
total expected return scaled by $1-\gamma$,
i.e., 
$\score^{\pi, R} = \expct{ 
      (1-\gamma)\sum_{t=1}^{\infty} \gamma^{t-1} R(s_t, a_t) | \pi, \sigma}$.
Here states $s_t$ and actions $a_t$ are obtained by executing policy $\pi$ starting from state $s_1$, which is sampled from the initial state distribution $\sigma$. 
Score $\score^{\pi, R}$ can be obtained through state occupancy measure $\occstate^{\pi}$ by using 
the equation $\score^{\pi, R} = \sum_{s} \occstate^{\pi}(s) \cdot R(s, \pi(s))$.
Here, $\occstate^{\pi}$ is the expected discounted state visitation frequency when $\pi$ is executed, given by $\occstate^{\pi}(s) = \expct{ (1-\gamma)\sum_{t=1}^{\infty} \gamma^{t-1}  \ind{s_t = s} | \pi, \sigma}$.
Note that $\occstate^{\pi}(s)$ can be equal to $0$ for some states. Furthermore, we define $\occstate^{\pi}_{\textnormal{min}} = \min_{s|\occstate^{\pi}(s)>0} \occstate^{\pi}(s)$---the minimum always exists due to the finite state and action spaces.
Similarly, we denote by $\occstate_{\textnormal{min}}$ the minimal value of $\occstate^{\pi}_{\textnormal{min}}$  across all deterministic policies, i.e., $\occstate_{\textnormal{min}} = \min_{\pi \in \PiDet} \occstate^{\pi}_{\textnormal{min}}$.

We define the state-action value function, or $Q$ values as $Q^{\pi, R}(s, a) = \expct{\sum_{t=1}^{\infty} \gamma^{t-1} R(s_t, a_t) | \pi, s_1 = s, a_1 = a}$,
where states $s_t$ and actions $a_t$ are obtained by executing policy $\pi$ starting from state $s_1 = s$ in which action $a_1 = a$ is taken. State-action values $Q(s,a)$ relate to score $\rho$ via the equation $\score^{\pi, R} = \expctu{s \sim \sigma}{(1-\gamma) \cdot Q^{\pi, R}(s, \pi(s))}$,
where the expectation is taken over possible starting states.

\subsection{Agent and Reward Functions}

We consider a reinforcement learning agent whose behavior is specified by a deterministic policy $\pi$, derived offline using an MDP model given to the agent. We assume that the agent selects a deterministic policy that (approximately) maximizes the agent's expected utility under the MDP model specified by a reward designer. In other words, given an access to the MDP $M=(S, A, R, P, \gamma, \sigma)$, the agent chooses a policy from the set $\optEpsDet(R) = \{\pi\in \PiDet:  \score^{\pi, R} > \max_{\pi' \in \PiDet} \score^{\pi', R} - \epsilon \}$,
 where $\epsilon$ is a strictly positive number. It is important to note that the MDP model given to the agent might be different from the true MDP model of the environment. In this paper, we focus on the case when only the reward functions of these two MDPs (possibly) differ. 

Therefore, in our notation, we differentiate the reward function that the reward designer specifies to the agent, denoting it by $\widehat R$, from the original reward function of the environment, denoting it by $\overline{R}$. A generic reward function is denoted by $R$, and is often used as a variable in our optimization problems. 
We also denote by $\optpi$ a deterministic policy 
that is optimal with respect to $\overline{R}$ for any starting state, i.e.,
$Q^{\optpi, \overline{R}}(s, \optpi(s))
= \max_{\pi\in \PiDet} Q^{\pi, \overline{R}}(s, \pi(s))$ for all states $s$.

\subsection{Reward Designer and Problem Formulation}

We take the perspective of a reward designer whose goal is 
to design a reward function, $\widehat R$, such that the agent adopts a policy  from a class of admissible deterministic policies $\allowedpi \subseteq \PiDet$. Ideally, the new reward function $\widehat R$ would be close to the original reward function $\overline{R}$, thus reducing  the cost of the reward design. At the same time, the adopted policy should perform well under the original reward function $\overline{R}$, since this is the performance that the reward designer wants to optimize and represents the objective of the underlying task.
As considered in related works~\cite{ma2019policy,rakhsha2020policy,rakhsha2020policy-jmlr}, we measure the {\em cost} of the reward design by $L_2$ distance between the designed $\widehat R$ and the original reward function $\overline R$. Moreover, we measure the agent's performance with the score $\score^{\pi, \overline R}$, where the agent's policy $\pi$ is obtained w.r.t. the designed reward function $\widehat R$.  Given the model of the agent discussed in the previous subsection, and assuming the worst-case scenario (w.r.t. the tie-breaking in the policy selection), the following optimization problem specifies the reward design problem for \emph{admissible policy teaching} (APT):
\begin{align*}
\label{prob.reward_design}
\tag{P1-\textsc{APT}}
\notag
&\quad
\min_{R}\max_{\pi}
\norm{\overline{R}-R}_{2} -\lambda\cdot \score^{\pi, \overline{R}}
\\
&\quad \mbox{ s.t. }  \quad
\optEpsDet (R) \subseteq \allowedpi
\\&\quad \quad \quad\quad
\pi \in\optEpsDet(R),
\end{align*}
where $\lambda \ge 0$ is a trade-off factor. 
While in this problem formulation $\allowedpi$ can be any set of policies, we will primarily focus on admissible policies that can be described by a set of admissible actions per state. More concretely, we define sets of admissible actions per state denoted by $A^{\adm}_s \subseteq A$. Given these sets $A^{\adm}_s$, the set of admissible policies will be identified as $\allowedpi = \{ \pi | \pi(s) \in A^{\adm}_s \lor \occstate^{\pi}(s) = 0 \mbox{ for } s \in S \}$.\footnote{In practice, we can instead put the constraint that $\occstate^{\pi}(s)$ is greater than or equal to some threshold. For small enough threshold, our characterization results qualitatively remain the same. } 
In other words, these policies must take admissible actions for states that have non-zero state occupancy measure.

We conclude this section by validating the soundness of the optimization problem \eqref{prob.reward_design}. The following proposition shows that the optimal solution to the optimization problem \eqref{prob.reward_design} is always attainable.
\begin{proposition}\label{prop.solvability.reward_design}
If $\allowedpi$ is not empty, there always exists an optimal solution to the optimization problem \eqref{prob.reward_design}.
\end{proposition}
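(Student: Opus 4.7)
The plan is to recast \eqref{prob.reward_design} as the minimization of a lower-semicontinuous function over a closed, non-empty feasible set with bounded sublevel sets, then conclude by the extreme value theorem. To see that $\mathcal{F} := \{R : \optEpsDet(R) \subseteq \allowedpi\}$ is non-empty, I would fix any $\pi_0 \in \allowedpi$ and set $R(s,a) = 0$ if $a = \pi_0(s)$ and $R(s,a) = -K$ otherwise, with $K > \epsilon / \occstate_{\textnormal{min}}$. Then $\score^{\pi_0, R} = 0$, while any $\pi$ disagreeing with $\pi_0$ at some state $s$ with $\occstate^{\pi}(s) > 0$ satisfies $\score^{\pi, R} \le -K \occstate_{\textnormal{min}} < -\epsilon$, and is therefore excluded from $\optEpsDet(R)$; the remaining members of $\optEpsDet(R)$ agree with $\pi_0$ on all states they visit and thus lie in $\allowedpi$ by its definition.

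Next, I would check closedness of $\mathcal{F}$ and lower semicontinuity of the objective. The condition $\optEpsDet(R) \subseteq \allowedpi$ is equivalent to the weak inequality $\score^{\pi', R} \le \max_{\pi''} \score^{\pi'', R} - \epsilon$ holding for every $\pi' \in \PiDet \setminus \allowedpi$, and each such constraint defines a closed set because $\score^{\pi, R}$ is linear (hence continuous) in $R$. I would then rewrite the objective as $f(R) := \norm{\overline R - R}_{2} - \lambda \phi(R)$, where $\phi(R) := \min_{\pi \in \optEpsDet(R)} \score^{\pi, \overline R}$ is well-defined since the score-maximizing deterministic policy always lies in $\optEpsDet(R)$ (using $\epsilon > 0$). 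For upper semicontinuity of $\phi$, take $R_n \to R^*$ and any $\pi \in \optEpsDet(R^*)$; the strict inequality $\score^{\pi, R^*} > \max_{\pi'} \score^{\pi', R^*} - \epsilon$ is preserved for all large $n$ by continuity, so $\pi \in \optEpsDet(R_n)$ eventually. Hence $\optEpsDet(R^*) \subseteq \optEpsDet(R_n)$ for large $n$, which yields $\phi(R_n) \le \phi(R^*)$ and therefore $\limsup_n \phi(R_n) \le \phi(R^*)$. Consequently $f$ is lower semicontinuous on $\mathcal{F}$.

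To conclude via compactness, let $R_0$ be the feasible point above and set $v_0 := f(R_0)$. Because $\phi(R) \le \max_{\pi \in \PiDet} \score^{\pi, \overline R}$, any feasible $R$ with $f(R) \le v_0$ satisfies $\norm{\overline R - R}_{2} \le v_0 + \lambda \max_{\pi \in \PiDet} \score^{\pi, \overline R}$, a fixed constant. The sublevel set $\{R \in \mathcal{F} : f(R) \le v_0\}$ is therefore bounded, and closed by lower semicontinuity of $f$ on the closed set $\mathcal{F}$, hence compact. The extreme value theorem for lower semicontinuous functions on compact sets then delivers a minimizer of $f$ on $\mathcal{F}$, which is an optimal solution to \eqref{prob.reward_design}.

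The main obstacle is the upper semicontinuity of $\phi$: since $\optEpsDet$ is defined via a strict inequality, it can shrink when passing to a limit (any $\pi$ sitting on the $\epsilon$-boundary at $R^*$ may be lost), and a priori this could make the worst-case score discontinuous. Fortunately, the shrinkage goes in our favor, since the minimum over the smaller set $\optEpsDet(R^*)$ is at least as large as the minimum over the larger set $\optEpsDet(R_n)$ for large $n$, which is exactly the direction of semicontinuity we need.
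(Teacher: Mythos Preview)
Your proposal is correct and follows essentially the same strategy as the paper: nonemptiness and closedness of the feasible set, lower semicontinuity of the objective (exploiting that the strict inequality defining $\optEpsDet$ is stable under small perturbations of $R$), and restriction to a compact sublevel set followed by the extreme value theorem. One small technical point on your feasibility step: concluding that a policy $\pi$ agreeing with $\pi_0$ on $\posStates^\pi$ lies in $\allowedpi$ is not literally ``by its definition,'' since $\pi_0 \in \allowedpi$ only guarantees $\pi_0(s) \in A_s^{\adm}$ on $\posStates^{\pi_0}$---you need the easy auxiliary fact that such a $\pi$ has $\occstate^\pi = \occstate^{\pi_0}$ (the paper records this as a separate lemma), and the paper's own feasibility construction sidesteps the issue by directly showing every $\tilde\pi \notin \allowedpi$ is excluded rather than characterizing which policies remain.
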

In the following sections, we analyze computational aspects of this optimization problem, showing that it is intractable in general and providing characterization results that bound the value of solutions. 
The proofs of our results are provided in the full version of the paper.

\section{Computational Challenges}\label{sec.computational_challenges}

We start by analyzing computational challenges behind the optimization problem \eqref{prob.reward_design}. 
To provide some intuition, let us first analyze a special case of \eqref{prob.reward_design} where $\lambda = 0$, which reduces to the following optimization problem:
\begin{align*}
    \label{prob.reward_design_simple}
    \tag{P2-$\textsc{APT}_{\lambda=0}$}
    \notag
    &\quad
    \min_{R}
    \norm{\overline{R}-R}_{2}\\
    &\quad \mbox{ s.t. }  \quad
    \optEpsDet (R) \subseteq \allowedpi.
\end{align*}
This special case of the optimization problem with $\lambda=0$ is a generalization of the reward poisoning attack from \cite{rakhsha2020policy,rakhsha2020policy-jmlr}. In fact, the reward poisoning attack of \cite{rakhsha2020policy-jmlr} can be written as
\begin{align*}
    \label{prob.reward_poisoning_attack}
    \tag{P3-\textsc{ATK}}
    \notag
    &\quad
    \min_{R}
    \norm{\overline{R}-R}_{2}\\
    &\quad \mbox{ s.t. }  \quad
    \optEpsDet (R) \subseteq \{ \pi | \pi(s) = \targetpi(s) \mbox { if } \mu^{\targetpi}(s) > 0\},
\end{align*}
\looseness-1where $\targetpi$ is the target policy that the attacker wants to force. However, while \eqref{prob.reward_poisoning_attack} is tractable in the setting of \citep{rakhsha2020policy-jmlr},
the same is not true for \eqref{prob.reward_design_simple}; see Remark~\ref{rm.reward_poisoning}. Intuitively, the difficulty of solving the optimization problem \eqref{prob.reward_design_simple} lies in the fact that the policy set $\allowedpi$ (in the constraints of \eqref{prob.reward_design_simple}) can contain exponentially many policies. Since the optimization problem \eqref{prob.reward_design_simple} is a specific instance of the optimization problem \eqref{prob.reward_design}, the latter problem is also computationally intractable. We formalize this result in the following theorem.
\begin{theorem}\label{thm.copmutational_hardness}
For any constant $p \in (0,1)$, it is NP-hard to distinguish between instances of \eqref{prob.reward_design_simple} that have optimal values at most $\xi$ and instances that have optimal values larger than $\xi \cdot \sqrt{(|S| \cdot |A|)^{1-p}}$.
The result holds even when the parameters $\epsilon$ and $\gamma$ in \eqref{prob.reward_design_simple} are fixed to arbitrary values subject to $\epsilon > 0$ and $\gamma \in (0,1)$.
\end{theorem}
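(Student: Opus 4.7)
The plan is to reduce from a problem with polynomial inapproximability, for example Maximum Independent Set (MIS), which is NP-hard to approximate within $n^{1-\epsilon}$ for any $\epsilon>0$ (Håstad). Because our objective is an $L_2$ norm, a polynomial gap in the underlying combinatorial problem appears under a square root in the modification cost, giving the $\sqrt{(|S|\cdot|A|)^{1-p}}$ shape of the theorem; tuning $\epsilon$ in Håstad's theorem against the polynomial blow-up of the reduction handles arbitrary $p\in(0,1)$.

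Given an instance $G=(V,E)$, I would build an MDP $M_G$ of size $\text{poly}(|V|)$ with a ``one-gadget-per-vertex'' structure: the initial distribution puts uniform mass on per-vertex states $s_v$; at each $s_v$ there is an admissible action $a_v^{\text{adm}}$ and an inadmissible action $a_v^{\text{in}}$, with $\overline R$ chosen so that before any modification the inadmissible action is preferred by a fixed margin. A small gadget per edge $(u,v)$ then couples the $Q$-values at $s_u$ and $s_v$, preventing ``savings sharing'' across edges. The intended equivalence is that any feasible reward modification (one for which every $\epsilon$-optimal deterministic policy lies in $\allowedpi$) corresponds to applying a per-vertex reduction of roughly uniform magnitude $c_0=\Theta(\epsilon/\occstate_{\textnormal{min}})$ at some set $C\subseteq V$, where $C$ must form a vertex cover of $G$; conversely, every vertex cover yields a feasible modification with squared $L_2$-cost $\Theta(c_0^2\cdot|C|)$. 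Hence the optimum scales as $\Theta(\sqrt{|V|-\alpha(G)})$, and the polynomial inapproximability of MIS transfers to our setting.

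The main obstacle will be designing the edge gadget so that the squared cost is genuinely governed by the vertex-cover constraint: one must prevent ``aggregated'' modifications in which a single sub-threshold reduction at one state-action pair vicariously disables several inadmissible actions elsewhere, while simultaneously ensuring that each inadmissible action is individually disableable at the intended magnitude by a local modification. This hinges on careful control of the occupancy measures $\occstate^\pi$ and of $Q$-values through the transition dynamics. The prescribed values of $\epsilon$ and $\gamma$ in the theorem are accommodated by scaling the reward magnitudes inside the gadget rather than the MDP parameters themselves, and an arbitrary $p\in(0,1)$ is handled by composing the basic reduction with standard MIS amplification (e.g., graph powering), so that the polynomial degree $c$ of the MDP blow-up can be made to satisfy $c(1-\epsilon)\le 1-p$ for a valid choice of $\epsilon$.
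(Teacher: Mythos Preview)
Your reduction has a genuine flaw at the point where you invoke H\aa stad. You argue that feasible modifications correspond to vertex covers $C$ with squared cost $\Theta(c_0^2\cdot|C|)$, so that the optimum is $\Theta\big(\sqrt{|V|-\alpha(G)}\big)=\Theta\big(\sqrt{\tau(G)}\big)$, where $\tau(G)$ is the minimum vertex cover size. But the polynomial inapproximability of MIS does \emph{not} transfer to $\tau(G)$ through the complement $\tau(G)=|V|-\alpha(G)$. The hard gap instances for MIS have $\alpha(G)$ either at least $n^{1-\varepsilon}$ or at most $n^{\varepsilon}$; after complementation, $\tau(G)$ is either at most $n-n^{1-\varepsilon}$ or at least $n-n^{\varepsilon}$, and the ratio of these two quantities is $1+o(1)$, not polynomial. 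Indeed, minimum vertex cover admits a trivial factor-$2$ approximation, so any gap problem for $\tau(G)$ with ratio exceeding $2$ is in P. Graph powering or other MIS amplification will not help: it amplifies $\alpha(G)$ multiplicatively but still leaves $\tau(G)$ close to $n$ on both sides of the gap. Since your gadget ties the $L_2$ cost to a covering-type quantity, you cannot hope to extract a polynomial gap this way.

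The paper takes a different route that avoids this trap. It reduces from the \emph{decision} problem \textsc{Exact-3-Set-Cover} rather than from an optimization problem with strong inapproximability, and it manufactures the polynomial gap by replication: each of the element-states $s_1,\dots,s_{3k}$ and the auxiliary state $s_*$ is cloned $N$ times, with $N$ chosen polynomially large in $k,l$ and in a parameter $\varphi$ calibrated so that $|S|\cdot|A|\le\varphi$. On a yes-instance, one modifies only the $k$ shared rewards $\omega_j$ corresponding to the exact cover, giving cost $\sqrt{k}=\xi$ independently of $N$. On a no-instance, a case analysis forces every feasible $R$ to incur at least a fixed constant amount of squared cost \emph{per copy}, so the total cost is $\Omega(\sqrt{N})$, which by the choice of $N$ exceeds $\xi\cdot\sqrt{(|S|\cdot|A|)^{1-p}}$. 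The prescribed $\epsilon$ and $\gamma$ are absorbed by scaling the gadget rewards, as you anticipated. The key structural idea you are missing is this asymmetry between ``shared'' rewards (the $\omega_j$, not replicated) and ``per-copy'' rewards (the $x_{ij},y,z_i,z_*,u,r$, each appearing $N$ times): a yes-instance lets you pay once on the shared side, while a no-instance forces you onto the replicated side.
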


The proof of the theorem
is based on 
a classical NP-complete problem called {\sc Exact-3-Set-Cover} (X3C) \cite{karp1972reducibility,garey1979computers}.
The result implies that it is unlikely (assuming that P = NP is unlikely) that there exists a polynomial-time algorithm that always outputs an approximate solution whose cost is at most
$\sqrt{(|S| \cdot |A|)^{1-p}}$
times that of the optimal solution for some $p > 0$.

We proceed by introducing a surrogate problem \eqref{prob.reward_design.approx}, which is more amenable to optimization techniques and analysis since the focus is put on optimizing over policies rather than reward functions. In particular, the optimization problem takes the following form:
\begin{align*}
\label{prob.reward_design.approx}
\tag{P4-\textsc{APT}}
\notag
&\quad
\min_{\pi \in \allowedpi, R} \norm{\overline{R}-R}_{2} - \lambda\cdot \score^{\pi, \overline{R}}
\\&\quad\quad\mbox{ s.t. } 
\quad\quad \optEpsDet (R) \subseteq \{ \pi' | \pi'(s) = \pi(s) \mbox { if } \mu^{\pi}(s) > 0\}.
\end{align*}
Note that \eqref{prob.reward_design.approx} differs from  \eqref{prob.reward_poisoning_attack} in that it optimizes over all admissible policies, and it includes performance considerations in its objective. 
The result in Theorem~\ref{thm.copmutational_hardness} extends to this case as well, so the main computational challenge remains the same. 

The following proposition
shows that the solution to the surrogate problem \eqref{prob.reward_design.approx} is an approximate solution to the optimization problem \eqref{prob.reward_design}, with an additive bound. More precisely: 
\begin{proposition}\label{prop.reward_design.approx}
	Let $\RsolA$ and $\RsolB$ be the optimal solutions to
	\eqref{prob.reward_design} and \eqref{prob.reward_design.approx} respectively and let $l(R)$ be a function that outputs the objective of the optimization problem
	\eqref{prob.reward_design}, i.e.,
	\begin{align}\label{eq.objective_opt_reward_design}
	l(R)= \max_{\pi\in \optEpsDet(R)}\norm{\overline{R} - R}_{2} - \lambda\score^{\pi, \overline{R}}.
	\end{align}
	Then $\RsolB$ satisfies the constraints of \eqref{prob.reward_design}, i.e, $\optEpsDet(\RsolB) \subseteq \allowedpi$, and
	\begin{align*}
	l(\RsolA) \le l(\RsolB) \le l(\RsolA) + \frac{\epsilon}{\occstate_{\min}}\cdot\sqrt{|S|\cdot|A|}.
	\end{align*}
\end{proposition}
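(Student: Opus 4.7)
My plan is to establish the three parts --- feasibility of $\RsolB$ for \eqref{prob.reward_design}, the inequality $l(\RsolA)\le l(\RsolB)$, and the additive upper bound --- all built on a single \emph{trajectory-invariance} lemma: whenever two deterministic policies $\pi,\pi'$ satisfy $\pi'(s)=\pi(s)$ for every $s$ with $\occstate^\pi(s)>0$, then $\occstate^{\pi'}=\occstate^\pi$ (so both share the same $\overline R$-score). I would prove this by induction on the time step, noting that any disagreement of $\pi'$ with $\pi$ sits on states of zero current marginal and therefore does not propagate. With this in hand, feasibility is immediate for an optimizer $(\pisolB,\RsolB)$ of \eqref{prob.reward_design.approx}: the constraint forces every $\pi\in\optEpsDet(\RsolB)$ to coincide with $\pisolB$ on $\{s:\occstate^{\pisolB}(s)>0\}$, and the lemma then gives $\occstate^\pi=\occstate^{\pisolB}$, so $\pi$ inherits $\pisolB$'s admissible actions on its own support and $\pi\in\allowedpi$. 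The inequality $l(\RsolA)\le l(\RsolB)$ now follows from the optimality of $\RsolA$ for \eqref{prob.reward_design}. As a by-product, trajectory-invariance also rewrites $l(\RsolB)=\norm{\overline R-\RsolB}_{2}-\lambda\score^{\pisolB,\overline R}$, since every $\pi\in\optEpsDet(\RsolB)$ shares $\pisolB$'s $\overline R$-score.

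For the upper bound I would construct a feasible pair for \eqref{prob.reward_design.approx} from $\RsolA$ with comparable objective. Let $\pi_\dagger\in\optEpsDet(\RsolA)$ attain the max in $l(\RsolA)=\norm{\overline R-\RsolA}_{2}-\lambda\score^{\pi_\dagger,\overline R}$; feasibility of $\RsolA$ for \eqref{prob.reward_design} gives $\pi_\dagger\in\allowedpi$. Define
\[
R_\dagger(s,a)=\RsolA(s,a)-c\cdot\ind{\occstate^{\pi_\dagger}(s)>0,\ a\ne\pi_\dagger(s)}
\]
for a suitable $c=\Theta(\epsilon/\occstate_{\min})$; this perturbs at most $|S|\cdot|A|$ entries by $c$, so $\norm{R_\dagger-\RsolA}_{2}\le \tfrac{\epsilon}{\occstate_{\min}}\sqrt{|S|\cdot|A|}$. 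A calculation based on the performance difference lemma, combined with the fact that $\pi_\dagger$'s own trajectories avoid the penalty, yields
\[
\score^{\pi_\dagger,R_\dagger}-\score^{\pi',R_\dagger}=\bigl(\score^{\pi_\dagger,\RsolA}-\score^{\pi',\RsolA}\bigr)+c\!\!\sum_{\substack{s:\occstate^{\pi_\dagger}(s)>0\\ \pi'(s)\ne\pi_\dagger(s)}}\!\occstate^{\pi'}(s).
\]
The first summand exceeds $-\epsilon$ because $\pi_\dagger\in\optEpsDet(\RsolA)$, and a corollary of trajectory-invariance shows the sum is at least $\occstate_{\min}$ whenever $\pi'$ disagrees with $\pi_\dagger$ on $\{s:\occstate^{\pi_\dagger}(s)>0\}$: otherwise one could overwrite $\pi'$ on its unreached states so as to match $\pi_\dagger$ there without changing $\occstate^{\pi'}$, forcing $\occstate^{\pi'}=\occstate^{\pi_\dagger}$ and contradicting the disagreement. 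Calibrating $c$ so the right-hand side reaches $\epsilon$ makes $(\pi_\dagger,R_\dagger)$ feasible for \eqref{prob.reward_design.approx}, and the triangle inequality for $\norm{\cdot}_{2}$ then gives $l(\RsolB)\le l(\RsolA)+\tfrac{\epsilon}{\occstate_{\min}}\sqrt{|S|\cdot|A|}$.

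The main obstacle will be this calibration together with the ``disagreement must be witnessed at a visited state'' lemma: a reward penalty can bite on a competing policy $\pi'$ only at states $\pi'$ actually reaches, so without that lemma the cost of enforcing the strict P4-constraint on unreached support states of $\pi_\dagger$ could be unbounded. Trajectory-invariance is precisely what pins this down and produces the factor $\occstate_{\min}$ in the denominator of the final bound.
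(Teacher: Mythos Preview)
Your overall strategy---the trajectory-invariance lemma, the feasibility argument, and building a feasible pair for \eqref{prob.reward_design.approx} by subtracting a penalty $c$ from $\RsolA$ on off-policy actions---matches the paper's proof essentially step for step. The paper's Lemma~\ref{lm.same_occupancy} is exactly your invariance lemma (stated in the intersection form, which you effectively recover via the overwriting trick), and the paper defines the same $R'$ with $c=\epsilon/\occstate_{\min}$.

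There is, however, a calibration slip in your upper-bound construction. You take $\pi_\dagger$ to be the \emph{maximizer} in $l(\RsolA)$, so you only know $\score^{\pi_\dagger,\RsolA}-\score^{\pi',\RsolA}>-\epsilon$. Combined with the penalty term $\ge c\cdot\occstate_{\min}$, this gives $\score^{\pi_\dagger,R_\dagger}-\score^{\pi',R_\dagger}> c\cdot\occstate_{\min}-\epsilon$; to reach the required gap $\ge\epsilon$ you would need $c$ close to $2\epsilon/\occstate_{\min}$, and then $\norm{R_\dagger-\RsolA}_2$ overshoots the stated bound by a factor near~$2$. The paper avoids this by taking $\pi_1$ to be an \emph{optimal} (not merely $\epsilon$-optimal) policy under $\RsolA$; then the first summand is $\ge 0$, so $c=\epsilon/\occstate_{\min}$ suffices exactly. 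Since any optimal policy also lies in $\optEpsDet(\RsolA)$, the final comparison $\norm{\overline R-\RsolA}_2-\lambda\score^{\pi_1,\overline R}\le l(\RsolA)$ still goes through, and you do not need $\pi_\dagger$ to attain the max in $l(\RsolA)$ anywhere. Swap your $\pi_\dagger$ for an optimal policy under $\RsolA$ and your argument yields the exact constant.
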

Due to this result, in the following sections, we focus on the optimization problem \eqref{prob.reward_design.approx}, and provide characterization for it. Using Proposition \ref{prop.reward_design.approx} we can obtain analogous results for the optimization problem \eqref{prob.reward_design}.

\begin{remark}\label{rm.reward_poisoning}
The optimization problem \eqref{prob.reward_poisoning_attack} is a strictly more general version of the optimization problem studied in \cite{rakhsha2020policy-jmlr} since \eqref{prob.reward_poisoning_attack} does not require $\occstate^{\pi}(s) > 0$ for all $\pi$ and $s$. This fact also implies that the algorithmic approach presented in \cite{rakhsha2020policy-jmlr} is not applicable in our case. hm with provable guarantees.
We 
provide an efficient algorithm for finding an approximate solution to \eqref{prob.reward_poisoning_attack} with provable guarantees in the full version of the paper.
\end{remark}

\section{Characterization Results for Special MDPs}\label{sec.special_mdps}

In this section, we consider a family of MDPs where an agent's actions do not influence transition dynamics, or more precisely, all the actions influence transition probabilities in the same way. In other words, the transition probabilities satisfy $P(s, a, s') = P(s, a', s')$,
for all $s$, $a$, $a'$, $s'$. 
We call this family of MDPs \emph{special} MDPs, in contrast to \emph{general} MDPs that are studied in the next section. Since an agent's actions do not influence the future, the agent can reason myopically when  deciding on its policy. Therefore, the reward designer can also treat each state separately when reasoning about the cost of the reward design.
Importantly, the hardness result from the previous section does not apply for this instance of our setting, so we can efficiently solve the optimization problems \eqref{prob.reward_design} and \eqref{prob.reward_design.approx}.

\subsection{Forcing Myopic Policies}

We first analyze the cost of forcing a target policy $\targetpi$ in special MDPs. The following lemma plays a critical role in our analysis. 
\begin{lemma}\label{lm.special_mdp.cost_of_poisoning}
Consider a {\em special} MDP with reward function $\overline{R}$, and let $\optallowpi(s) = \argmax_{a \in \allowedpi}\overline{R}(s,a)$.
Then the cost of the optimal solution to the optimization problem \eqref{prob.reward_poisoning_attack} with $\targetpi = \optallowpi$ is less than or equal to the cost of the optimal solution to the optimization problem \eqref{prob.reward_poisoning_attack} with $\targetpi = \pi$ for any $\pi \in \allowedpi$.
\end{lemma}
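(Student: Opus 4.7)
The plan is to exploit the defining property of a \emph{special} MDP: since the transition kernel is independent of the action, every deterministic policy induces the same state occupancy measure $\occstate^\pi \equiv \occstate$. Consequently the score simplifies to $\score^{\pi,R} = \sum_s \occstate(s) R(s,\pi(s))$, and the set of visited states in the constraint of \eqref{prob.reward_poisoning_attack} is the same no matter which target policy $\targetpi$ is chosen.

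First I would turn the constraint $\optEpsDet(R) \subseteq \{\pi' \mid \pi'(s) = \targetpi(s) \text{ if } \occstate(s) > 0\}$ into a per-state gap condition. Under the simplified score, a deterministic $\pi'$ fails to lie in $\optEpsDet(R)$ iff $\sum_s \occstate(s)\bigl[\max_a R(s,a) - R(s,\pi'(s))\bigr] \geq \epsilon$. Testing one-state deviations from $\targetpi$ shows the set-inclusion constraint is equivalent to the family of inequalities $R(s,\targetpi(s)) - R(s,a) \geq \epsilon/\occstate(s)$ for every $s$ with $\occstate(s) > 0$ and every $a \neq \targetpi(s)$. At states with $\occstate(s) = 0$ there is no constraint, so the optimum leaves $R(s,\cdot) = \overline{R}(s,\cdot)$ at zero cost.

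The squared $L_2$-cost therefore decomposes as $\sum_{s:\occstate(s)>0} C_s(\targetpi(s))$, where $C_s(a^*) = \min \sum_a (R(a)-\overline{R}(s,a))^2$ subject to $R(a^*) - R(a) \geq \epsilon/\occstate(s)$ for all $a \neq a^*$. The lemma then reduces to the pointwise monotonicity claim
\[
\overline{R}(s,b) \geq \overline{R}(s,c) \;\Longrightarrow\; C_s(b) \leq C_s(c),
\]
since for any $\pi \in \allowedpi$ and any $s$ with $\occstate(s) > 0$, admissibility forces $\pi(s) \in A^{\adm}_s$, so $\overline{R}(s,\optallowpi(s)) \geq \overline{R}(s,\pi(s))$ by the definition of $\optallowpi$.

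The key step is the swap lemma, which I would prove by exhibiting an explicit feasible rearrangement. Let $R^*$ attain $C_s(c)$ and form $R'$ from $R^*$ by swapping the coordinates $R^*(b)$ and $R^*(c)$, leaving all others untouched. Feasibility for target $b$ holds because $R'(b) - R'(a) = R^*(c) - R^*(a) \geq \epsilon/\occstate(s)$ for $a \notin \{b,c\}$, and $R'(b) - R'(c) = R^*(c) - R^*(b) \geq \epsilon/\occstate(s)$ from the same family of constraints at $a = b$. A short expansion yields
\[
\sum_a (R'(a)-\overline{R}(s,a))^2 - \sum_a (R^*(a)-\overline{R}(s,a))^2 = 2\bigl(R^*(b) - R^*(c)\bigr)\bigl(\overline{R}(s,b) - \overline{R}(s,c)\bigr),
\]
which is nonpositive because the constraint of $R^*$ at $a = b$ forces $R^*(c) > R^*(b)$, while $\overline{R}(s,b) \geq \overline{R}(s,c)$ by assumption. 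Hence $C_s(b)$ is bounded above by the cost of $R'$, which in turn is bounded above by $C_s(c)$. The main obstacle is isolating this clean cross-product identity; once the per-state decomposition is in hand the rest is bookkeeping, and passing from squared $L_2$ to $L_2$ is immediate via monotonicity of the square root.
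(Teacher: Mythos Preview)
Your proof is correct and follows essentially the same route as the paper's: both reduce to a per-state problem (you by directly rewriting the $\optEpsDet$ constraint as $R(s,\targetpi(s)) - R(s,a) \ge \epsilon/\occstate(s)$, the paper by invoking its explicit characterization lemma for special MDPs), and both then prove the single-state monotonicity via the identical swap argument that lands on the cross-product $(\overline{R}(s,b)-\overline{R}(s,c))(R^*(c)-R^*(b)) \ge 0$. Your derivation of the decomposition is slightly more self-contained since it avoids computing the exact optimizer, but the key idea is the same.
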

In other words, Lemma \ref{lm.special_mdp.cost_of_poisoning} states that in special MDPs it is easier to force policies that are  myopically optimal (i.e., optimize w.r.t. the immediate reward) than any other policy in the admissible set $\allowedpi$. This property is important for the optimization problem \eqref{prob.reward_design.approx} since its objective includes the cost of forcing an admissible policy.

\subsection{Analysis of the Reward Design Problem}

We now turn to the reward design problem \eqref{prob.reward_design.approx} and provide characterization results for its optimal solution.
Before stating the
result,
we note that for special MDPs $\occstate^{\pi}(s)$ is independent of policy $\pi$, so we denote it by $\occstate(s)$. 
\begin{theorem}\label{thm.spec_mdp_attack_form}
Consider a special MDP with reward function $\overline{R}$. Define $\widehat R(s, a) = \overline R(s, a)$ for $\occstate(s) = 0$ and otherwise
\begin{align*}
    \widehat R(s, a) = \begin{cases}
     x_s + \frac{\epsilon}{\occstate(s)} \quad &\mbox{ if } a = \optallowpi(s)\\
     x_s \quad &\mbox{ if } a \ne \optallowpi(s) \land \overline{R}(s, a) \ge x_s \\
     \overline{R}(s, a) \quad &\mbox{ otherwise}
    \end{cases},
\end{align*}
where $x_s$ is the solution to the equation
\begin{align*}
     \sum_{a \ne \optallowpi(s)} \pos{\overline{R}(s, a) - x} = x - \overline{R}(s, \optallowpi(s)) + \frac{\epsilon}{\occstate(s)}.
\end{align*}
Then, $(\optallowpi, \widehat R)$
is an optimal solution to \eqref{prob.reward_design.approx}.
\end{theorem}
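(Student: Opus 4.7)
The plan is to exploit the state-separable structure of special MDPs and reduce \eqref{prob.reward_design.approx} to a family of independent scalar convex programs, one per positive-occupancy state. Because the transition kernel does not depend on the action, the Markov chain induced by any policy is the same, so $\occstate^{\pi}(s)$ is independent of $\pi$; write $\occstate(s) := \occstate^{\pi}(s)$. The score then decouples as $\score^{\pi,R} = \sum_{s}\occstate(s)\,R(s,\pi(s))$, and $\pi'\in\optEpsDet(R)$ if and only if $\sum_{s}\occstate(s)\bigl(\max_{a}R(s,a) - R(s,\pi'(s))\bigr) < \epsilon$.

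First I would resolve the outer minimization over $\pi\in\allowedpi$. The $-\lambda\,\score^{\pi,\overline{R}}$ term decomposes as $-\lambda\sum_s\occstate(s)\overline{R}(s,\pi(s))$, which is minimized over admissible policies by choosing the admissible action with the largest $\overline{R}(s,\cdot)$ at every positive-occupancy state, i.e., exactly $\optallowpi$. By Lemma~\ref{lm.special_mdp.cost_of_poisoning}, the same $\optallowpi$ also minimizes the cost of forcing an admissible policy. Since both additive terms of the objective are minimized by the same policy, the outer minimization selects $\pi = \optallowpi$.

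It remains to find the cheapest $R$ that forces $\optallowpi$. Using the $\epsilon$-optimality characterization together with a single-state deviation argument, the global constraint $\optEpsDet(R)\subseteq\{\pi' : \pi'(s) = \optallowpi(s)\text{ if }\occstate(s)>0\}$ is equivalent to the per-state gap conditions $R(s,\optallowpi(s)) - R(s,a) \ge \epsilon/\occstate(s)$ for every positive-occupancy $s$ and every $a\ne\optallowpi(s)$: the forward direction follows because multi-state deviations can only increase the score gap, and the converse by constructing a violating $\epsilon$-optimal $\pi'$ that deviates at a single state. The squared cost $\|\overline{R}-R\|_{2}^{2}$ thus decouples across states, and for states with $\occstate(s)=0$ the unique optimum is $R(s,\cdot) = \overline{R}(s,\cdot)$.

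For each state $s$ with $\occstate(s)>0$ I would solve the per-state program by KKT. Writing $x_s := R(s,\optallowpi(s)) - \epsilon/\occstate(s)$, any action $a\ne\optallowpi(s)$ with $\overline{R}(s,a) < x_s$ has slack constraint and is left unperturbed, while any action with $\overline{R}(s,a) \ge x_s$ has a binding constraint and is pushed down to $x_s$; this recovers the piecewise definition of $\widehat R$ in the theorem. Setting the derivative in $x_s$ to zero then yields exactly the balance equation claimed. The main subtlety, and the step I expect to require the most care, is checking self-consistency of this piecewise structure: I would verify that the left-hand side $\sum_{a\ne\optallowpi(s)}\pos{\overline{R}(s,a)-x}$ is continuous, piecewise linear, and non-increasing in $x$, while the right-hand side is strictly increasing in $x$, so the equation admits a unique root $x_s$, and that the active set $\{a:\overline{R}(s,a)\ge x_s\}$ used in the KKT step agrees with the one implicit in the balance equation.
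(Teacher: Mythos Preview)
Your proposal is correct and follows the same high-level decomposition as the paper: first show that the outer minimization over $\pi\in\allowedpi$ is attained at $\optallowpi$ (because in a special MDP $\optallowpi$ simultaneously minimizes the forcing cost, by Lemma~\ref{lm.special_mdp.cost_of_poisoning}, and maximizes $\score^{\pi,\overline R}$), and then characterize the cheapest $R$ that forces $\optallowpi$ as a family of independent per-state problems.

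The one place where your argument and the paper's differ is in how the per-state characterization of $\widehat R$ is obtained. You solve the per-state quadratic program by KKT: project each $\overline R(s,a)$ onto $(-\infty,x_s]$, substitute back, and set the derivative in $x_s$ to zero, which yields exactly the balance equation. The paper instead proves a separate lemma (Lemma~\ref{lm.special_mdp_attack_character}) valid for \emph{any} target $\targetpi$, and establishes optimality of the same $\widehat R^{\targetpi}$ by a direct algebraic comparison: writing $\|\overline R-R\|_2^2-\|\overline R-\widehat R^{\targetpi}\|_2^2 \ge 2\langle \widehat R^{\targetpi}-R,\,\overline R-\widehat R^{\targetpi}\rangle$ and showing the inner product is nonnegative for every feasible $R$ using the defining identity for $x_s$. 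Both routes are standard and equivalent here; yours is the textbook convex-analysis argument, while the paper's inner-product computation is more self-contained and also makes the uniqueness of $x_s$ explicit via the monotonicity argument you sketched at the end.

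One small point to tighten in your equivalence step: when you pass from the global constraint $\optEpsDet(R)\subseteq\{\pi':\pi'=\optallowpi\text{ on }\occstate>0\}$ to the per-state gaps, the converse direction (single-state deviation) implicitly uses that any feasible $R$ already makes $\optallowpi$ optimal, so that $\max_{\pi'}\score^{\pi',R}=\score^{\optallowpi,R}$. This is immediate (the optimal policy under $R$ lies in $\optEpsDet(R)$ and hence agrees with $\optallowpi$ on positive-occupancy states), but it is worth stating, since otherwise the single-state deviation need not be $\epsilon$-optimal.
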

Theorem \ref{thm.spec_mdp_attack_form} provides an interpretable  solution to \eqref{prob.reward_design.approx}: for each state-action pair $(s, a \ne \optallowpi(s))$ we reduce the corresponding reward $\overline{R}(s,a)$ if it exceeds a state dependent threshold. Likewise, we increase the rewards 
$\overline{R}(s,\optallowpi(s))$.

\section{Characterization Results for General MDPs}\label{sec.general_mdps}

\looseness-1In this section, we extend the characterization results from the previous section to general MDPs for which transition probabilities can depend on actions. 
In contrast to the previous section, the computational complexity result from Theorem~\ref{thm.copmutational_hardness} showcase the challenge of deriving characterization results for general MDPs that specify the form of an optimal solution. 
We instead focus on bounding the value of an optimal solution to \eqref{prob.reward_design.approx} relative to the score of an optimal policy $\optpi$. 
More specifically, we define the relative value $\relval$ as
\begin{align*}
    \relval = \underbrace{\norm{\overline{R} - \RsolB}_{2}}_{\text{cost}} + \lambda \cdot \underbrace{[\score^{\optpi, \overline{R}} - \score^{\pisolB, \overline{R}}]}_{\text{performance reduction}},
\end{align*}
where $(\pisolB, \RsolB)$ is an optimal solution to the optimization problem \eqref{prob.reward_design.approx}. Intuitively, $\relval$ expresses the optimal value of \eqref{prob.reward_design.approx} in terms of the cost of the reward design and the agent's performance reduction. 

The characterization results in this section provide bounds on $\relval$ and are obtained by analyzing two specific policies: an optimal admissible policy $\optallowpi \in \arg\max_{\pi \in \allowedpi} \score^{\pi, \overline{R}}$ that optimizes for performance $\score$, and a min-cost policy $\mincostallowpi$ that minimizes the cost of the reward design and is a solution to the optimization problem \eqref{prob.reward_design.approx} with $\lambda = 0$. 
As we show in the next two subsections, bounding the cost of forcing $\optallowpi$ and $\mincostallowpi$ can be used for deriving bounds on $\relval$. Next, we utilize the insights of the characterization results to devise a local search algorithm for solving the reward design problem, whose utility we showcase using experiments.

\subsection{Perspective 1: Optimal Admissible Policy}

Let us consider an optimal admissible policy $\optallowpi \in \arg\max_{\pi \in \allowedpi} \score^{\pi, \overline{R}}$. Following the approach presented in the previous section, we can design $\widehat{R}$ by (approximately) solving the optimization problem \eqref{prob.reward_poisoning_attack} (see Remark \ref{rm.reward_poisoning}) with the target policy $\targetpi = \optallowpi$. While this approach does not yield an optimal solution for general MDPs, the cost of its solution can be bounded by a quantity that depends on the gap between the scores of an optimal policy $\optpi$ and an optimal admissible policy $\optallowpi$.  

In particular, for any policy $\pi$ we can define the performance gap as $\scoregap =\score^{\optpi, \overline{R}} - \score^{\pi, \overline{R}}$. 
As we will show, 
the cost of forcing policy $\pi$ can be upper and lower bounded by terms that linearly depend on $\scoregap$. Consequently,
this means that one can also bound $\relval$ with terms that linearly depend on $\minscoregap = \min_{\pi} \scoregap$, which is nothing else but the 
performance gap
of
$\pi = \optallowpi$.
Formally, we obtain the following result.

\begin{theorem}\label{thm.general_mdp.charact.bounds.opt_const}
The relative value $\relval$ is bounded by
\begin{align*}
\alpha_{\score} \cdot \minscoregap \le \relval \le  \beta_{\score} \cdot \minscoregap + \frac{\epsilon}{\occstate_{\min}} \cdot \sqrt{|S| \cdot |A|},
\end{align*}
where $\alpha_{\rho} = \left (\lambda + \frac{1-\gamma}{2} \right )$ and $\beta_{\score} = \left (\lambda + \frac{1}{\occstate_{\min}} \right )$.
\end{theorem}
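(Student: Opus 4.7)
The plan is to prove the two inequalities separately, with $\optallowpi$ serving as the witness for the upper bound and $\pisolB$ providing the handle for the lower bound. In both directions the argument hinges on a cost-to-gap correspondence for the reward poisoning attack \eqref{prob.reward_poisoning_attack}: the $L_2$-cost of forcing a policy $\pi$ can be both upper and lower bounded by quantities linear in $\Delta_{\score}^{\pi}$, and we then instantiate this correspondence at $\pi = \optallowpi$ and $\pi = \pisolB$ respectively.

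\textbf{Upper bound.} I would exhibit a concrete feasible candidate $(\optallowpi, \widetilde R)$ for \eqref{prob.reward_design.approx} and use optimality of $(\pisolB, \RsolB)$ to write
\begin{align*}
\relval \le \norm{\overline R - \widetilde R}_2 + \lambda \cdot (\score^{\optpi,\overline R} - \score^{\optallowpi,\overline R}) = \norm{\overline R - \widetilde R}_2 + \lambda\cdot\minscoregap,
\end{align*}
using $\Delta_{\score}^{\optallowpi} = \minscoregap$ by the definition of $\optallowpi$. To control the remaining cost term, I would construct $\widetilde R$ by invoking the approximation algorithm for \eqref{prob.reward_poisoning_attack} mentioned in Remark~\ref{rm.reward_poisoning} with $\targetpi = \optallowpi$; this yields a forcing reward satisfying $\norm{\overline R - \widetilde R}_2 \le \tfrac{1}{\occstate_{\min}}\cdot\minscoregap + \tfrac{\epsilon}{\occstate_{\min}}\sqrt{|S|\cdot|A|}$. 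Substituting produces the claimed bound $\relval \le \beta_{\score}\cdot\minscoregap + \tfrac{\epsilon}{\occstate_{\min}}\sqrt{|S|\cdot|A|}$.

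\textbf{Lower bound.} I would start by introducing any deterministic policy $\pi^\dagger$ that is optimal under $\RsolB$; such a $\pi^\dagger$ exists and belongs to $\optEpsDet(\RsolB)$, so by feasibility of $(\pisolB, \RsolB)$ it must agree with $\pisolB$ on every state with $\occstate^{\pisolB}(s) > 0$. Since trajectories generated by $\pisolB$ stay inside its own support by definition, this agreement forces $\occstate^{\pi^\dagger} = \occstate^{\pisolB}$ and hence $\score^{\pi^\dagger, R} = \score^{\pisolB, R}$ for every reward $R$; in particular $\score^{\pisolB,\RsolB} \ge \score^{\optpi,\RsolB}$. Decomposing
\begin{align*}
\Delta_{\score}^{\pisolB} &= (\score^{\optpi,\overline R} - \score^{\optpi,\RsolB}) \\
&\quad + (\score^{\optpi,\RsolB} - \score^{\pisolB,\RsolB}) \\
&\quad + (\score^{\pisolB,\RsolB} - \score^{\pisolB,\overline R}),
\end{align*}
the middle summand is non-positive, and each outer summand is bounded via Cauchy--Schwarz applied to $\score^{\pi,R} = \sum_s \occstate^{\pi}(s) R(s,\pi(s))$ by a multiple of $\norm{\overline R - \RsolB}_2$ controlled by $\norm{\occstate^{\pi}}_2$. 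A uniform bound on $\norm{\occstate^{\pi}}_2$ then gives $\norm{\overline R - \RsolB}_2 \ge \tfrac{1-\gamma}{2}\cdot\Delta_{\score}^{\pisolB}$. Adding the $\lambda\cdot\Delta_{\score}^{\pisolB}$ contribution from the performance-gap term of $\relval$ and using $\pisolB \in \allowedpi$, so that $\Delta_{\score}^{\pisolB} \ge \minscoregap$, yields $\relval \ge \alpha_{\score}\cdot\minscoregap$.

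\textbf{Main obstacle.} The delicate part is the upper-bound construction of $\widetilde R$: it must simultaneously (i) make $\optallowpi$ strictly $\epsilon$-dominant so that $\optEpsDet(\widetilde R)$ contains only policies agreeing with $\optallowpi$ on the support of $\occstate^{\optallowpi}$, and (ii) pay $L_2$ cost linear in $\minscoregap$ with slope $1/\occstate_{\min}$. The $1/\occstate_{\min}$ factor is intrinsic: per-state reward shifts at $s$ influence any policy's score only through the weight $\occstate(s)$, so shifts of magnitude $\Theta(1/\occstate_{\min})$ are generally needed to flip orderings at support states with small occupancy, and the additive $\tfrac{\epsilon}{\occstate_{\min}}\sqrt{|S|\cdot|A|}$ slack accounts for the $\epsilon$ margin distributed across the $|S|\cdot|A|$ reward entries in $L_2$. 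The lower-bound direction is comparatively clean once $\pi^\dagger$ is introduced as the bridge between $\RsolB$-scores and $\overline R$-scores.
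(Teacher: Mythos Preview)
Your upper-bound argument mirrors the paper's: both instantiate $(\pi,\widetilde R)=(\optallowpi,\widehat R^{\optallowpi})$ as a feasible pair for \eqref{prob.reward_design.approx}, invoke the cost bound $\norm{\overline R-\widehat R^{\optallowpi}}_2\le \tfrac{1}{\occstate_{\min}}\minscoregap+\tfrac{\epsilon}{\occstate_{\min}}\sqrt{|S|\cdot|A|}$ for the attack problem (in the paper this is Lemma~\ref{lm.general_mdp.charact.bounds.opt_const}, proved via the constructive reward $R'$ of Lemma~\ref{lm.constructive_answer} together with the performance-difference identity), and then add the $\lambda\minscoregap$ term.

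Your lower bound is correct but takes a genuinely different route. The paper passes through $Q$-values: using the Lipschitz estimate $(1-\gamma)\norm{Q^{*,R_1}-Q^{*,R_2}}_\infty\le\norm{R_1-R_2}_\infty$ it shows $\norm{\overline R-\widehat R_2}_2\ge\tfrac{1-\gamma}{2}\Delta_Q^{\pisolB}$ (Corollary~\ref{cor.lower_bound}), and then the performance-difference identity $\Delta_\rho^{\pisolB}=\sum_s\occstate^{\pisolB}(s)\big(Q^{*,\overline R}(s,\optpi(s))-Q^{*,\overline R}(s,\pisolB(s))\big)\le\Delta_Q^{\pisolB}$ converts this to a bound in $\Delta_\rho^{\pisolB}$. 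Your three-term score decomposition with Cauchy--Schwarz bypasses $Q$-values entirely and is more elementary; since $\norm{\occstate^\pi}_2\le\norm{\occstate^\pi}_1=1$, your argument in fact yields the sharper constant $\tfrac12$ in place of $\tfrac{1-\gamma}{2}$ (so you are understating your own bound). The paper's $Q$-value detour is not wasted, however: exactly the same Corollary~\ref{cor.lower_bound} is reused for Theorem~\ref{thm.general_mdp.charact.bounds.qgreedy}, where the characterization is naturally in terms of $\minmaxgapqval$ rather than $\minscoregap$, and your score-based argument would not directly give that.
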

Note that the bounds in the theorem can be efficiently computed from the MDP parameters. Moreover, the reward design approach based on forcing $\optallowpi$ yields a solution to \eqref{prob.reward_design.approx} whose value (relative to the score of $\optpi$) satisfies the bounds in Theorem \ref{thm.general_mdp.charact.bounds.opt_const}. We use this approach as a baseline.

\subsection{Perspective 2: Min-Cost Admissible Policy}

We now take a different perspective, and compare $\relval$ to the cost of the reward design obtained by forcing the min-cost policy $\mincostallowpi$.
Ideally, we would relate $\relval$ to the the smallest cost that the reward designer can achieve. However, this cost is not efficiently computable (due to Theorem \ref{thm.copmutational_hardness}), making such a bound uninformative.

Instead, we consider $Q$ values: as \citet{ma2019policy} showed, the cost of forcing a policy can be upper and lower bounded by a quantity that depends on $Q$ values. We introduce a similar quantity, denoted by   $\minmaxgapqval$ and defined as 
\begin{align*}
&\minmaxgapqval = 
 	\min_{\pi
 	\in \allowedpi
 	}\max_{s \in S_{\textnormal{pos}}^{\pi}}
 	\big(
 	Q^{\optpi, \overline{R}}(s, \optpi(s)) - 
 	Q^{\optpi, \overline{R}}(s, \pi(s))
 	\big),
\end{align*}
where $S_{\textnormal{pos}}^{\pi} = \{s| \occstate^{\pi}(s) > 0\}$ contains the set of states that policy $\pi$ visits with strictly positive probability.
In the full version of the paper,
we
present an algorithm 
called \qgreedy~
that efficiently computes
$\minmaxgapqval$.
The \qgreedy~algorithm also outputs a policy $\piqg$ that solves the corresponding min-max optimization problem. By approximately solving the optimization problem \eqref{prob.reward_poisoning_attack} with $\targetpi = \piqg$, we can obtain reward function $\widehat{R}$ as a solution to the reward design problem. We use this approach as a baseline in our experiments, and also for deriving the bounds on $\relval$ relative to $\minmaxgapqval$ provided in the following theorem.

\begin{theorem}\label{thm.general_mdp.charact.bounds.qgreedy}
The relative value $\relval$ is bounded by
\begin{align*}
&\alpha_Q \cdot \minmaxgapqval \le 
    \Phi
    \le 
    \beta_Q \cdot \minmaxgapqval
    + \frac{\epsilon}{\occstate_{\min}}\sqrt{|S| \cdot |A|},
\end{align*}
where $\alpha_Q = \left (\lambda\cdot \occstate_{\min} + \frac{1-\gamma}{2}\right )$ and $\beta_Q = \left( \lambda + \sqrt{|S|} \right)$.
\end{theorem}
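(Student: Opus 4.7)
I would mirror the two-sided argument of Theorem~\ref{thm.general_mdp.charact.bounds.opt_const}, substituting the \qgreedy~policy $\piqg$ for $\optallowpi$ and the $Q$-gap measure $\minmaxgapqval$ for $\minscoregap$. The upper bound is witnessed by an explicit feasible solution, while the lower bound is obtained by analyzing the structure that any optimal $(\pisolB, \RsolB)$ must have.

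For the upper bound I would set $\pi = \piqg$ and let $\widehat R$ be the output of the approximate solver of \eqref{prob.reward_poisoning_attack} referenced in Remark~\ref{rm.reward_poisoning} with target $\targetpi = \piqg$. Since $\piqg$ attains the min--max defining $\minmaxgapqval$, every $Q^{\optpi, \overline R}$-gap at a state in $S_{\textnormal{pos}}^{\piqg}$ is at most $\minmaxgapqval$, so the standard reward-poisoning cost estimate yields $\|\overline R - \widehat R\|_2 \le \sqrt{|S|}\cdot\minmaxgapqval + \frac{\epsilon}{\occstate_{\min}}\sqrt{|S|\cdot|A|}$. The performance-difference lemma (PDL) then rewrites
\begin{align*}
    \score^{\optpi, \overline R} - \score^{\piqg, \overline R} = \sum_{s} \occstate^{\piqg}(s)\big(Q^{\optpi, \overline R}(s, \optpi(s)) - Q^{\optpi, \overline R}(s, \piqg(s))\big);
\end{align*}
the summands are non-negative and supported on $S_{\textnormal{pos}}^{\piqg}$, and $\occstate^{\piqg}$ has unit mass, so the sum is at most $\minmaxgapqval$. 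Combining gives the upper bound with $\beta_Q = \lambda + \sqrt{|S|}$.

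For the lower bound, the performance half follows from the same identity applied to $\pisolB$: at the state in $S_{\textnormal{pos}}^{\pisolB}$ attaining the max $Q$-gap the summand is at least $\minmaxgapqval$ by admissibility of $\pisolB$ and the definition of $\minmaxgapqval$, and its weight is at least $\occstate_{\min}$, giving $\score^{\optpi, \overline R} - \score^{\pisolB, \overline R} \ge \occstate_{\min}\cdot\minmaxgapqval$ and a $\lambda\occstate_{\min}\minmaxgapqval$ contribution to $\relval$. The main obstacle is extracting the remaining $(1-\gamma)\minmaxgapqval/2$ from $\|\overline R - \RsolB\|_2$. I would fix $s_0 \in S_{\textnormal{pos}}^{\pisolB}$ realizing the max $Q$-gap so that the leading PDL term alone yields $V^{\optpi, \overline R}(s_0) - V^{\pisolB, \overline R}(s_0) \ge \minmaxgapqval$; applying the standard Lipschitz bound $|V^{\pi, R}(s) - V^{\pi, R'}(s)| \le \|R-R'\|_\infty/(1-\gamma)$ with $\pi \in \{\optpi, \pisolB\}$ transfers this into the $\RsolB$-MDP up to a $2\|\overline R - \RsolB\|_\infty/(1-\gamma)$ slack. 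The constraint of \eqref{prob.reward_design.approx} forces every $\epsilon$-optimal policy under $\RsolB$---in particular an optimal deterministic policy $\pi^*_{\RsolB}$---to agree with $\pisolB$ on $S_{\textnormal{pos}}^{\pisolB}$; trajectories starting at $s_0 \in S_{\textnormal{pos}}^{\pisolB}$ never leave $S_{\textnormal{pos}}^{\pisolB}$, so $V^{\pisolB, \RsolB}(s_0) = V^{\pi^*_{\RsolB}, \RsolB}(s_0) \ge V^{\optpi, \RsolB}(s_0)$, making the transferred gap non-positive. Rearranging forces $\|\overline R - \RsolB\|_\infty \ge (1-\gamma)\minmaxgapqval/2$, and $\|\cdot\|_2 \ge \|\cdot\|_\infty$ lifts this to the $L_2$ cost; summing the two contributions delivers $\relval \ge \alpha_Q\cdot\minmaxgapqval$.
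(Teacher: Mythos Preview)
Your proposal is correct and follows the paper's argument almost verbatim: the upper bound is witnessed by forcing $\piqg$ and combining the constructive cost estimate with the performance-difference lemma, while the lower bound decomposes $\relval$ into a performance piece ($\ge \occstate_{\min}\cdot\minmaxgapqval$ via PDL applied to $\pisolB$) and a cost piece ($\ge \tfrac{1-\gamma}{2}\minmaxgapqval$ via a Lipschitz/optimality-flip argument at the state realizing the max $Q$-gap). The only technical variation is in how the cost lower bound is extracted: the paper invokes the $Q^*$-Lipschitz bound $\|Q^{*,R_1}-Q^{*,R_2}\|_\infty \le \|R_1-R_2\|_\infty/(1-\gamma)$ (Corollary~\ref{cor.lower_bound}, following \cite{ma2019policy}) and compares $Q^{*,\overline R}$ with $Q^{*,\RsolB}$ directly at $(s_0,\optpi(s_0))$ and $(s_0,\pisolB(s_0))$, whereas you route through $V^{\pi}$-Lipschitz for fixed $\pi$ together with the trajectory-closure observation that $S_{\textnormal{pos}}^{\pisolB}$ is forward-invariant under $\pisolB$, so that $V^{\pisolB,\RsolB}(s_0)=V^{*,\RsolB}(s_0)$. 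Both routes give the same constant; the paper's is one step shorter because the $Q^*$ formulation avoids the need to argue invariance of $S_{\textnormal{pos}}^{\pisolB}$ or to pick a state-wise optimal $\pi^*_{\RsolB}$.
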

The bounds in Theorem \ref{thm.general_mdp.charact.bounds.qgreedy} are obtained by analyzing the cost of forcing policy $\piqg$ and the score difference $(\score^{\optpi, \overline{R}} - \score^{\piqg, \overline{R}})$. The well-known relationship between  $\rho^{\pi, R} - \rho^{\pi', R}$ and $Q^{\pi, R}$ for any two policies $\pi, \pi'$ (e.g., see \cite{schulman2015trust}) relates the score difference $(\score^{\optpi, \overline{R}} - \score^{\piqg, \overline{R}})$ to $Q^{\optpi, \overline{R}}$, so the crux of the analysis lies in upper and lower bounding the cost of forcing policy $\piqg$. 
To obtain the corresponding bounds, we utilize similar proof techniques to those presented in \cite{ma2019policy} (see Theorem 2 in their paper). Since the analysis focuses on $\piqg$, the approach based on forcing $\piqg$ outputs a solution to \eqref{prob.reward_design.approx} whose value (relative to the score of $\optpi$) satisfies the bounds in Theorem \ref{thm.general_mdp.charact.bounds.qgreedy}.

\subsection{Practical Algorithm: \coopalgo}
In the previous two subsections, we discussed characterization results for the relative value $\relval$ by considering two specific cases: optimizing performance and minimizing  cost. We now utilize the insights from the previous two subsections to derive a practical algorithm for solving \eqref{prob.reward_design.approx}. The algorithm is depicted in Algorithm \ref{alg.contrain_optimize}, and it searches for a well performing policy with a small cost of forcing it.

\begin{algorithm}
\caption{\coopalgo}
\label{alg.contrain_optimize}
\begin{algorithmic}[1]
\REQUIRE MDP $\overline M$, admissible set $\allowedpi$ 
\ENSURE Reward function $\widehat{R}$, policy $\pico$
\STATE $\pico \leftarrow \arg\max_{\pi \in \allowedpi} \score^{\pi, \overline{R}}$
\STATE $cost_{\textnormal{co}} \leftarrow \text{approx. solve \eqref{prob.reward_poisoning_attack} with } \targetpi = \pico$
\STATE set $\Pi_{\textnormal{co}} \leftarrow \allowedpi$ 
\REPEAT
\STATE output$_{\text{new}} \leftarrow $false
\FOR{$s$ in {\em priority-queue}($S^{\pico}_{\textnormal{pos}}$)}
\STATE $\Pi' \leftarrow \{ \pi | \pi \in \Pi_{\textnormal{co}} \land \pi(s) \ne \pico(s)\}$
\STATE $\pi' \leftarrow \arg\max_{\pi \in \Pi'} \score^{\pi, \overline{R}}$
\STATE $cost' \leftarrow \text{approx. solve  \eqref{prob.reward_poisoning_attack} with } \targetpi = \pi'$
\IF{$cost' - \lambda \score^{\pi', \overline{R}} < cost_{\textnormal{co}} - \lambda \score^{\pico, \overline{R}}$}
 \STATE set $\pico \leftarrow \pi'$, $cost_{\textnormal{co}} \leftarrow cost'$, and $\Pi_{\textnormal{co}} \leftarrow \Pi'$
 \STATE set output$_{\text{new}} \leftarrow $true and {\bf break}
\ENDIF
\ENDFOR
\UNTIL{output$_{\text{new}} = $ false}
\STATE $\widehat{R} \leftarrow \text{approx. solve \eqref{prob.reward_poisoning_attack} with } \targetpi = \pico$
\end{algorithmic}
\end{algorithm}

\begin{figure*}[t]
        \begin{subfigure}[b]{0.33\textwidth}
                \centering
              \includegraphics[width=.99\linewidth]{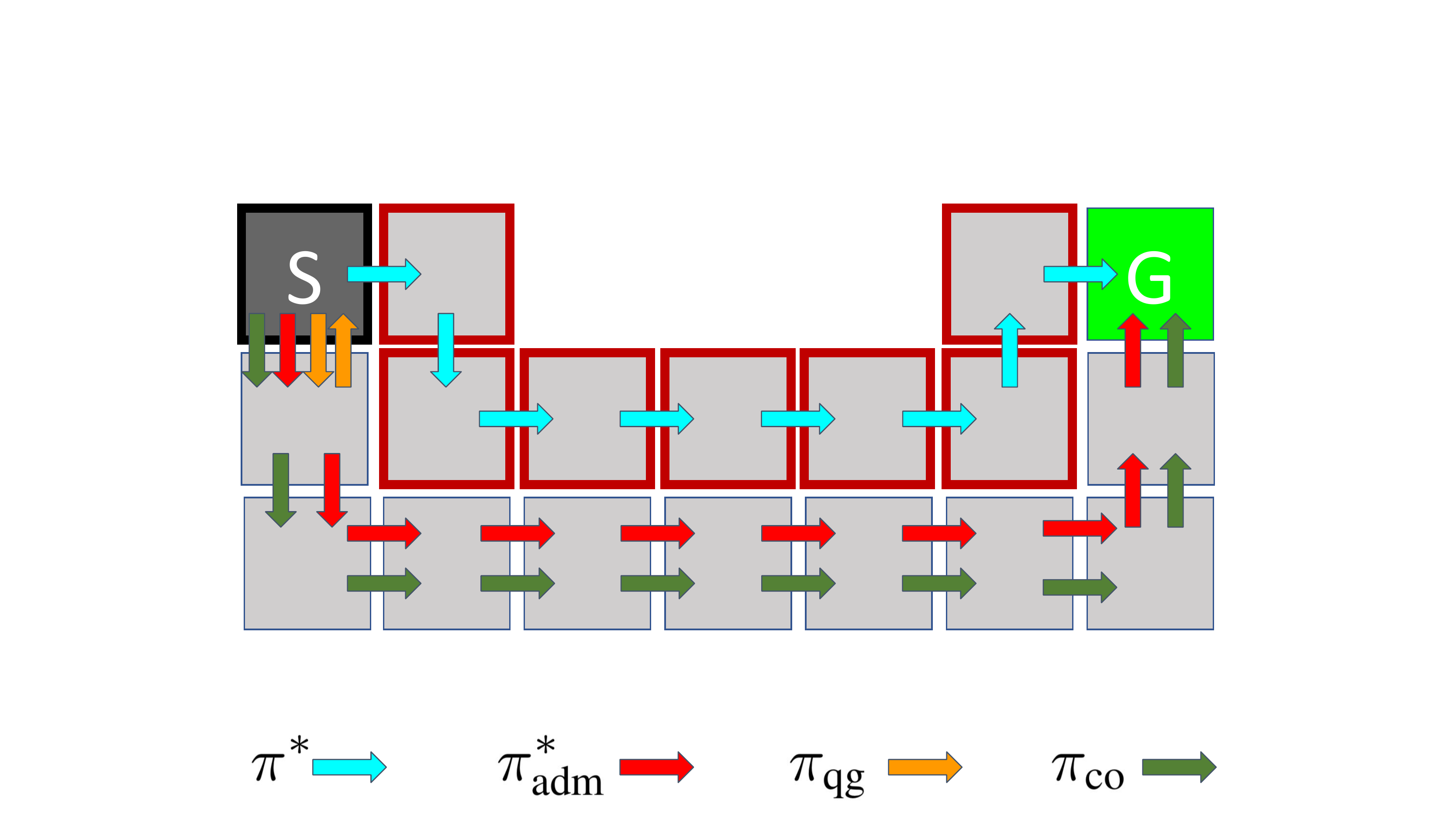}
                \caption{Cliff environment}
                \label{fig:ah_1}
        \end{subfigure}
        \begin{subfigure}[b]{0.33\textwidth}
                \centering
              \includegraphics[width=.99\linewidth]{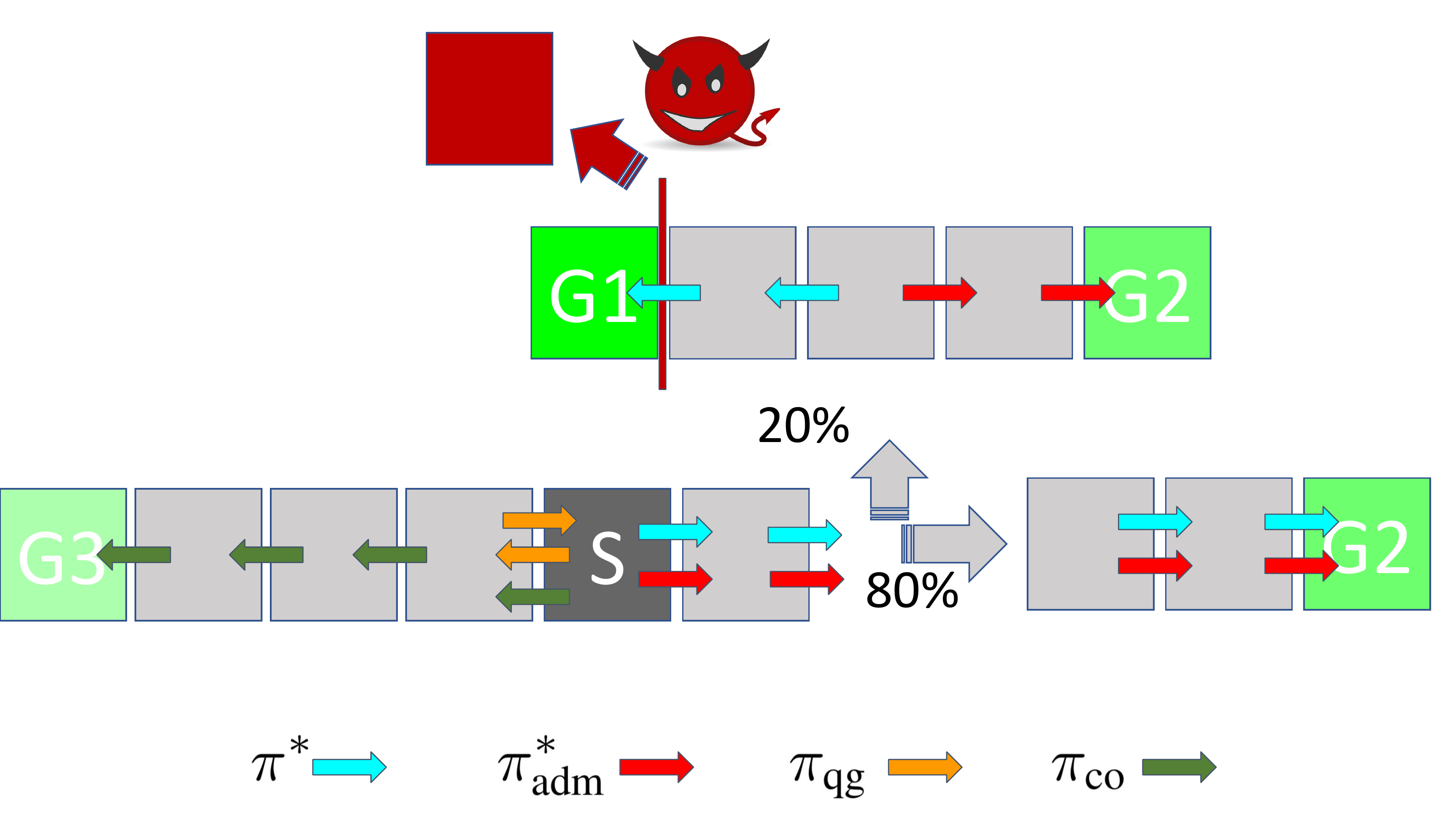}
              \caption{Action hacking environment}
                \label{fig:ah_2}
        \end{subfigure}
        \begin{subfigure}[b]{0.33\textwidth}
                \centering
              \includegraphics[width=.99\linewidth]{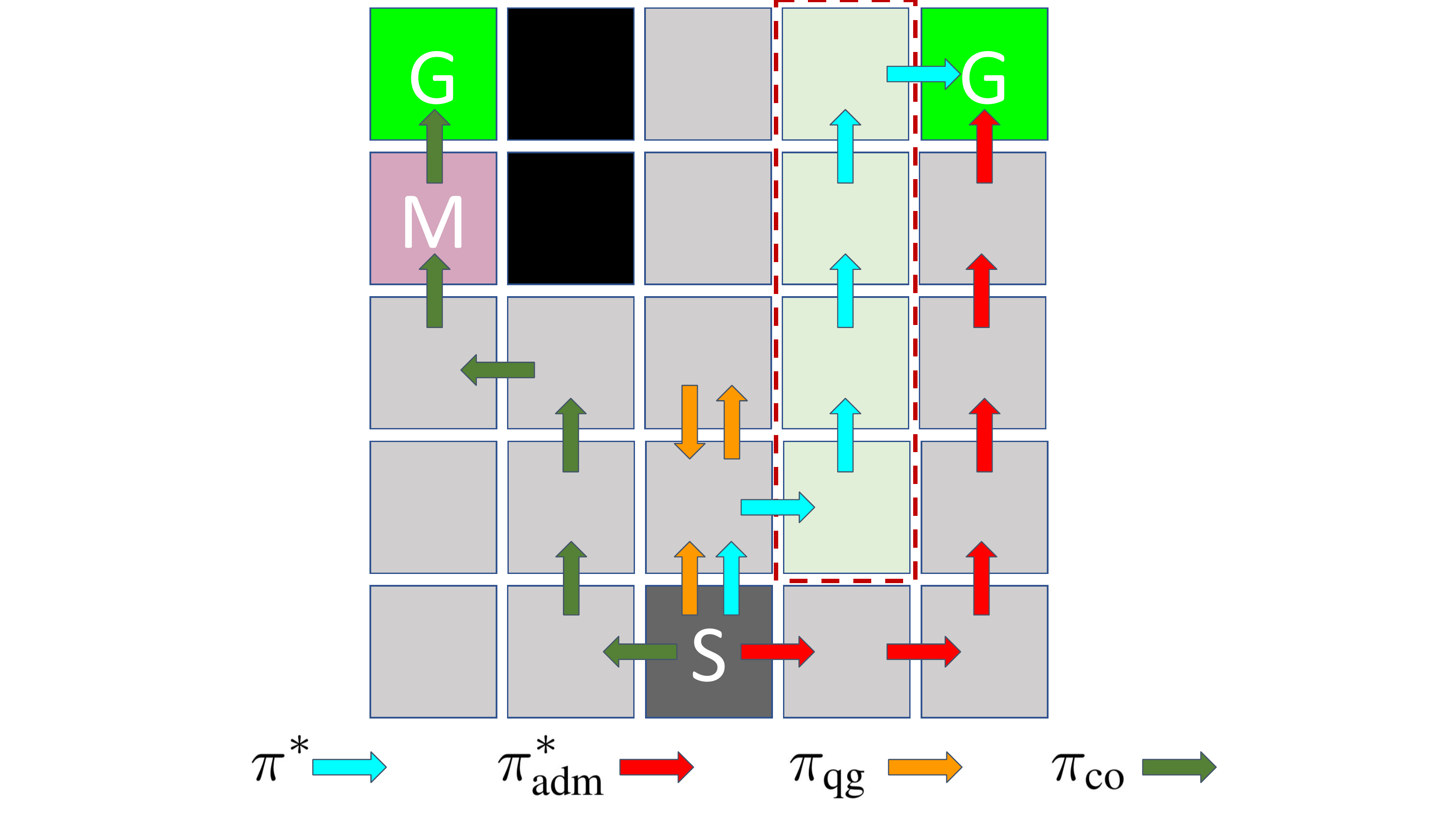}
              \caption{Grass and mud environment}
                \label{fig:ah_3}
        \end{subfigure}
        \caption{
        \textbf{Qualitative assessment.} (\textbf{a}) $\pico$ is the same as $\optallowpi$, taking the longer path to the goal state than $\optpi$ in order to avoid the cliff edge, while $\piqg$ simply alternates between the starting state and the state below it; (\textbf{b}) $\pico$ takes the path to the goal state \textrm{G3}---this behavior is less costly to incentivize than navigating to \textrm{G2} since 
        in the former case the reward designer mainly needs to 
        compensate for the difference in rewards between \textrm{G3} and \textrm{G2}
        (\textrm{G1} is reachable only 20\% of the time from \textrm{S}); 
        (\textbf{c}) $\optpi$ takes a path through the grass states, $\optallowpi$ takes a different but admissible path towards the same goal, while $\pico$ navigates the agent towards the other goal (with a lower cumulative reward, but the corresponding policy is less costly to force).
        \textbf{Quantitative assessment.} The objective values of \eqref{prob.reward_design.approx} for the approaches based on forcing $\optpi$, $\optallowpi$, $\piqg$, and $\pico$ are respectively:
        (\textbf{a}) $0.27$, $1.59$, $3.93$, and $1.59$; 
         (\textbf{b}) $-2.04$, $14.96$, $5.00$, and $3.82$;
         and (\textbf{c}) $-9.54$, $9.46$, $17.26$, and $7.92$.
        }\label{fig:exp_env}
\end{figure*}

The main blocks of the algorithm are as follows:
\begin{itemize}
    \item \textbf{Initialization (lines 1-2).} 
    The algorithm selects $\optallowpi$ as its initial solution, i.e., $\pico = \optallowpi$, and evaluates its cost by approximately solving \eqref{prob.reward_poisoning_attack}.
    \item \textbf{Local search (lines 4-15).} Since the initial policy $\pico$ is not necessarily cost effective, the algorithm proceeds with a local search in order to find a policy that has a lower value of the objective of \eqref{prob.reward_design.approx}. In each iteration of the local search procedure, it iterates over all states that are visited by the current $\pico$ (i.e., $S^{\pico}_{\textnormal{pos}}$), prioritizing those that have a higher value of $Q^{\optpi, \overline{R}}(s, \optpi(s)) -  Q^{\optpi, \overline{R}}(s, \pico(s))$ (obtained via {\em priority-queue}). The intuition behind this prioritization is that this Q value difference is reflective of the cost of forcing action $\pico(s)$ (as can be seen by setting $\lambda = 0$ in the upper bound of Theorem \ref{thm.general_mdp.charact.bounds.qgreedy}). Hence, deviations from $\pico$ that are considered first are deviations from those actions that are expected to induce high cost. 
    \item \textbf{Evaluating a neighbor solution (lines 7-12).} Each visited state $s$ defines a neighbor solution in the local search. To find this neighbor, the algorithm first defines a new admissible set of policies $\Pi'$ (line 7), obtained from the current one by making action $\pico(s)$ inadmissible. The neighbor solution is then identified as $\pi' \in \arg\max_{\pi \in \Pi'} \score^{\pi, \overline{R}}$ (line 8) and the  costs of forcing it is calculated by approximately solving \eqref{prob.reward_poisoning_attack} with $\targetpi = \pi'$ (line 9).
    If $\pi'$ yields a better value of the objective of \eqref{prob.reward_design.approx} than $\pico$ does (line 10), we have a new candidate policy and the set of admissible policies is updated to $\Pi'$ (lines 11-12). 
    \item \textbf{Returning solution (line 16).} Once the local search finishes, the algorithm outputs $\pico$ and the reward function $\widehat{R}$ found by approximately solving \eqref{prob.reward_poisoning_attack} with $\targetpi = \pico$.
\end{itemize}

In each iteration of the local search (lines 5-14), the algorithm either finds a new candidate (output$_{\text{new}}$=true) or the search finishes with that iteration (output$_{\text{new}}$=false). Notice that the former cannot go indefinitely since the admissible set reduces  between two iterations. This means that the algorithm is guaranteed to halt. 
Since the local search only accepts new policy candidates if they are better than the current $\pico$ (line 10), the output of \coopalgo~is guaranteed to be better than forcing an optimal admissible policy (i.e., approx. solving \eqref{prob.reward_poisoning_attack} with $\targetpi=\optallowpi$).


\section{Numerical Simulations}
\label{sec.numerical.simulation}

\begin{figure*}[!t]
        \begin{subfigure}[b]{0.33\textwidth}
                \centering                \includegraphics[width=.85\linewidth]{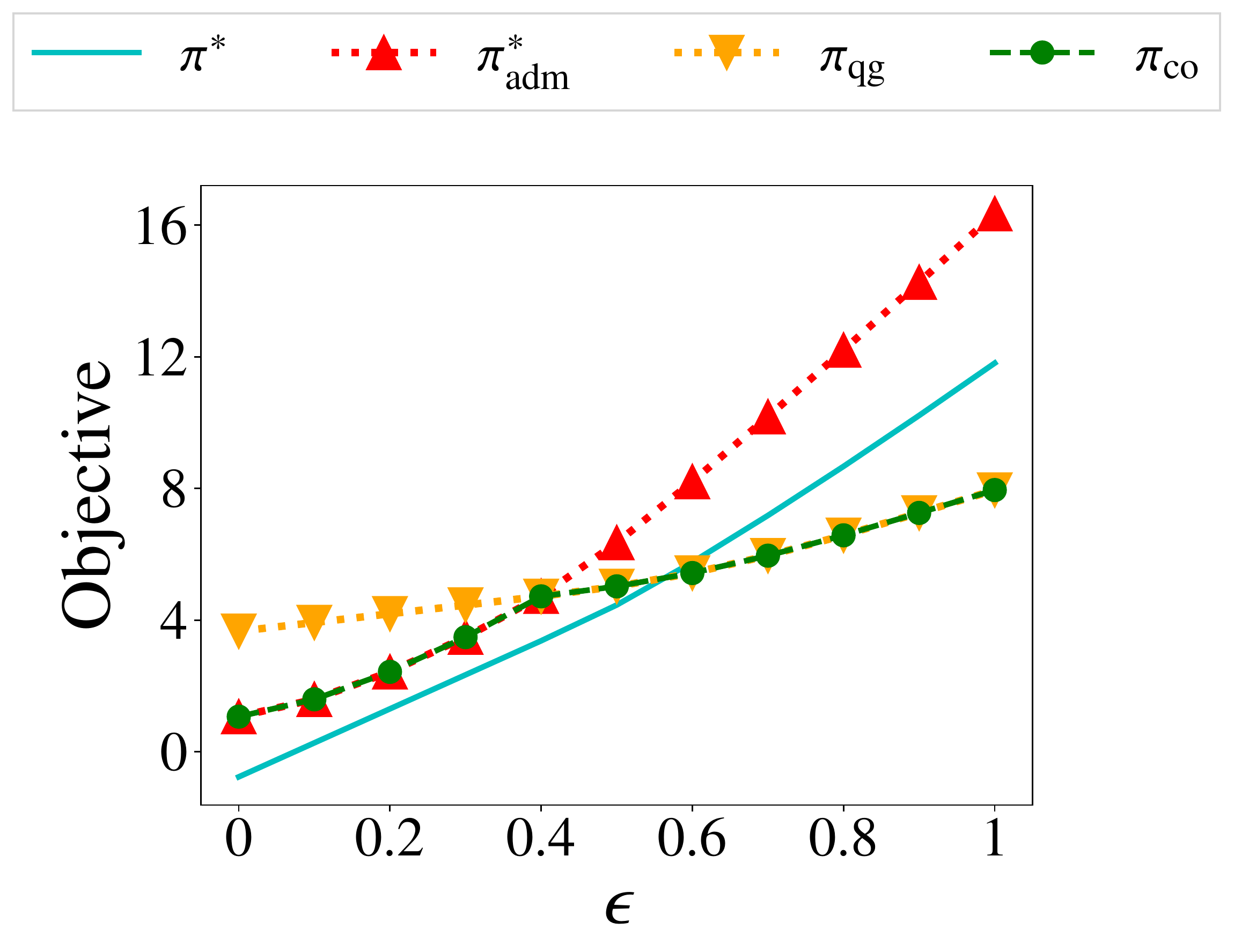}
                \caption{Cliff}
                \label{fig:eps:res1}
        \end{subfigure}
        \begin{subfigure}[b]{0.33\textwidth}
                \centering                \includegraphics[width=.85\linewidth]{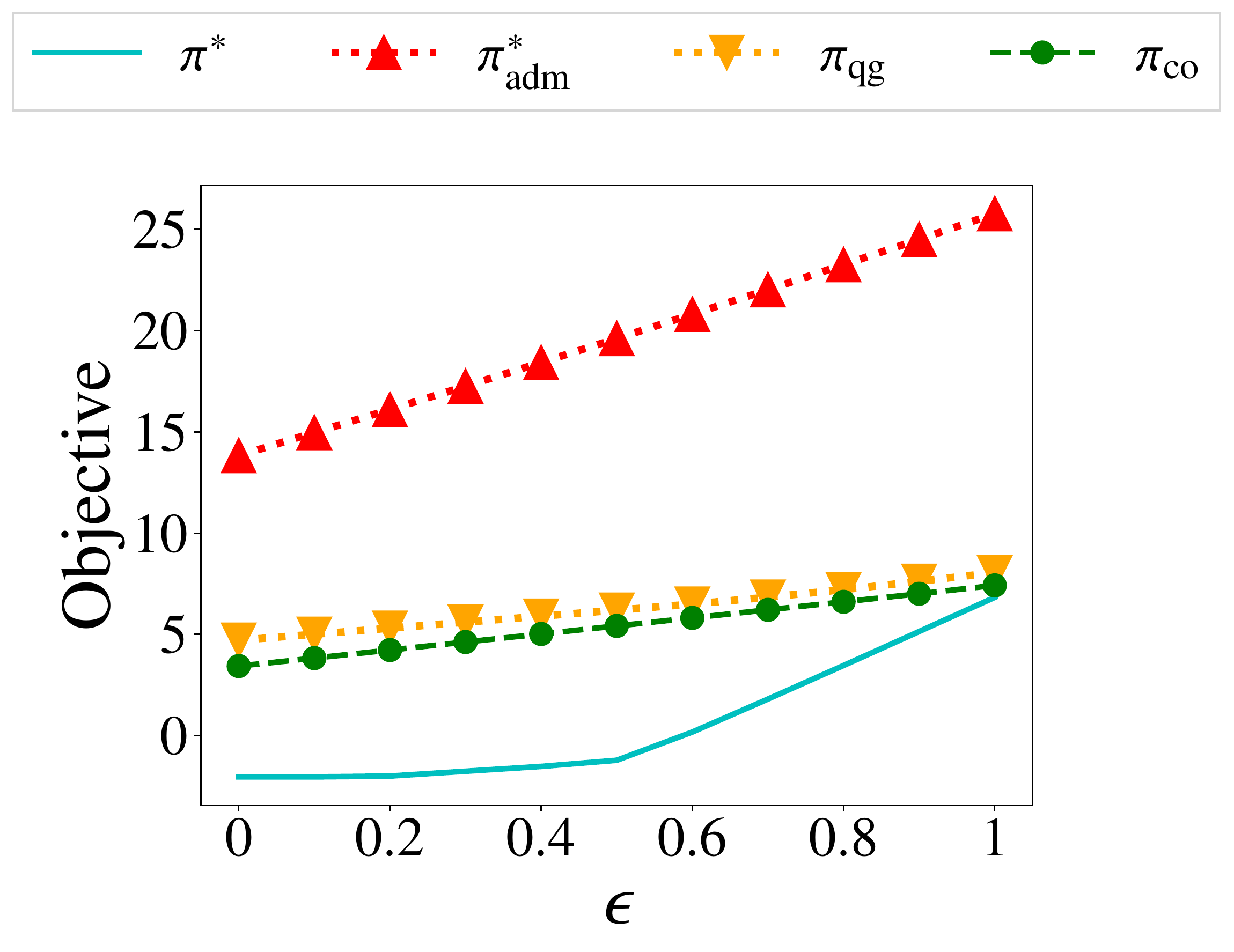}
                \caption{Action hacking}
                \label{fig:eps:res2}
        \end{subfigure}
        \begin{subfigure}[b]{0.33\textwidth}
                \centering                \includegraphics[width=.85\linewidth]{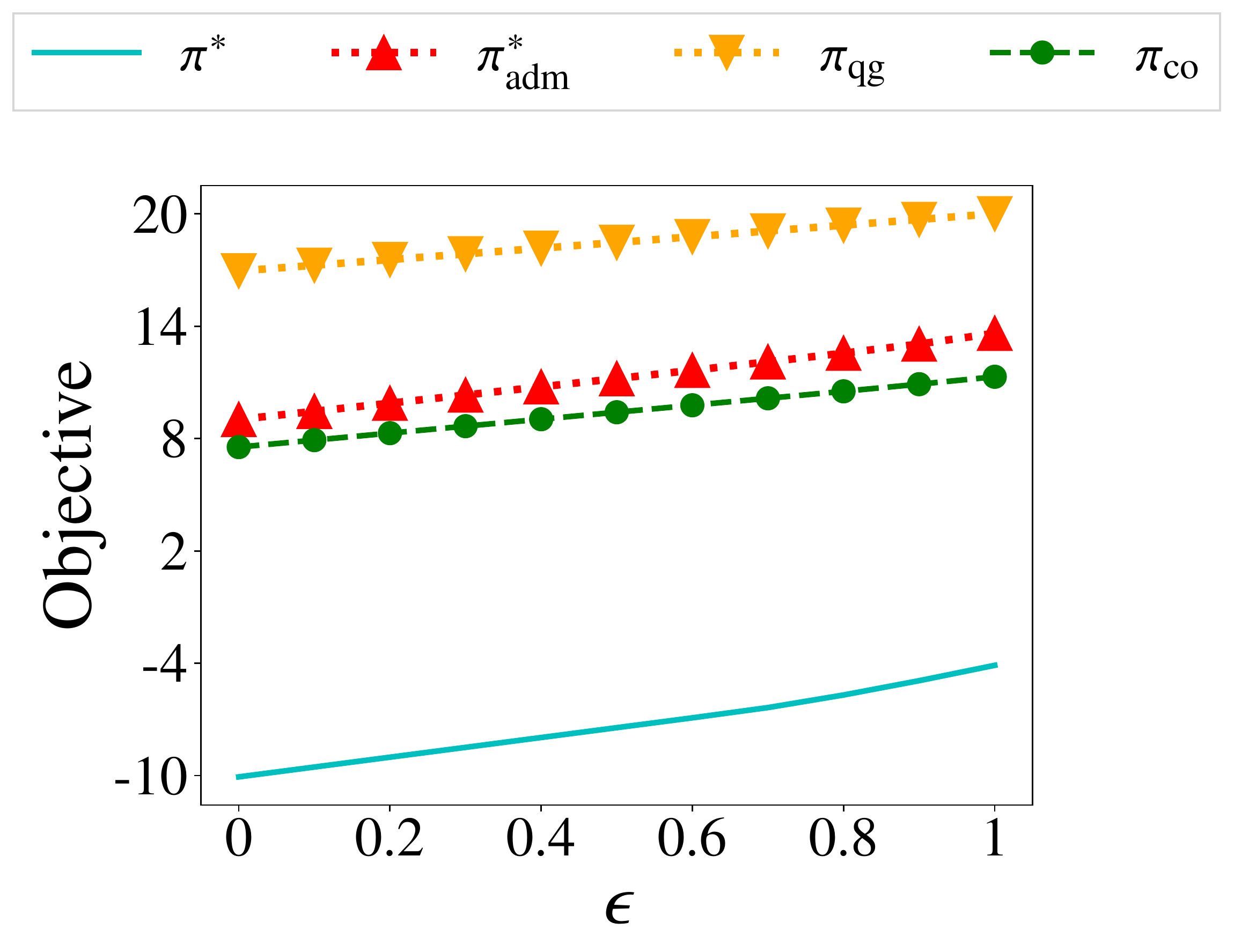}
                \caption{Grass and mud}
                \label{fig:eps:res3}
        \end{subfigure}
        \\
        \begin{subfigure}[b]{0.33\textwidth}
                \centering                \includegraphics[width=.85\linewidth]{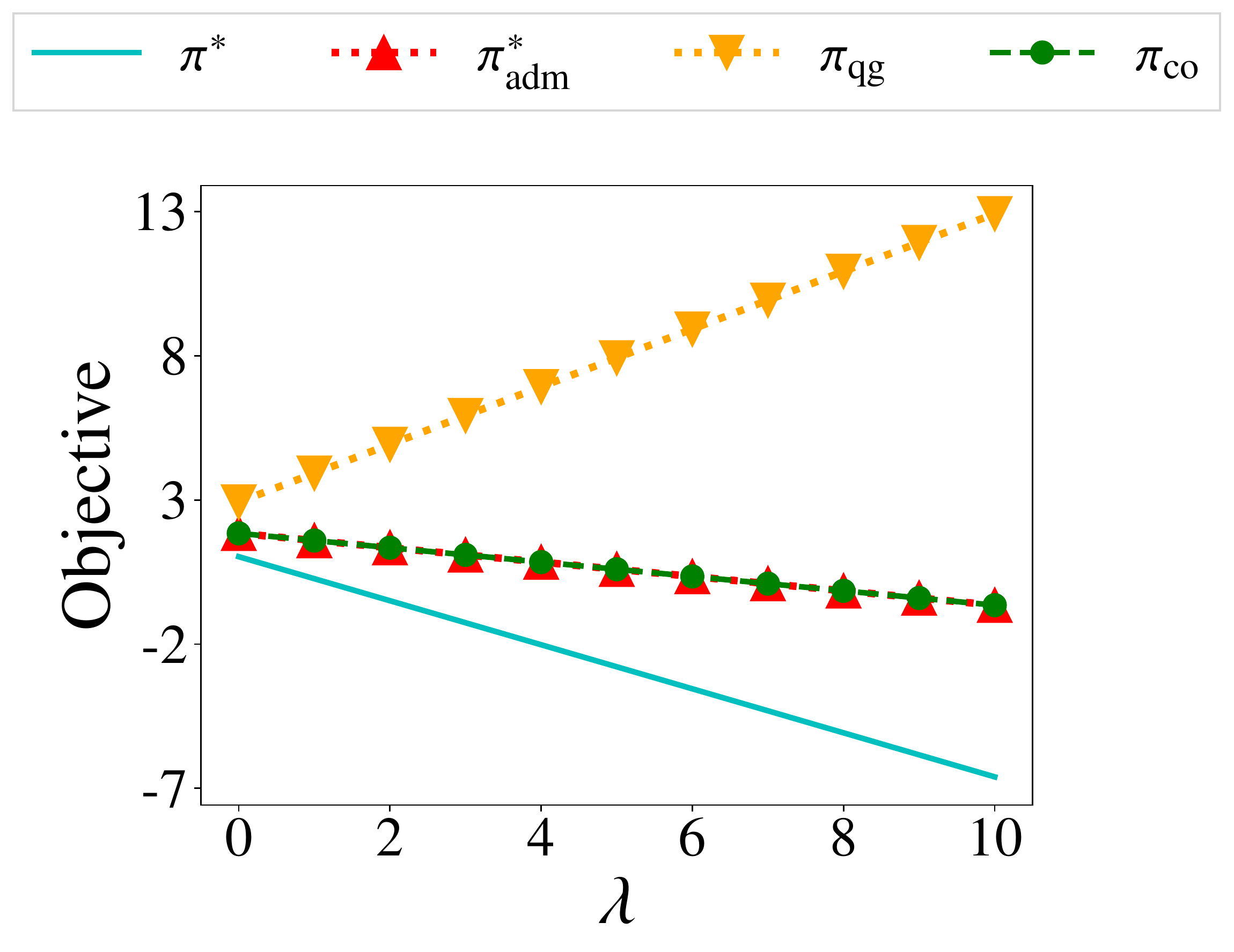}
                \caption{Cliff}
                \label{fig:lam:res1}
        \end{subfigure}
        \begin{subfigure}[b]{0.33\textwidth}
                \centering                \includegraphics[width=.85\linewidth]{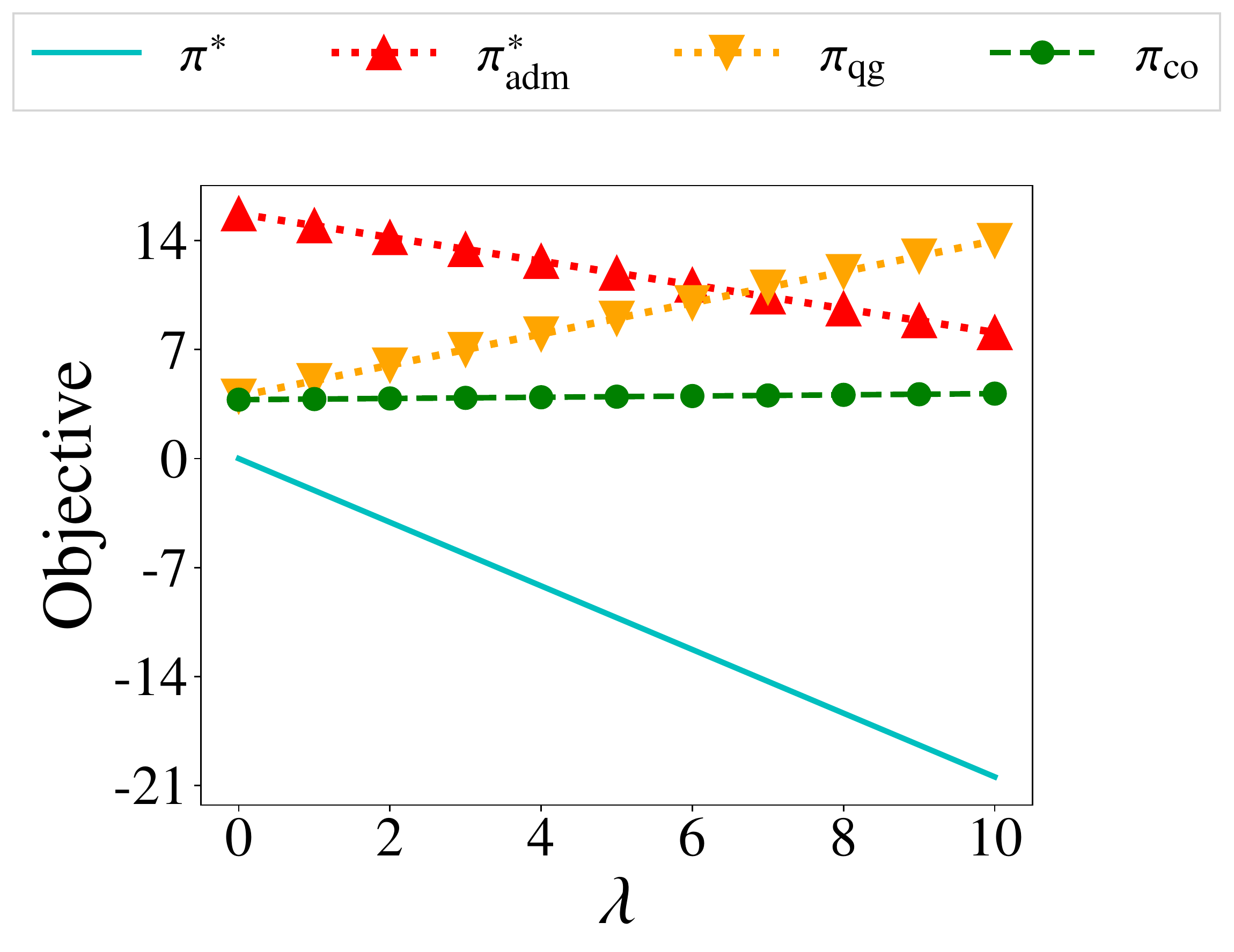}
                \caption{Action hacking}
                \label{fig:lam:res2}
        \end{subfigure}
        \begin{subfigure}[b]{0.33\textwidth}
                \centering                \includegraphics[width=.85\linewidth]{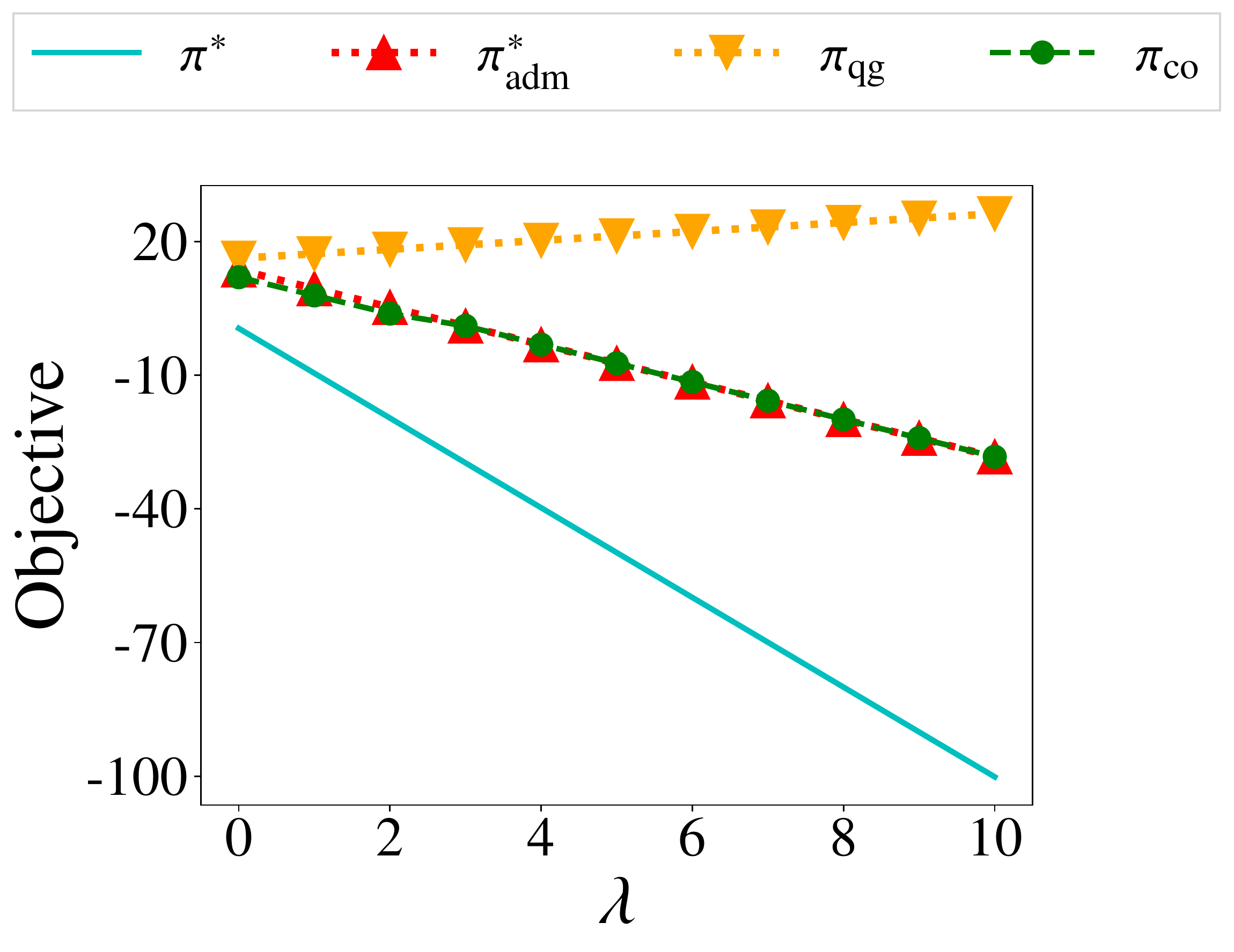}
                \caption{Grass and mud}
                \label{fig:lam:res3}
        \end{subfigure}
        \caption{\looseness-1Effect of $\lambda$ and $\epsilon$ on
        the objective value of \eqref{prob.reward_design.approx} for different approaches. \textbf{(a, b, c)} vary $\epsilon$ with $\lambda=1.0$; \textbf{(d, e, f)} vary $\lambda$ with $\epsilon=0.1$. Lower values on the y-axis denote better performance. Note that $\optpi$ is not an admissible policy in these environments; importantly, the objective value for $\pico$ is consistently better (lower) than for $\optallowpi$ and $\piqg$,  highlighting its efficacy.
        }
        \label{fig:vary_params}
\end{figure*}

We analyze the efficacy of \coopalgo~in solving the optimization problem \eqref{prob.reward_design.approx} and the policy it incentivizes, $\pico$.  
We consider three baselines, all based on approximately solving the optimization problem \eqref{prob.reward_poisoning_attack}, but with different target policies $\targetpi$: a) forcing an optimal policy, i.e., $\targetpi = \optpi$, b) forcing an optimal admissible policy, i.e., $\targetpi = \optallowpi$, c) forcing the policy obtained by the \qgreedy~algorithm, i.e., $\targetpi = \piqg$.\footnote{$\optpi$ might not be admissible; also, even though $\optpi$ is an optimal policy, there is still a cost of forcing it to create the required gap.} We compare these approaches by measuring their performance w.r.t. the objective value of \eqref{prob.reward_design.approx}---lower value is better. By default, we set the parameters $\gamma=0.9$, $\lambda = 1.0$ and $\epsilon = 0.1$.

\subsection{Experimental Testbeds}
As an experimental testbed, we consider three simple navigation environments, shown in Figure \ref{fig:exp_env}. 
Each environment contains a start state \textrm{S} and goal state(s) \textrm{G}. Unless otherwise specified, in a non-goal state, the agent can navigate in the left, right, down, and up directions, provided there is a state in that direction. In goal states, the agent has a single action which transports it back to the start state.

\looseness-1\textbf{Cliff environment (Figure \ref{fig:ah_1}).} 
This environment depicts a scenario where some of the states are potentially unsafe due to model uncertainty that the reward designer is aware of. More concretely, the states with ``red'' cell boundaries in Figure~\ref{fig:ah_1} represent the edges of a cliff and are unsafe; as such, all actions leading to these states are considered inadmissible. In this environment, the action in the goal state yields a reward $\overline{R}$ of $20$ while all other actions yield a reward $\overline{R}$ of $-1$.

\textbf{Action hacking environment (Figure \ref{fig:ah_2}).} 
This environment depicts a scenario when some of the agent's actions could be hacked at the deployment phase, taking the agent to a bad state. The reward designer is aware of this potential hacking and seeks to design a reward function so that these actions are inadmissible. More concretely, the action leading the agent to \textrm{G1} is considered inadmissible. In this environment, we consider the reward function $\overline{R}$ and dynamics $P$ as follows. Whenever an agent reaches any of the goal states (\textrm{G1}, \textrm{G2}, or \textrm{G3}), it has a single action that transports it back to the starting state and yields a reward of $50$, $10$, and $5$ for \textrm{G1}, \textrm{G2}, and \textrm{G3} respectively. In all other states, the agent can take either the left or right action and navigate in the corresponding direction, receiving a reward of $-1$. With a small probability of $0.20$, taking the right action in the state next to \textrm{S} results in the agent moving up instead of right.

\looseness-1\textbf{Grass and mud environment (Figure \ref{fig:ah_3}).} This environment depicts a policy teaching scenario where the reward designer and the agent do not have perfectly aligned preferences (e.g., the agent prefers to walk on grass, which  the reward designer wants to preserve).  The reward designer wants to incentivize the agent not to step on the grass states, so actions leading to them are considered inadmissible.
In addition to the starting state and two goal states, the environment contains four grass states, one mud state, and $16$ ordinary states, shown by ``light green'', ``light pink'', and ``light gray'' cells respectively. The ``black'' cells in the figure represent inaccessible blocks.
The reward function $\overline{R}$ is as follows: the action in the goal states yields a reward of $50$; the actions in the grass and mud states yield rewards of $10$ and $-2$ respectively; all other actions have a reward of $-1$.

\subsection{Results}
Figure~\ref{fig:exp_env} provides an assessment of different approaches by visualizing the agent's policies obtained from the designed reward functions $\widehat{R}$. For these results, we set the parameters $\lambda = 1.0$ and $\epsilon = 0.1$. 
In order to better understand the effect of the parameters $\lambda$ and $\epsilon$, we vary these parameters and solve \eqref{prob.reward_design.approx} with the considered approaches. The results are shown in Figure \ref{fig:vary_params} for each environment separately.
We make the following observations based on the experiments. First, the approaches based on forcing $\optpi$ and $\optallowpi$ benefit more from increasing $\lambda$. This is expected as these two policies have the highest scores under $\overline{R}$; the scores of $\piqg$ and $\pico$ is less than or equal to the score of $\optallowpi$. Second, the approaches based on forcing $\piqg$ and $\pico$ are less susceptible to increasing $\epsilon$. This effect is less obvious, and we attribute it to the fact that \qgreedy~and \coopalgo~output $\piqg$ and $\pico$ respectively by accounting for the cost of forcing these policies. Since this cost clearly increases with $\epsilon$---intuitively, forcing a larger optimality gap in \eqref{prob.reward_poisoning_attack} requires larger reward modifications---we can expect that increasing $\epsilon$ deteriorates more the approaches based on forcing $\optpi$ and $\optallowpi$. Third, the objective value of \eqref{prob.reward_design.approx} is consistently better (lower) for $\pico$ than for $\optallowpi$ and $\piqg$, highlighting the relevance of \coopalgo.

\section{Conclusion}
The characterization results in this paper showcase the computational challenges of optimal reward design for admissible policy teaching. In particular, we showed that it is computationally challenging to find minimal reward perturbations that would incentivize an optimal agent into adopting a well-performing admissible policy. To address this challenge, we derived a local search algorithm that outperforms baselines which either account for only the agent's performance or for only the cost of the reward design. On the flip side, this algorithm is only applicable to tabular settings, so one of the most interesting research directions for future work would be to consider its extensions based on function approximation. In turn, this would also make the optimization framework of this paper more applicable to practical applications of interest, such as those related to safe and secure RL.


\newpage
\bibliography{main}

\begin{thebibliography}{39}
\providecommand{\natexlab}[1]{#1}

\bibitem[{Amir et~al.(2016)Amir, Kamar, Kolobov, and
  Grosz}]{amir2016interactive}
Amir, O.; Kamar, E.; Kolobov, A.; and Grosz, B. 2016.
\newblock Interactive teaching strategies for agent training.
\newblock In \emph{IJCAI}, 804--811.

\bibitem[{Amodei et~al.(2016)Amodei, Olah, Steinhardt, Christiano, Schulman,
  and Man{\'e}}]{amodei2016concrete}
Amodei, D.; Olah, C.; Steinhardt, J.; Christiano, P.; Schulman, J.; and
  Man{\'e}, D. 2016.
\newblock Concrete problems in AI safety.
\newblock \emph{CoRR}, abs/1606.06565.

\bibitem[{Banihashem, Singla, and Radanovic(2021)}]{banihashem2021defense}
Banihashem, K.; Singla, A.; and Radanovic, G. 2021.
\newblock Defense Against Reward Poisoning Attacks in Reinforcement Learning.
\newblock \emph{CoRR}, abs/2102.05776.

\bibitem[{Bellemare et~al.(2016)Bellemare, Srinivasan, Ostrovski, Schaul,
  Saxton, and Munos}]{bellemare2016unifying}
Bellemare, M.; Srinivasan, S.; Ostrovski, G.; Schaul, T.; Saxton, D.; and
  Munos, R. 2016.
\newblock Unifying count-based exploration and intrinsic motivation.
\newblock \emph{NeurIPS}, 29: 1471--1479.

\bibitem[{Devlin and Kudenko(2012)}]{devlin2012dynamic}
Devlin, S.~M.; and Kudenko, D. 2012.
\newblock Dynamic potential-based reward shaping.
\newblock In \emph{AAMAS}, 433--440.

\bibitem[{Dimitrakakis et~al.(2017)Dimitrakakis, Parkes, Radanovic, and
  Tylkin}]{dimitrakakis2017multi}
Dimitrakakis, C.; Parkes, D.~C.; Radanovic, G.; and Tylkin, P. 2017.
\newblock Multi-View Decision Processes: The Helper-AI Problem.
\newblock In \emph{NeurIPS}, 5443--5452.

\bibitem[{Dorigo and Colombetti(1994)}]{dorigo1994robot}
Dorigo, M.; and Colombetti, M. 1994.
\newblock Robot shaping: Developing autonomous agents through learning.
\newblock \emph{Artificial intelligence}, 71(2): 321--370.

\bibitem[{Fischer et~al.(2019)Fischer, Mirman, Stalder, and
  Vechev}]{fischer2019online}
Fischer, M.; Mirman, M.; Stalder, S.; and Vechev, M. 2019.
\newblock Online robustness training for deep reinforcement learning.
\newblock \emph{CoRR}, abs/1911.00887.

\bibitem[{Garey and Johnson(1979)}]{garey1979computers}
Garey, M.~R.; and Johnson, D.~S. 1979.
\newblock \emph{Computers and Intractability: A Guide to the Theory of
  {NP}-Completeness}.
\newblock W. H. Freeman.

\bibitem[{Grze{\'s}(2017)}]{grzes2017reward}
Grze{\'s}, M. 2017.
\newblock Reward Shaping in Episodic Reinforcement Learning.
\newblock In \emph{AAMAS}, 565--573.

\bibitem[{Hadfield-Menell et~al.(2017)Hadfield-Menell, Milli, Abbeel, Russell,
  and Dragan}]{hadfield2017inverse}
Hadfield-Menell, D.; Milli, S.; Abbeel, P.; Russell, S.; and Dragan, A.~D.
  2017.
\newblock Inverse reward design.
\newblock In \emph{NeurIPS}, 6768--6777.

\bibitem[{Huang and Zhu(2019)}]{DBLP:conf/gamesec/HuangZ19a}
Huang, Y.; and Zhu, Q. 2019.
\newblock Deceptive Reinforcement Learning Under Adversarial Manipulations on
  Cost Signals.
\newblock In \emph{GameSec}, 217--237.

\bibitem[{Karp(1972)}]{karp1972reducibility}
Karp, R.~M. 1972.
\newblock Reducibility among combinatorial problems.
\newblock In \emph{Complexity of computer computations}, 85--103. Springer.

\bibitem[{Lykouris et~al.(2019)Lykouris, Simchowitz, Slivkins, and
  Sun}]{lykouris2019corruption}
Lykouris, T.; Simchowitz, M.; Slivkins, A.; and Sun, W. 2019.
\newblock Corruption robust exploration in episodic reinforcement learning.
\newblock \emph{CoRR}, abs/1911.08689.

\bibitem[{Ma et~al.(2019)Ma, Zhang, Sun, and Zhu}]{ma2019policy}
Ma, Y.; Zhang, X.; Sun, W.; and Zhu, J. 2019.
\newblock Policy poisoning in batch reinforcement learning and control.
\newblock In \emph{NeurIPS}, 14543--14553.

\bibitem[{Mataric(1994)}]{mataric1994reward}
Mataric, M.~J. 1994.
\newblock Reward functions for accelerated learning.
\newblock In \emph{ICML}, 181--189.

\bibitem[{Ng, Harada, and Russell(1999)}]{ng1999policy}
Ng, A.~Y.; Harada, D.; and Russell, S. 1999.
\newblock Policy invariance under reward transformations: Theory and
  application to reward shaping.
\newblock In \emph{ICML}, 278--287.

\bibitem[{Nikolaidis et~al.(2017)Nikolaidis, Nath, Procaccia, and
  Srinivasa}]{nikolaidis2017game}
Nikolaidis, S.; Nath, S.; Procaccia, A.~D.; and Srinivasa, S. 2017.
\newblock Game-theoretic modeling of human adaptation in human-robot
  collaboration.
\newblock In \emph{HRI}, 323--331.

\bibitem[{Omidshafiei et~al.(2019)Omidshafiei, Kim, Liu, Tesauro, Riemer,
  Amato, Campbell, and How}]{omidshafiei2019learning}
Omidshafiei, S.; Kim, D.-K.; Liu, M.; Tesauro, G.; Riemer, M.; Amato, C.;
  Campbell, M.; and How, J.~P. 2019.
\newblock Learning to teach in cooperative multiagent reinforcement learning.
\newblock In \emph{AAAI}, 6128--6136.

\bibitem[{Pinto et~al.(2017)Pinto, Davidson, Sukthankar, and
  Gupta}]{pinto2017robust}
Pinto, L.; Davidson, J.; Sukthankar, R.; and Gupta, A. 2017.
\newblock Robust adversarial reinforcement learning.
\newblock In \emph{ICML}, 2817--2826.

\bibitem[{Radanovic et~al.(2019)Radanovic, Devidze, Parkes, and
  Singla}]{radanovic2019learning}
Radanovic, G.; Devidze, R.; Parkes, D.; and Singla, A. 2019.
\newblock Learning to collaborate in markov decision processes.
\newblock In \emph{ICML}, 5261--5270.

\bibitem[{Rakhsha et~al.(2020{\natexlab{a}})Rakhsha, Radanovic, Devidze, Zhu,
  and Singla}]{rakhsha2020policy-jmlr}
Rakhsha, A.; Radanovic, G.; Devidze, R.; Zhu, X.; and Singla, A.
  2020{\natexlab{a}}.
\newblock Policy Teaching in Reinforcement Learning via Environment Poisoning
  Attacks.
\newblock \emph{CoRR}, abs/2011.10824.

\bibitem[{Rakhsha et~al.(2020{\natexlab{b}})Rakhsha, Radanovic, Devidze, Zhu,
  and Singla}]{rakhsha2020policy}
Rakhsha, A.; Radanovic, G.; Devidze, R.; Zhu, X.; and Singla, A.
  2020{\natexlab{b}}.
\newblock Policy Teaching via Environment Poisoning: Training-time Adversarial
  Attacks against Reinforcement Learning.
\newblock In \emph{ICML}, 7974--7984.

\bibitem[{Schulman et~al.(2015)Schulman, Levine, Abbeel, Jordan, and
  Moritz}]{schulman2015trust}
Schulman, J.; Levine, S.; Abbeel, P.; Jordan, M.; and Moritz, P. 2015.
\newblock Trust region policy optimization.
\newblock In \emph{ICML}, 1889--1897.

\bibitem[{Singh, Lewis, and Barto(2009)}]{singh2009rewards}
Singh, S.; Lewis, R.~L.; and Barto, A.~G. 2009.
\newblock Where do rewards come from.
\newblock In \emph{the Annual Conference of the Cognitive Science Society},
  2601--2606.

\bibitem[{Sorg, Lewis, and Singh(2010)}]{sorg2010reward}
Sorg, J.; Lewis, R.~L.; and Singh, S. 2010.
\newblock Reward design via online gradient ascent.
\newblock \emph{NeurIPS}, 2190--2198.

\bibitem[{Sorg, Singh, and Lewis(2010)}]{sorg2010internal}
Sorg, J.; Singh, S.; and Lewis, R. 2010.
\newblock Internal rewards mitigate agent boundedness.
\newblock In \emph{ICML}, 1007--1014.

\bibitem[{Sun, Huo, and Huang(2021)}]{sun2020vulnerability}
Sun, Y.; Huo, D.; and Huang, F. 2021.
\newblock Vulnerability-Aware Poisoning Mechanism for Online RL with Unknown
  Dynamics.
\newblock In \emph{ICLR}.

\bibitem[{Sutton and Barto(2018)}]{sutton2018reinforcement}
Sutton, R.~S.; and Barto, A.~G. 2018.
\newblock \emph{Reinforcement learning: An introduction}.
\newblock MIT press.

\bibitem[{Tessler, Mankowitz, and Mannor(2018)}]{tessler2018reward}
Tessler, C.; Mankowitz, D.~J.; and Mannor, S. 2018.
\newblock Reward Constrained Policy Optimization.
\newblock In \emph{ICLR}.

\bibitem[{Tylkin, Radanovic, and Parkes(2021)}]{tylkin2021learning}
Tylkin, P.; Radanovic, G.; and Parkes, D.~C. 2021.
\newblock Learning robust helpful behaviors in two-player cooperative Atari
  environments.
\newblock In \emph{AAMAS}, 1686--1688.

\bibitem[{Yu et~al.(2020)Yu, Thomas, Yu, Ermon, Zou, Levine, Finn, and
  Ma}]{yu2020mopo}
Yu, T.; Thomas, G.; Yu, L.; Ermon, S.; Zou, J.~Y.; Levine, S.; Finn, C.; and
  Ma, T. 2020.
\newblock MOPO: Model-based Offline Policy Optimization.
\newblock In \emph{NeurIPS}, 14129--14142.

\bibitem[{Zhang et~al.(2021{\natexlab{a}})Zhang, Chen, Boning, and
  Hsieh}]{zhang2021robust}
Zhang, H.; Chen, H.; Boning, D.; and Hsieh, C.-J. 2021{\natexlab{a}}.
\newblock Robust reinforcement learning on state observations with learned
  optimal adversary.
\newblock \emph{CoRR}, abs/2101.08452.

\bibitem[{Zhang et~al.(2020{\natexlab{a}})Zhang, Chen, Xiao, Li, Boning, and
  Hsieh}]{zhang2020robust}
Zhang, H.; Chen, H.; Xiao, C.; Li, B.; Boning, D.; and Hsieh, C.-J.
  2020{\natexlab{a}}.
\newblock Robust deep reinforcement learning against adversarial perturbations
  on observations.
\newblock \emph{CoRR}, abs/2003.08938.

\bibitem[{Zhang and Parkes(2008)}]{DBLP:conf/aaai/ZhangP08}
Zhang, H.; and Parkes, D.~C. 2008.
\newblock Value-Based Policy Teaching with Active Indirect Elicitation.
\newblock In \emph{AAAI}, 208--214.

\bibitem[{Zhang, Parkes, and Chen(2009)}]{zhang2009policy}
Zhang, H.; Parkes, D.~C.; and Chen, Y. 2009.
\newblock Policy teaching through reward function learning.
\newblock In \emph{EC}, 295--304.

\bibitem[{Zhang et~al.(2021{\natexlab{b}})Zhang, Chen, Zhu, and
  Sun}]{zhang2021robustICML}
Zhang, X.; Chen, Y.; Zhu, X.; and Sun, W. 2021{\natexlab{b}}.
\newblock Robust policy gradient against strong data corruption.
\newblock \emph{CoRR}, abs/2102.05800.

\bibitem[{Zhang et~al.(2020{\natexlab{b}})Zhang, Ma, Singla, and
  Zhu}]{xuezhou2020adaptive}
Zhang, X.; Ma, Y.; Singla, A.; and Zhu, X. 2020{\natexlab{b}}.
\newblock Adaptive Reward-Poisoning Attacks against Reinforcement Learning.
\newblock In \emph{ICML}, 11225--11234.

\bibitem[{Zou et~al.(2019)Zou, Ren, Yan, Su, and Zhu}]{zou2019reward}
Zou, H.; Ren, T.; Yan, D.; Su, H.; and Zhu, J. 2019.
\newblock Reward shaping via meta-learning.
\newblock \emph{CoRR}, abs/1901.09330.

\end{thebibliography}

\iftoggle{longversion}{
\clearpage
\onecolumn
\appendix 
{\allowdisplaybreaks

\section{Appendix: Table of Contents}
\label{app.toc}

Appendix is structured according to the following sections:  

\begin{itemize}
    
    \item Section \nameref{app.background} introduces additional quantities and lemmas relevant for the formal proofs.
    
    \item An approach for solving  \eqref{prob.reward_poisoning_attack}, which is used in the experiments, is provided in
    Section \nameref{app.approx.reward_poisoning_attack}. This section also analyzes this approach and provides provable guarantees. 
    
    \item Section \nameref{app.qgreedy_algo} provides the description of \qgreedy~and shows that it is sound.
    
    \item The proof or Proposition \ref{prop.solvability.reward_design} is given in section \nameref{app.setting_proofs}.
    
    \item Theorem \ref{thm.copmutational_hardness} and Proposition \ref{prop.reward_design.approx} are proven in section \nameref{app.computational_challenges}. The same section contains an additional hardness result, which proves that the optimization problem \eqref{prob.reward_design.approx} is computationally hard.
    
    \item Proofs of Lemma \ref{lm.special_mdp.cost_of_poisoning} and Theorem \ref{thm.spec_mdp_attack_form} are in given in section \nameref{app.proof.special_mdps}. 
    
    \item Proofs of Theorem \ref{thm.general_mdp.charact.bounds.opt_const} and Theorem \ref{thm.general_mdp.charact.bounds.qgreedy} are provided in section \nameref{app.proofs.general_mdps}. The same section includes additional results relevant for proving the statements. 
    
\end{itemize}


\section{Background}\label{app.background}

As explained in the main paper,
for a policy $\pi$ and reward function $R$, we define its state-action value function $Q^{\pi, R}$ as
\begin{align*}
    Q^{\pi, R}(s, a) = \expct{\sum_{t=1}^{\infty} \gamma^{t-1} R(s_t, a_t) | \pi, s_1 = s, a_1 = a},
\end{align*}
where states $s_t$ and actions $a_t$ are obtained by executing policy $\pi$ starting from state $s_1 = s$ in which action $a_1 = a$ is taken.
~\\
The state value function $V^{\pi, R}$ is similarly defined as
\begin{align*}
    V^{\pi, R}(s) =
    \expct{\sum_{t=1}^{\infty} \gamma^{t-1} R(s_t, a_t) | \pi, s_1 = s} = 
    Q^{\pi, R}(s, \pi(s))
\end{align*}
We define $Q^{*, R}$ and $V^{*, R}$ as the maximum of these values over all policies, i.e., 
\begin{align*}
    &Q^{*, R}(s, a) = 
    \max_{\pi\in \PiDet} Q^{\pi, R}(s, a)
    \\
    &V^{*, R}(s) = \max_{\pi\in \PiDet} V^{\pi, R}(s)
\end{align*}
The optimal policy in an MDP can be calculated by setting
$\pi(s) \in \argmax_{a}Q^{*, R}(s,a)$ and satisfies
$Q^{\pi, R}=Q^{*, R}$.
For $R=\overline{R}$, we denote this policy with $\optpi$. 
~\\
We define the state occupancy measure $\occstate^\pi$ as 
\begin{align*}
    \occstate^{\pi}(s) = \expct{ (1-\gamma)\sum_{t=1}^{\infty} \gamma^{t-1}  \ind{s_t = s} | \pi, \sigma}.
\end{align*}
$\occstate^\pi$ can be efficiently calculated as it is the unique solution to the Bellman flow constraint
\begin{align}
    \occstate^\pi(s) = 
    (1-\gamma)\cdot\sigma(s) + 
    \gamma \sum_{s'} P(s', \pi(s'), s)\occstate^\pi(s').
    \label{eq.bellman.occstate}
\end{align}
An important result that we utilize repeatedly in our proofs, is the following lemma 
that relates the score difference $\score^{\pi_1, R} - \score^{\pi_2, R}$ for two policies $\pi_1, \pi_2$ to their Q-values through the state occupancy measure
$\occstate$.

\begin{lemma}{\citep{schulman2015trust}}\label{lm.score_diff_q_value}
    Any two deterministic policies, $\pi_1$ and  $\pi_2$, and reward function $R$  satisfy:
    \begin{align*}
     \score^{\pi_1, R}-
        \score^{\pi_2, R}
        =\sum_{s\in S} \occstate^{\pi_1}(s)
        \big(
        Q^{\pi_2, R}(s, \pi_1(s))-
            Q^{\pi_2, R}(s, \pi_2(s)) 
        \big).
    \end{align*}
\end{lemma}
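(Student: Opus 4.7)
The plan is to prove this performance difference identity via the standard telescoping argument (originally due to Kakade and Langford and reproduced in Schulman et al.). The key object is the one-step temporal-difference error of $\pi_1$'s trajectories evaluated against $\pi_2$'s value function.

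First, I would rewrite both sides using the state-value function. Recall that $V^{\pi_2,R}(s) = Q^{\pi_2, R}(s, \pi_2(s))$, and that $\score^{\pi_2, R} = (1-\gamma)\,\expctu{s_1 \sim \sigma}{V^{\pi_2, R}(s_1)}$. The goal becomes
\begin{align*}
\score^{\pi_1, R} - (1-\gamma)\,\expctu{s_1\sim\sigma}{V^{\pi_2, R}(s_1)}
= \sum_{s} \occstate^{\pi_1}(s)\bigl(Q^{\pi_2, R}(s, \pi_1(s)) - V^{\pi_2, R}(s)\bigr).
\end{align*}

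Second, I would apply the telescoping identity along a trajectory $(s_1, a_1, s_2, a_2, \ldots)$ generated by executing $\pi_1$ from $s_1 \sim \sigma$. Specifically, since $\sum_{t=1}^{\infty}\gamma^{t-1} V^{\pi_2, R}(s_t) - \sum_{t=1}^{\infty}\gamma^{t} V^{\pi_2, R}(s_{t+1}) = V^{\pi_2, R}(s_1)$, one can write
\begin{align*}
\sum_{t=1}^{\infty} \gamma^{t-1} R(s_t, a_t) \;-\; V^{\pi_2, R}(s_1)
\;=\; \sum_{t=1}^{\infty}\gamma^{t-1}\bigl(R(s_t, a_t) + \gamma V^{\pi_2, R}(s_{t+1}) - V^{\pi_2, R}(s_t)\bigr).
\end{align*}
Taking expectation under $\pi_1$, and conditioning on $(s_t, a_t)$ to evaluate the inner expectation of $R(s_t,a_t)+\gamma V^{\pi_2,R}(s_{t+1})$, the Bellman relation for $Q^{\pi_2,R}$ yields $R(s_t,a_t) + \gamma \sum_{s'} P(s_t,a_t,s') V^{\pi_2,R}(s') = Q^{\pi_2,R}(s_t, a_t)$. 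Thus the right-hand side collapses to $\expctu{\pi_1}{\sum_{t} \gamma^{t-1}\bigl(Q^{\pi_2,R}(s_t, a_t) - V^{\pi_2,R}(s_t)\bigr)}$.

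Third, I would multiply through by $(1-\gamma)$ and rewrite the expectation as a sum over states weighted by the discounted visitation frequency. Since $\pi_1$ is deterministic, $a_t = \pi_1(s_t)$, so
\begin{align*}
(1-\gamma)\,\expctu{\pi_1}{\sum_{t=1}^{\infty}\gamma^{t-1}\bigl(Q^{\pi_2,R}(s_t, \pi_1(s_t)) - V^{\pi_2,R}(s_t)\bigr)}
= \sum_{s} \occstate^{\pi_1}(s)\bigl(Q^{\pi_2,R}(s, \pi_1(s)) - Q^{\pi_2,R}(s, \pi_2(s))\bigr),
\end{align*}
using the definition of $\occstate^{\pi_1}$ and the identity $V^{\pi_2,R}(s) = Q^{\pi_2,R}(s, \pi_2(s))$. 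Combined with the first step, this gives exactly the claim.

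There is no serious obstacle: the only subtlety is a standard one, namely justifying that the telescoping series and the interchange of sum and expectation are valid. Since $R$ is bounded on the finite state-action space and $\gamma < 1$, absolute convergence is immediate, so Fubini applies without issue. The entire proof is essentially three lines plus the Bellman substitution.
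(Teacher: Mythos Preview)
Your proof is correct and is precisely the standard telescoping argument from the cited reference. Note that the paper itself does not give a proof of this lemma at all; it simply states it and attributes it to \citet{schulman2015trust}, so there is nothing to compare against beyond saying that your argument matches the well-known derivation in that source.
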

For a policy $\pi$, we define $\posStates^\pi$ as
\begin{align}
    \posStates^\pi = 
     \{s| \occstate^{\pi}(s) > 0\}.
     \label{eq.s1}
\end{align}

We also prove the following lemma, which we use in several results in the following sections.

\begin{lemma}\label{lm.same_occupancy}
Let $\pi, \pi'$ be deterministic policies such that $\pi(s) = \pi'(s)$ for all $s\in \posStates^\pi \cap \posStates^{\pi'}$.
Then $\occstate^{\pi} = \occstate^{\pi'}$.
\end{lemma}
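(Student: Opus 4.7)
The plan is to prove that the two policies induce the same time-$t$ state distribution for every $t \geq 1$, and then conclude by summing the discounted distributions. Write $d_t^{\pi}(s) = \Pr[s_t = s \mid \pi, \sigma]$, so that, unrolling the definition,
\begin{equation*}
\occstate^{\pi}(s) = (1-\gamma)\sum_{t=1}^{\infty}\gamma^{t-1} d_t^{\pi}(s),
\end{equation*}
and analogously for $\pi'$. If I can establish that $d_t^{\pi} = d_t^{\pi'}$ for every $t$, then $\occstate^{\pi} = \occstate^{\pi'}$ follows termwise.

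I would prove $d_t^{\pi} = d_t^{\pi'}$ by induction on $t$. The base case $t=1$ is immediate since $d_1^{\pi}(s) = \sigma(s) = d_1^{\pi'}(s)$. For the inductive step, I use the standard one-step recursion
\begin{equation*}
d_{t+1}^{\pi}(s) = \sum_{s'} d_t^{\pi}(s')\, P(s', \pi(s'), s),
\end{equation*}
and the analogous equation for $\pi'$. By the inductive hypothesis $d_t^{\pi}(s') = d_t^{\pi'}(s')$ for all $s'$, so it suffices to show that for every $s'$ with $d_t^{\pi}(s') > 0$, we have $\pi(s') = \pi'(s')$; then the two one-step recursions give identical right-hand sides.

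Here is where the hypothesis of the lemma comes in, and this is the one subtle point. A state $s'$ with $d_t^{\pi}(s') > 0$ contributes positively to $\occstate^{\pi}(s')$, so $s' \in \posStates^{\pi}$. By the inductive hypothesis $d_t^{\pi'}(s') = d_t^{\pi}(s') > 0$ as well, so $s' \in \posStates^{\pi'}$. Hence $s' \in \posStates^{\pi} \cap \posStates^{\pi'}$, and the assumption yields $\pi(s') = \pi'(s')$, completing the induction. Summing over $t$ with the discount factor $(1-\gamma)\gamma^{t-1}$ then delivers the conclusion.

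The only potential obstacle would have been that the hypothesis controls the two policies only on the \emph{intersection} $\posStates^{\pi} \cap \posStates^{\pi'}$, not on $\posStates^{\pi}$ or $\posStates^{\pi'}$ separately; a direct attempt to substitute $\occstate^{\pi}$ into the Bellman flow equation \eqref{eq.bellman.occstate} for $\pi'$ does not immediately close because a state could conceivably lie in $\posStates^{\pi}\setminus \posStates^{\pi'}$. The time-indexed induction sidesteps this, since the inductive hypothesis forces any state with positive probability at time $t$ under one policy to also have positive probability under the other.
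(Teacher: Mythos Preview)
Your proof is correct and uses the same core device as the paper---induction on the time-$t$ state distributions $d_t^{\pi}$---but it is more direct. The paper first proves the one-sided special case where $\pi(s)=\pi'(s)$ for all $s\in\posStates^{\pi}$ (using an inequality $\Pr[s_t=s\mid\pi]\le\Pr[s_t=s\mid\pi']$ together with the fact that both sides sum to $1$), and then reduces the symmetric intersection hypothesis to this case via an auxiliary policy $\tilde{\pi}$ that copies $\pi$ on $\posStates^{\pi}$ and $\pi'$ elsewhere. Your argument skips both detours: because your inductive hypothesis is already the symmetric equality $d_t^{\pi}=d_t^{\pi'}$, any state with positive mass at time $t$ under one policy automatically has positive mass under the other, placing it in $\posStates^{\pi}\cap\posStates^{\pi'}$ and letting you invoke the lemma's hypothesis directly. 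This buys you a shorter, single-pass proof; the paper's route has the minor advantage of isolating the one-sided version as a standalone fact, but that version is not used elsewhere.
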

\begin{proof}
    \textbf{Part 1:}
    We first prove a simpler version of the Lemma; we assume that 
    $\pi(s) = \pi'(s)$ for all $s\in \posStates^\pi$.
    We then show how to extend the result to the general case.

    We prove by induction on $t$ that for all states $s$,
    \begin{align*}
    \Pr{s_t = s | \pi} = \Pr{s_t = s | \pi'}
    \end{align*}
    where $s_t$ denotes the state visited at time $t$.
    
    The claim holds for $t=1$ as the initial probabilities are sampled from $\sigma$. Assuming the claim holds for $t$,
    \begin{align*}
        \Pr{s_t = s | \pi} &= \sum_{s'} \Pr{s_{t-1}=s'|\pi}\Pr{s', \pi(s'), s}
        \\&=
         \sum_{s': \Pr{s_{t-1} = s' | \pi} > 0} \Pr{s_{t-1}=s'|\pi}\Pr{s', \pi(s'), s}
        \\&\overset{(i)}{=}
        \sum_{s': \Pr{s_{t-1} = s' | \pi} > 0} \Pr{s_{t-1}=s'|\pi'}\Pr{s', \pi(s'), s}
        \\&\overset{(ii)}{=}
        \sum_{s': \Pr{s_{t-1} = s' | \pi} > 0} \Pr{s_{t-1}=s'|\pi'}\Pr{s', \pi'(s'), s}
        \\&\le 
        \sum_{s'}
        \Pr{s_{t-1}=s'|\pi'}\Pr{s', \pi'(s'), s}
        \\&=\Pr{s_{t}=s | \pi'},
    \end{align*}
    where $(i)$ follows from the induction hypotheses
    and $(ii)$ follows from the fact that
    if $\Pr{s_{t-1} = s' | \pi} > 0$, then 
    $\occstate^{\pi}(s') > 0$ and therefore
    $\pi(s) = \pi'(s)$.
    Since
    \begin{align*}
        \sum_{s}\Pr{s_t=s|\pi} = 
        \sum_{s}\Pr{s_t=s|\pi'} = 1,
    \end{align*}
    it follows that
    \begin{align*}
    \Pr{s_t = s | \pi} = \Pr{s_t = s | \pi'}.
    \end{align*}
    Therefore,
    \begin{align*}
        \occstate^{\pi}(s) = 
        \sum_{t=1}^{\infty}
        \Pr{s_t = s | \pi} = 
        \sum_{t=1}^{\infty}
        \Pr{s_t = s | \pi'} = 
        \occstate^{\pi'}(s).
    \end{align*}

    \textbf{Part 2:}
    Now, in order to obtain the general case, define $\tilde{\pi}$ as follows.
    \begin{align*}
      \tilde{\pi}(s)
      := \begin{cases}
        \pi(s) \quad&\text{if} \quad s\in \posStates^{\pi}\\
        \pi'(s) \quad&\text{otherwise}
     \end{cases},
    \end{align*}
    By the simpler version just proved, since
    $\tilde{\pi}(s)= \pi(s)$ for all $s\in \posStates^{\pi}$, 
    $\occstate^{\tilde{\pi}} = \occstate^\pi$.
    ~\\
    Furthermore, 
    for all $s\in \posStates^{\pi'}$,
    either $s\in \posStates^{\pi}$, in which case
    $\pi(s)=\pi'(s)$ by assumption and therefore
    by definition of $\tilde{\pi}=\pi(s)=\pi'(s)$,
    or $s\notin \posStates^{\pi}$, in which case,
    $\tilde{\pi}(s) = \pi'(s)$. Since $\tilde{\pi}(s) = \pi'(s)$
    in both cases,
    it follows that by the simpler version just proved,
    $\occstate^{\tilde{\pi}} = \occstate^{\pi'}$. 
    ~\\
    Therefore
    $\occstate^{\pi} = \occstate^{\tilde{\pi}} = \occstate^{\pi'}$ as claimed.
\end{proof}
\noindent



\section{Approximately Solving the Optimization Problem 
\eqref{prob.reward_poisoning_attack}
}\label{app.approx.reward_poisoning_attack}

In this section, we show to efficiently approximate the optimization problem \eqref{prob.reward_poisoning_attack}.
In order to obtain the approximate solution, we will consider the following optimization problem
\begin{align}
	\label{prob.rp_approx}
	\tag{P5-ATK}
	\min_{R} \quad &\norm{R - \overline R}_{2}\\
	\label{constraint.rqv}
    \text{s.t.} \quad &\forall s, a: 
    Q(s, a) = R(s, a) + \gamma \sum_{s'}P(s, a, s') V(s')\\
    \label{constraint.ge}
    &
    \forall s \in \posStates^\targetpi, a \ne \targetpi(s): 
    Q(s, \targetpi(s)) \ge
    Q(s, a)  + \epsilon'(s, a)\\
    \label{constraint.vqone}
    &
    \forall s \in \posStates^\targetpi: 
    V(s) = Q(s, \targetpi(s))\\\label{constraint.vqzero}
    &\forall s \notin \posStates^\targetpi, a:
    V(s) \ge Q(s, a),
\end{align}
where $\epsilon'(s, a) \ge 0$ are arbitrary non-negative values that will be specified later.
We first show that the constraints of the optimization problem effectively ensure that $V$ and $Q$ can be thought of as the $V^{*, R}$ and $Q^{*, R}$ vectors respectively. Note that this is not trivial since for $s\notin \posStates^\targetpi$, the constraint $V(s) = \max_{a}Q(s, a)$ is not explicitly enforced. Formally, we have the following lemma.
\begin{lemma}\label{lm.effective_q_v}
Let $\epsilon'(s, a)$ be a non-negative vector.
If the vectors $(R, Q, V)$ satisfy the constraints of the optimization problem \eqref{prob.rp_approx}, the vectors
$(R, Q^{*, R}, V^{*, R})$ satisfy the constraints as well.
\end{lemma}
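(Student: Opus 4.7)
The plan is to reduce verification of the four constraints for $(R, Q^{*, R}, V^{*, R})$ to a single claim: that $V(s) = V^{*, R}(s)$ for every $s \in \posStates^\targetpi$ and that $Q(s, \targetpi(s)) = Q^{*, R}(s, \targetpi(s))$ for the same states. Once this is in hand, \eqref{constraint.vqone} transfers directly from $(Q, V)$ to $(Q^{*, R}, V^{*, R})$, \eqref{constraint.ge} follows by combining the assumed gap with the pointwise bound $Q \ge Q^{*, R}$ (to be established), \eqref{constraint.rqv} is the Bellman optimality identity satisfied by $(Q^{*, R}, V^{*, R})$ by definition, and \eqref{constraint.vqzero} is immediate from the definition of $V^{*, R}$.

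First I would show the global inequality $V \ge V^{*, R}$ by proving that $V$ dominates every policy's value function. Fix an arbitrary deterministic policy $\pi$ and state $s$. If $s \in \posStates^\targetpi$, then \eqref{constraint.vqone} combined with \eqref{constraint.ge} (using $\epsilon'(s,a) \ge 0$) gives $V(s) = Q(s, \targetpi(s)) \ge Q(s, \pi(s))$; if $s \notin \posStates^\targetpi$, the same inequality is a direct instance of \eqref{constraint.vqzero}. Expanding $Q(s, \pi(s))$ via \eqref{constraint.rqv} then yields $V(s) \ge R(s, \pi(s)) + \gamma \sum_{s'} P(s, \pi(s), s') V(s')$, i.e., $V$ is a super-solution of the Bellman operator for $\pi$; by monotonicity and contraction this implies $V \ge V^{\pi, R}$, and taking a supremum over $\pi$ gives $V \ge V^{*, R}$. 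Pushing this inequality through \eqref{constraint.rqv} delivers $Q \ge Q^{*, R}$ pointwise.

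Second, I would establish the matching upper bound on $\posStates^\targetpi$ by exploiting that this set is invariant under $\targetpi$-transitions: the Bellman flow equation \eqref{eq.bellman.occstate} implies that if $s \in \posStates^\targetpi$ and $P(s, \targetpi(s), s') > 0$, then $s' \in \posStates^\targetpi$ as well. On this invariant set, combining \eqref{constraint.rqv} with \eqref{constraint.vqone} gives $V(s) = R(s, \targetpi(s)) + \gamma \sum_{s'} P(s, \targetpi(s), s') V(s')$, which is exactly the Bellman fixed-point equation for the policy $\targetpi$ restricted to an absorbing subset; its unique solution is $V^{\targetpi, R}$. Hence $V(s) = V^{\targetpi, R}(s) \le V^{*, R}(s)$ on $\posStates^\targetpi$, and together with step one we obtain $V(s) = V^{*, R}(s)$ and therefore $Q(s, \targetpi(s)) = Q^{*, R}(s, \targetpi(s))$ for all $s \in \posStates^\targetpi$.

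The main obstacle I anticipate is the asymmetry of \eqref{constraint.vqzero}, which is only a one-sided bound and in general allows $V$ to exceed $V^{*, R}$ off $\posStates^\targetpi$; a priori this could propagate into an inflated $Q$ and break any attempt to carry the gap constraint over to $Q^{*, R}$. The resolution is that \eqref{constraint.ge} and \eqref{constraint.vqone} are only indexed over $s \in \posStates^\targetpi$, and the invariance argument above confines the relevant Bellman recursion to precisely the set where $V$ is already pinned down. With the equalities of step two in place, \eqref{constraint.ge} for $(R, Q^{*, R}, V^{*, R})$ follows from the chain $Q^{*, R}(s, \targetpi(s)) = Q(s, \targetpi(s)) \ge Q(s, a) + \epsilon'(s, a) \ge Q^{*, R}(s, a) + \epsilon'(s, a)$, and \eqref{constraint.vqone} is immediate, completing the verification.
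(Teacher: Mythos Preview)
Your argument is correct and follows a genuinely different route from the paper. The paper runs value iteration starting from $(Q,V)$ and shows by induction that every iterate $(Q^t,V^t)$ continues to satisfy the four constraints; the key inductive observations are that $V^t$ stays constant on $\posStates^\targetpi$ (using the same invariance fact you identified) while it can only decrease outside, and then closedness of the constraint set under limits carries the conclusion to $(Q^{*,R},V^{*,R})$. Your approach bypasses the iteration entirely: you first pin down the global inequality $V\ge V^{*,R}$ by a super-solution argument, then pin down the matching equality on $\posStates^\targetpi$ via the contraction/uniqueness of the $\targetpi$-Bellman operator on that absorbing subset, and finally verify the four constraints in one shot. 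Both proofs hinge on the same structural observation (that $\posStates^\targetpi$ is closed under $\targetpi$-transitions, so the $\targetpi$-Bellman recursion decouples there); your version is more direct and avoids the limit passage, while the paper's version is more operational and makes explicit that value iteration itself never leaves the feasible set.
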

\begin{proof}
    Starting with $Q, V$, we run the standard value iteration algorithm for finding $Q^{*, R}, V^{*, R}$ and claim that at
  the end of each step, all the constraints would still be satisfied.
  Concretly, We set $V^{0} = V$ and $Q^{0} = Q$ and for all $t \ge 0$: 
  \begin{gather*}
    \label{eq:value_iter}
    V^{t + 1}(s, a) = \max_{a} Q^t(s, a)\\
    Q^{t + 1}(s, a) = R(s, a) + \gamma \sum_{s'} P(s, a, s') V^{t + 1}(s').
  \end{gather*}
  We claim that for all $t \ge 0$,  the vectors $(R, Q^t, V^t)$ satisfty the constraints
  \eqref{constraint.rqv} to \eqref{constraint.vqzero}.
  We prove that claim by induction on $t$.

  For $t=0$, the claim holds by
  assumption. Assume that the claim holds for $t-1$; we will show
  that it holds for $t$ as well by proving
  the constraints \eqref{constraint.rqv}, \eqref{constraint.vqone},
  \eqref{constraint.ge} and \eqref{constraint.vqzero} repsectively.
  Constraint \eqref{constraint.rqv} holds by definition of $Q^{t}(s, a)$.
  For constraint \eqref{constraint.vqone},
  observe that for all $s \in \posStates^\targetpi$,
  \begin{align}
    V^{t}(s) &\overset{(i)}{=} \max_{a} Q^{t - 1}(s, a) \notag
           \\&\overset{(ii)}{=} Q^{t-1}(s, \targetpi(s)) \notag
           \\&\overset{(iii)}{=} V^{t-1}(s)\label{eq.V_t_equals_v_t_minus_1}
  \end{align}
  where $(i)$ follows from the definition of $V^t$, 
  $(ii)$ follows from \eqref{constraint.ge} 
  and $(iii)$ follows from
  \eqref{constraint.vqone} for $V^{t-1}$ and $Q^{t-1}$.

  Now observe that if $s\in \posStates^\targetpi$ and $s'\notin \posStates^\targetpi$, 
  then
  $P(s, \targetpi(s), s') = 0$ as otherwise
  given \eqref{eq.bellman.occstate},
  $\occstate^\targetpi(s')$ would be lower bounded by
  $\occstate^\targetpi(s) \cdot P(s, \targetpi(s), s') > 0$, contradicting
  the assumption $s'\notin \posStates^{\targetpi}$.
  Therefore, for all $s\in \posStates^\targetpi$,
  \begin{align}
    Q^{t}(s, \targetpi(s)) \notag
    &= R(s, \targetpi(s)) + \gamma \sum_{s'} P(s, \targetpi(s), s') V^{t}(s'). \notag
  \\&= R(s, \targetpi(s)) + \gamma \sum_{s' \in \posStates^\targetpi} P(s, \targetpi(s), s') V^{t}(s'). \notag
  \\&\overset{(i)}{=}
  R(s, \targetpi(s)) + \gamma \sum_{s' \in \posStates^\targetpi} P(s, \targetpi(s), s') V^{t - 1}(s').  \notag
  \\&=
  R(s, \targetpi(s)) + \gamma \sum_{s'} P(s, \targetpi(s), s') V^{t - 1}(s').  \notag
  \\&=
  Q^{t-1}(s, \targetpi(s)), \label{eq.Q_t_equals_q_t_minus_1}
  \end{align}
  where $(i)$ follows from \eqref{eq.V_t_equals_v_t_minus_1}.
  Together with \eqref{eq.V_t_equals_v_t_minus_1}, \eqref{eq.Q_t_equals_q_t_minus_1}
  implies that 
  the constraint \eqref{constraint.vqone} still holds.

  Now observe that
  for $s\notin \posStates^\targetpi$, 
  \begin{align*}
      V^{t}(s) = \max_{a}Q^{t-1}(s, a)
      \le V^{t - 1}(s),
  \end{align*}
  where the inequality follows from the induction hypothesis; namely, 
  constraint \eqref{constraint.vqzero} for $V^{t-1}$ and $Q^{t-1}$.
  This means that for all $s$ (both when $s\notin \posStates^\targetpi$ and when
      $s\in \posStates^\targetpi$),
    $V^{t}(s) \le V^{t-1}(s)$
    and therefore given 
    \eqref{constraint.rqv},
    \begin{align*}
      Q^{t}(s, a)\le Q^{t - 1}(s, a).
    \end{align*}
   for all $s, a$. This implies that the constraint \eqref{constraint.ge} still
  holds because the LHS has stayed the same and RHS hasn't increased.
  Finally, constraint \eqref{constraint.vqzero} holds as well because it holds
  with $V^{t}, Q^{t - 1}$ by definition of $V^{t}$ and $Q^t(s, a)\le Q^{t - 1}(s,
      a)$. 

  Since $( Q^t, V^t )$ converge to $( Q^{*, R}, V^{*,R} )$ and the constraints
  characterize a closed set, $( R, Q^{*, R}, V^{*,R} )$ also satisfy the constraints.
\end{proof}
While the value of $\epsilon'$ can be arbitrary in \eqref{constraint.ge}, in our analysis we will mainly consider
$\epsilon'_{\targetpi}$, which for
a non-negative number $\epsilon \ge 0$ and policy $\targetpi$ we define as
\begin{align}
    {\epsilon'_{\targetpi} (\tilde{s}, \tilde{a})}
     := \begin{cases}
     \dfrac{\epsilon}{
     \min_{\pi \in D(\targetpi, \tilde{s}, \tilde{a})}\occstate^{\pi}(\tilde{s})
     } \quad &\text{if}\quad
     \tilde{s} \in \posStates^\targetpi \text{ and }
     \tilde{a} \ne \pi(s)\\
     0 \quad & \text{otherwise}.
     \end{cases},
     \label{eq.epsilon_prime}
\end{align}
where
\begin{align*}
D(\targetpi, \tilde{s}, \tilde{a}) = 
\big\{\pi:
\pi(\tilde{s}) = \tilde{a}
\text{ and }
\pi(s) = \targetpi(s) \text{ for all } s\in \posStates^\targetpi \backslash \{\tilde{s}\}
\big\}.
\end{align*}
Of course, in order for the above definition to be valid, we need to ensure that the denominator is non-zero, i.e. 
$\min_{\pi \in D(\targetpi, \tilde{s}, \tilde{a})}\occstate^\pi(\tilde{s}) > 0$. The following lemma ensures that this is the case.

\begin{lemma}\label{lm.mu_min_pos}
Let $\targetpi\in \PiDet$ be a deterministic policy.
Define
$\posStates^\targetpi$ as in \eqref{eq.s1}.
Let $\tilde{s}$ be an arbitrary state in $\posStates^\targetpi$ and $\pi$ be a deterministic policy such that
\begin{align*}
    \pi(s) = \targetpi(s)\quad  \text{ for all } \quad
    s\in \posStates^\targetpi \backslash \{\tilde{s}\}.
\end{align*}
Then
\begin{align*}
    \occstate^{\pi}(\tilde{s}) > 0.
\end{align*}
\end{lemma}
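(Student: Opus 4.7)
The plan is to exhibit a concrete sample path reaching $\tilde{s}$ whose probability is positive under both $\targetpi$ and $\pi$, then argue that this forces $\occstate^{\pi}(\tilde{s}) > 0$. The key observation is that $\pi$ and $\targetpi$ only differ on states outside $\posStates^{\targetpi}\setminus\{\tilde{s}\}$, so a trajectory under $\targetpi$ that stays inside $\posStates^{\targetpi}$ and does not revisit $\tilde{s}$ until its final step will have the same probability under $\pi$.

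First, I would use $\tilde{s}\in\posStates^{\targetpi}$ to unwrap the definition of $\occstate^{\targetpi}$: since $\occstate^{\targetpi}(\tilde{s})>0$, there exists some time $t\ge 1$ with $\Pr{s_t=\tilde{s}\mid \targetpi,\sigma}>0$. This probability is a sum over sequences $(s_1,\ldots,s_t=\tilde{s})$ of the form $\sigma(s_1)\prod_{i=1}^{t-1}P(s_i,\targetpi(s_i),s_{i+1})$, so at least one such sequence has positive contribution. Among all such positive-probability sequences that terminate at $\tilde{s}$, pick one with minimal length $t$; this minimality ensures $s_i\ne\tilde{s}$ for every $i<t$ (otherwise the prefix would give a shorter witness).

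Second, I would verify that each intermediate state $s_i$ (for $i<t$) lies in $\posStates^{\targetpi}\setminus\{\tilde{s}\}$. Indeed, the prefix probability $\sigma(s_1)\prod_{j<i}P(s_j,\targetpi(s_j),s_{j+1})$ is positive, hence $\Pr{s_i=s_i\mid\targetpi,\sigma}>0$, which implies $\occstate^{\targetpi}(s_i)>0$, i.e., $s_i\in\posStates^{\targetpi}$; combined with the minimality argument, $s_i\ne\tilde{s}$. By the hypothesis on $\pi$, this yields $\pi(s_i)=\targetpi(s_i)$ for all $i<t$, and therefore
\begin{align*}
\Pr{s_t=\tilde{s}\mid \pi,\sigma}\;\ge\;\sigma(s_1)\prod_{i=1}^{t-1}P(s_i,\pi(s_i),s_{i+1})\;=\;\sigma(s_1)\prod_{i=1}^{t-1}P(s_i,\targetpi(s_i),s_{i+1})\;>\;0.
\end{align*}

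Finally, since $\occstate^{\pi}(\tilde{s})=(1-\gamma)\sum_{t\ge 1}\gamma^{t-1}\Pr{s_t=\tilde{s}\mid\pi,\sigma}$ and every term is non-negative with at least one strictly positive term from the step above, we conclude $\occstate^{\pi}(\tilde{s})>0$. The only subtle point is the first-hitting argument in the second step, which is where minimality of $t$ is essential to guarantee that the trajectory does not pass through $\tilde{s}$ prematurely and thereby hit a state where $\pi$ is allowed to deviate from $\targetpi$; this is the main (and really the only) conceptual obstacle, the rest being bookkeeping with the definition of $\occstate$. Alternatively, one could derive the same conclusion from Lemma~\ref{lm.same_occupancy} after modifying $\pi$ at $\tilde{s}$ to match $\targetpi$, but the direct trajectory-based argument above is cleaner.
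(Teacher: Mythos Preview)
Your proof is correct. The paper takes a different route: it argues by contradiction, assuming $\occstate^{\pi}(\tilde s)=0$ so that $\tilde s\notin\posStates^{\pi}$, which then makes $\pi$ and $\targetpi$ agree on all of $\posStates^{\targetpi}\cap\posStates^{\pi}$; Lemma~\ref{lm.same_occupancy} immediately gives $\occstate^{\pi}=\occstate^{\targetpi}$, contradicting $\occstate^{\targetpi}(\tilde s)>0$. Your argument is the more elementary one: you exhibit an explicit first-hitting trajectory to $\tilde s$ under $\targetpi$, observe that by minimality all states preceding $\tilde s$ lie in $\posStates^{\targetpi}\setminus\{\tilde s\}$ (where $\pi$ and $\targetpi$ coincide), and conclude the same trajectory has positive probability under $\pi$. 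The advantage of the paper's approach is brevity once Lemma~\ref{lm.same_occupancy} is in hand; the advantage of yours is self-containment, since it avoids appealing to that lemma (whose own proof is in any case a trajectory induction of the same flavor). Your closing remark about an alternative via Lemma~\ref{lm.same_occupancy} is close to what the paper actually does, though the paper uses contradiction rather than first modifying $\pi$ at $\tilde s$.
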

\begin{proof}
Assume that this is not the case and
$\occstate^{\pi}(\tilde{s}) = 0$.
Then
$s\notin \posStates^{\pi}$ and therefore
$\pi(s) = \targetpi(s)$ for all $\posStates^{\targetpi} \cap \posStates^{\pi}$.
Lemma 
\ref{lm.same_occupancy} (from section \nameref{app.background}) implies that
$\occstate^{\pi} = \occstate^{\targetpi}$ which is a contradiction since
$\occstate^{\targetpi}(s) > 0 = \occstate^{\pi}(\tilde{s})$.
Therefore the initial assumption was wrong and $\occstate^\pi(s) > 0$.
\end{proof}
\begin{proposition}\label{lm.rp_approx}
Let $\epsilon$ be a non-negative number\footnote{The case of $\epsilon=0$ is also covered by the lemma.}.
Denote by $\widehat{R}^{\targetpi}$ the solution of the optimization problem \eqref{prob.reward_poisoning_attack} and let
$\widehat{R}'$ be the solution to \eqref{prob.rp_approx} with 
$\epsilon'=\epsilon'_{\targetpi}$ where
$\epsilon'_{\targetpi}$ is defined as in Equation \eqref{eq.epsilon_prime}.
Then $\widehat{R}'$ satisfies the constraints of \eqref{prob.reward_poisoning_attack} and
\begin{align*}
0 \le \norm{\widehat{R}' -\overline{R}}_2 - 
\norm{\widehat{R}^{\targetpi} - \overline{R}}_2
\le 
\norm{\epsilon'}_2
\end{align*}
\end{proposition}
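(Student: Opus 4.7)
The plan is to prove the two inequalities separately. The lower bound $\norm{\widehat R^{\targetpi} - \overline R}_2 \leq \norm{\widehat R' - \overline R}_2$ is immediate once I show that $\widehat R'$ is feasible for \eqref{prob.reward_poisoning_attack}, since $\widehat R^{\targetpi}$ is the optimum of that problem. The upper bound will follow by exhibiting a point $\tilde R$ that is feasible for \eqref{prob.rp_approx} and satisfies $\norm{\tilde R - \widehat R^{\targetpi}}_2 = \norm{\epsilon'}_2$, and then invoking the optimality of $\widehat R'$ in \eqref{prob.rp_approx} together with the triangle inequality.

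For the upper-bound construction, I would define $\tilde R(s, a) = \widehat R^{\targetpi}(s, a) - \epsilon'_{\targetpi}(s, a)$ for $s \in \posStates^{\targetpi}$ and $a \neq \targetpi(s)$, and $\tilde R = \widehat R^{\targetpi}$ elsewhere. A preliminary observation is that for each such $(s,a)$, $Q^{*, \widehat R^{\targetpi}}(s, \targetpi(s)) > Q^{*, \widehat R^{\targetpi}}(s, a)$ strictly; otherwise there would be an optimal (hence $\epsilon$-optimal) policy taking $a$ at $s$, contradicting the constraint of \eqref{prob.reward_poisoning_attack}. A short value-iteration check starting from $V^{*, \widehat R^{\targetpi}}$ then shows $V^{*, \tilde R} = V^{*, \widehat R^{\targetpi}}$, because the reduced entries are never selected by a greedy policy. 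Consequently, on the modified pairs $Q^{*, \tilde R}(s, \targetpi(s)) - Q^{*, \tilde R}(s, a)$ exceeds the corresponding gap under $\widehat R^{\targetpi}$ by exactly $\epsilon'_{\targetpi}(s, a)$, verifying constraint \eqref{constraint.ge}; constraints \eqref{constraint.rqv}, \eqref{constraint.vqone}, and \eqref{constraint.vqzero} are inherited from the fact that $(V^{*, \tilde R}, Q^{*, \tilde R})$ is a proper optimal pair, and $\norm{\tilde R - \widehat R^{\targetpi}}_2 = \norm{\epsilon'}_2$ holds by construction.

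For the feasibility of $\widehat R'$, Lemma \ref{lm.effective_q_v} lets me take $Q = Q^{*, \widehat R'}$ and $V = V^{*, \widehat R'}$; constraint \eqref{constraint.ge} then makes $\targetpi$ strictly greedy on $\posStates^{\targetpi}$ with Q-gap at least $\epsilon'_{\targetpi}$, and since $\posStates^{\targetpi}$ is closed under $\targetpi$'s transitions (as in the proof of Lemma \ref{lm.effective_q_v}), Lemma \ref{lm.same_occupancy} shows that $\targetpi$ is globally optimal under $\widehat R'$. For any $\pi$ that deviates at some $\tilde s \in \posStates^{\targetpi}$, applying Lemma \ref{lm.score_diff_q_value} with $\pi_2$ an optimal policy (so $Q^{\pi_2, \widehat R'} = Q^{*, \widehat R'}$) decomposes $\score^{\targetpi, \widehat R'} - \score^{\pi, \widehat R'}$ into non-negative terms, with each deviation state $s \in \posStates^{\targetpi}$ contributing at least $\occstate^{\pi}(s) \cdot \epsilon'_{\targetpi}(s, \pi(s))$. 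Substituting the definition of $\epsilon'_{\targetpi}$, it suffices to identify at least one deviation state $\tilde s$ with $\occstate^{\pi}(\tilde s) \geq \min_{\pi'' \in D(\targetpi, \tilde s, \pi(\tilde s))} \occstate^{\pi''}(\tilde s)$, at which point the score drop is at least $\epsilon$ and therefore $\pi \notin \optEpsDet(\widehat R')$.

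The hard part will be this last step: the minimum defining $\epsilon'_{\targetpi}$ is calibrated over policies in $D(\targetpi, \tilde s, \tilde a)$ that deviate at only a single state of $\posStates^{\targetpi}$, so for $\pi$ with multiple deviations the required occupancy comparison is not automatic. My plan is to construct a witness $\pi'' \in D(\targetpi, \tilde s, \pi(\tilde s))$ that agrees with $\pi$ outside of $\posStates^{\targetpi}$, and then use Lemma \ref{lm.same_occupancy} and Lemma \ref{lm.mu_min_pos} to relate $\occstate^{\pi''}(\tilde s)$ and $\occstate^{\pi}(\tilde s)$ at a carefully chosen bottleneck state $\tilde s$. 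The remaining ingredients---the Bellman and value-iteration verifications for $\tilde R$, the triangle inequality step, and the lower bound which is immediate from feasibility---are routine.
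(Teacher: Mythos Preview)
Your upper bound argument is essentially the paper's: subtract $\epsilon'_{\targetpi}$ from $\widehat R^{\targetpi}$ on the relevant state-action pairs, verify the resulting triple is feasible for \eqref{prob.rp_approx}, and apply the triangle inequality. The only cosmetic difference is that you work with $(Q^{*,\tilde R},V^{*,\tilde R})$ while the paper works with $(Q^{\tilde\pi_\dagger,\cdot},V^{\tilde\pi_\dagger,\cdot})$ for a greedy policy $\tilde\pi_\dagger$; the paper's route avoids needing the strict Q-gap, but both are fine.

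The feasibility part, however, has a real gap. Your plan is to find a single deviation state $\tilde s\in\posStates^{\targetpi}$ for which
\[
\occstate^{\pi}(\tilde s)\;\ge\;\min_{\pi''\in D(\targetpi,\tilde s,\pi(\tilde s))}\occstate^{\pi''}(\tilde s),
\]
so that the one term $\occstate^{\pi}(\tilde s)\cdot\epsilon'_{\targetpi}(\tilde s,\pi(\tilde s))$ already gives $\epsilon$. Such a state need not exist. Take $\posStates^{\targetpi}=\{s_0,s_1,s_2\}$ with deterministic start at $s_0$, $\targetpi(s_0)$ leading to $s_1$ or $s_2$ each with probability $1/2$, $\targetpi(s_1)$ leading deterministically to $s_2$, and $\targetpi(s_2)$ to $s_1$. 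Let $\pi$ agree with $\targetpi$ at $s_0$ but deviate at both $s_1$ and $s_2$ to absorbing states outside $\posStates^{\targetpi}$. Then $\occstate^{\pi}(s_1)=(1-\gamma)\gamma/2$, while every $\pi''\in D(\targetpi,s_1,\pi(s_1))$ must follow $\targetpi$ at $s_2$, reaching $s_1$ also along the path $s_0\to s_2\to s_1$, giving $\occstate^{\pi''}(s_1)=(1-\gamma)\gamma(1+\gamma)/2$; so $\occstate^{\pi}(s_1)<\min_{\pi''}\occstate^{\pi''}(s_1)$, and by symmetry the same holds at $s_2$. No bottleneck state works, and matching $\pi''$ with $\pi$ outside $\posStates^{\targetpi}$ does not help since those states are reached only after $s_1$.

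The paper sidesteps this by \emph{not} trying to lower bound a single term. It argues by induction on the number of states in $\posStates^{\targetpi}$ at which $\pi$ deviates from $\targetpi$. The base case (one deviation) is exactly the definition of $\epsilon'_{\targetpi}$. For the step, apply Lemma~\ref{lm.score_diff_q_value} with the roles reversed (occupancy under $\targetpi$, Q-values under $\pi$) to locate a state $s\in\posStates^{\targetpi}$ where switching $\pi(s)$ back to $\targetpi(s)$ does not decrease the score; the resulting policy has one fewer deviation, so the induction hypothesis applies. Your decomposition and the single-deviation case are correct; replacing the bottleneck-state search with this inductive reduction closes the gap.
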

\begin{proof}
Before we proceed with the proof, note that
$\min_{\pi \in D(\targetpi, \tilde{s}, \tilde{a})}\occstate^{\pi}(\tilde{s}) > 0$ because of Lemma
\ref{lm.mu_min_pos}.
~\\
\textbf{Part 1:} 
We first prove that $\widehat{R}'$ satisfies the constraints of \eqref{prob.reward_poisoning_attack}, which automatically proves the left inequality by optimality of $\widehat{R}^{\targetpi}$. 
Set $Q=Q^{*, \widehat{R}'}$ and $V=V^{*, \widehat{R}'}$. Given Lemma \ref{lm.effective_q_v}, 
$(R, Q, V)$ satisfy the constraints of \eqref{prob.rp_approx}.
Define $\tilde{\pi}_{\dagger}\in \PiDet$ as
\begin{align}
    \tilde{\pi}_{\dagger}(s) = \argmax Q(s, a).
    \label{eq.def_tilde_pi_dagger}
\end{align}
It is clear that $Q^{\tilde{\pi}_{\dagger}, \widehat{R}'}=Q$.
Now note that since $\argmax Q(s, a) = \targetpi(s)$ for all states $s\in \posStates^\targetpi$, 
Lemma \ref{lm.score_diff_q_value} implies that
\begin{align*}
    \score^{\targetpi, \widehat{R}'} - 
    \score^{\tilde{\pi}_{\dagger}, \widehat{R}'} 
    &=
    \sum_{s}\occstate^{\targetpi}(s)\big(
    Q(s, \targetpi(s)) - Q(s, \tilde{\pi}_{\dagger}(s))
    \big)
    \\&=
    \sum_{s\in \posStates^\targetpi}\occstate^{\targetpi}(s)\big(
    Q(s, \targetpi(s)) - Q(s, \tilde{\pi}_{\dagger}(s))
    \big)
    \\&= 0.
\end{align*}
Given this, it suffices to prove that
\begin{align*}
  \score^{\tilde{\pi}_{\dagger}} \ge \score^\pi + \epsilon
  \quad \text{if} \quad
  \exists s \in \posStates^\targetpi: 
  \pi(s) \ne \targetpi(s),
\end{align*}
Our proof now proceeds in a similar fashion to the proof of Lemma 1 in \cite{rakhsha2020policy}: 
We start by proving the claim policies that effectively, differ from $\targetpi$ in only a single state.
We then generalise the claim for other policies via induction.

Concretely, we first claim that if
$\pi \in D(\targetpi, \tilde{s}, \tilde{a})$ for some $\tilde{s}\in \posStates^\targetpi$
and $\tilde{a}$, 
then
$\score^{\tilde{\pi}_{\dagger}, \widehat{R}'}
-\score^{\pi, \widehat{R}'} \ge \epsilon$.
To see why, note that
\begin{align*}
\score^{\tilde{\pi}_{\dagger}, \widehat{R}'}
-\score^{\pi, \widehat{R}'}
&=
\sum_{s}\occstate^{\pi}(s)\big(
Q(s, \tilde{\pi}_{\dagger}(s)) - 
Q(s, \pi(s)
\big)
\\&=
\occstate^{\pi}(\tilde{s})\big(
Q(\tilde{s}, \tilde{\pi}_{\dagger}(\tilde{s})) - 
Q(\tilde{s}, \pi(\tilde{s})
\big)
+
\sum_{s\ne \tilde{s}}\occstate^{\pi}(s)\big(
Q(s, \tilde{\pi}_{\dagger}(s)) - 
Q(s, \pi(s)
\big)
\\&\overset{(i)}{\ge}
\occstate^{\pi}(\tilde{s})\big(
Q(\tilde{s}, \tilde{\pi}_{\dagger}(\tilde{s})) - 
Q(\tilde{s}, \pi(\tilde{s})
\big)
\\&\overset{(ii)}{\ge}
\occstate^{\pi}(\tilde{s})\epsilon'(\tilde{s}, \pi(\tilde{s}))
\\&\overset{(iii)}{\ge}
\epsilon.
\end{align*}
where $(i)$ follows from the definition of
$\tilde{\pi}_{\dagger}$,
$(ii)$ follows from \eqref{constraint.ge} since $\tilde{s}\in \posStates^\targetpi$
and $(iii)$ follows from the definition of $\epsilon'_{\targetpi}$ in
\eqref{eq.epsilon_prime}.

We now generalize the above result by showing that if $\pi$ is a policy such that there exists state $\tilde{s}\in \posStates^\targetpi$ satisfying 
$\pi(\tilde{s})\ne \targetpi(\tilde{s})$, then
$\score^{\tilde{\pi}_{\dagger}, \widehat{R}'}
-\score^{\pi, \widehat{R}'} \ge \epsilon$.
We do this by induction on 
$d_{\posStates^\targetpi}(\pi, \targetpi)$ where
we define
$d_{\tilde{S}}(\pi, \pi')$ for
$\tilde{S}\subseteq S$ 
as
\begin{align*}
    d_{\tilde{S}}(\pi, \pi')
    := 
    \left|\left\{
    s\in \tilde{S}: \pi(s)\ne \pi'(s)
    \right\}\right|.
\end{align*}
For $d_{\posStates^\targetpi}(\pi, \targetpi)=1$, the claim is already proved since this is equivalent to
$\pi \in D(\targetpi, \tilde{s}, \tilde{a})$ for some $\tilde{s} \in \posStates^\targetpi$ and $\tilde{a}$. Suppose the claim holds for
all $\pi$ satisfying
$d_{\posStates^\targetpi}(\pi, \targetpi)\le k$ where $k\ge 1$. We prove it holds for all $\pi$ satisfying
$d_{\posStates^\targetpi}(\pi, \targetpi)=k+1 $.
Let $\pi$ be one such policy and note that by Lemma \ref{lm.score_diff_q_value}, 
\begin{align*}
\score^{\tilde{\pi}_{\dagger}, \widehat{R}'}
-\score^{\pi, \widehat{R}'}
&=
\sum_{s}\occstate^{\pi}(s)\big(
Q(s, \tilde{\pi}_{\dagger}(s)) - 
Q(s, \pi(s)
\big) \ge 0.
\end{align*}
On the other hand, again by Lemma \ref{lm.score_diff_q_value},
\begin{align*}
\score^{\tilde{\pi}_{\dagger}, \widehat{R}'}
-\score^{\pi, \widehat{R}'}
&=
\sum_{s}\occstate^{\tilde{\pi}_{\dagger}}(s)\big(
Q^{\pi, \widehat{R}'}(s, \tilde{\pi}_{\dagger}(s)) -
Q^{\pi, \widehat{R}'}(s, \pi(s))
\big) 
\\&=
\sum_{s\in \posStates^{\tilde{\pi}_{\dagger}}}\occstate^{\tilde{\pi}_{\dagger}}(s)\big(
Q^{\pi, \widehat{R}'}(s, \tilde{\pi}_{\dagger}(s)) -
Q^{\pi, \widehat{R}'}(s, \pi(s))
\big).
\\&\overset{(i)}{=}
\sum_{s\in \posStates^\targetpi}\occstate^{\tilde{\pi}_{\dagger}}(s)\big(
Q^{\pi, \widehat{R}'}(s, \tilde{\pi}_{\dagger}(s)) -
Q^{\pi, \widehat{R}'}(s, \pi(s))
\big).
\end{align*}
Where $(i)$ follows from the fact that
$\posStates^\targetpi = \posStates^{\tilde{\pi}_{\dagger}}$ by
Lemma \ref{lm.same_occupancy}. Therefore, there exists a state $s\in \posStates^\targetpi$ such that 
\begin{align*}
    Q^{\pi, \widehat{R}'}(s, \tilde{\pi}_{\dagger}(s)) -
Q^{\pi, \widehat{R}'}(s, \pi(s)) \ge 0.
\end{align*}
Define the policy
$\tilde{\pi}$ as
\begin{align*}
    \tilde{\pi}(\tilde{s}) = 
    \begin{cases}
    \targetpi(\tilde{s}) \quad&\text{if} \quad\tilde{s} = s\\
    \pi(\tilde{s}) \quad&\text{otherwise}
    \end{cases}.
\end{align*}
It follows that
\begin{align*}
\score^{\tilde{\pi}, \widehat{R}'}
-\score^{\pi, \widehat{R}'}
= 
\occstate^{\tilde{\pi}}(s)
\big(
Q^{\pi, \widehat{R}'}(s, \tilde{\pi}_{\dagger}(s)) -
Q^{\pi, \widehat{R}'}(s, \pi(s))\big)
\ge 0.
\end{align*}
However, $d_{\posStates^\targetpi}(\targetpi, \tilde{\pi})\le k-1$ and therefore
\begin{align*}
\score^{\tilde{\pi}_{\dagger}, \widehat{R}'} -
    \score^{\tilde{\pi}, \widehat{R}'}\ge \epsilon.
\end{align*}
Which proves the claim.

\textbf{Part 2: }
For the right inequality, define
$\widehat{R}"$ as
$\widehat{R}" := \widehat{R}^{\targetpi} - \epsilon'_{\targetpi}$.
We claim that 
\begin{align}
  \targetpi(s) \in  \argmax_a Q^{*, \widehat{R}^{\targetpi}}(s, a)
  \quad \text{for all} \quad
  s\in \posStates^\targetpi.
  \label{eq.dec9_1920}
\end{align}
To prove this,
define $\tilde{\pi}_{\dagger}\in \PiDet$ as
\begin{align*}
    \tilde{\pi}_{\dagger}(s) = \argmax_a Q^{*, \widehat{R}^{\targetpi}}(s, a).
\end{align*}
Now note that
\begin{align*}
  0 &\overset{(i)}{=}
  \score^{\targetpi, \widehat{R}^{\targetpi} } - 
  \score^{\tilde{\pi}_{\dagger}, \widehat{R}^{\targetpi}}
  \\&= 
  \sum_{s} \occstate^\targetpi(s) \cdot \left(
    Q^{\tilde{\pi}_{\dagger},\widehat{R}^{\targetpi}}(s, \targetpi(s))
    - Q^{\tilde{\pi}_{\dagger}, \widehat{R}^{\targetpi}}(s, \tilde{\pi}_{\dagger}(s))
  \right)
\\&\overset{(ii)}{=}
  \sum_{s} \occstate^\targetpi(s) \cdot \left(
    Q^{*,\widehat{R}^{\targetpi}}(s, \targetpi(s))
    - Q^{*, \widehat{R}^{\targetpi}}(s, \tilde{\pi}_{\dagger}(s))
  \right)
\end{align*}
where $(i)$ follows from optimality of $\targetpi$ in $\widehat{R}^{\targetpi}$ and
$(ii)$ follows from the fact that $Q^{*, \widehat{R}^\targetpi} = Q^{\tilde{\pi}_{\dagger}, \widehat{R}^{\targetpi}}$.
Therefore
$ Q^{*,\widehat{R}^{\targetpi}}(s, \targetpi(s)) = Q^{*, \widehat{R}^{\targetpi}}(s, \tilde{\pi}_{\dagger}(s))
$ for all $s \in \posStates^\targetpi$ which is equivalent to \eqref{eq.dec9_1920}.

  Given this result,
  we further assume that $\tilde{\pi}_{\dagger}(s)=\targetpi(s)$ for all
  $s\in \posStates^\targetpi$ since we did not originially specify how to break ties in the definition
  of $\tilde{\pi}_{\dagger}$.
  This also implies $\posStates^{\targetpi} = \posStates^{\tilde{\pi}_{\dagger}}$
  since $\occstate^{\targetpi} = \occstate^{\tilde{\pi}_{\dagger}}$
  by Lemma \ref{lm.same_occupancy}.

Now note that
given the construction of
$\widehat{R}"$, 
\begin{align*}
  Q^{\tilde{\pi}_{\dagger}, \widehat{R}^{\targetpi}}(s, \tilde{\pi}_{\dagger}(s)) =
  Q^{\tilde{\pi}_{\dagger}, \widehat{R}"}(s, \tilde{\pi}_{\dagger}(s)) 
\end{align*}
 for all $s$. This is because the rewards for the state-action pairs $(s, \tilde{\pi}_{\dagger}(s))$ were not modified.
 Since no reward has increased, this further implies that
$V^{\tilde{\pi}_{\dagger}, \widehat{R}"}=
V^{\tilde{\pi}_{\dagger}, \widehat{R}^{\targetpi}}
$.
Now note that
for all $s, a$,
\begin{align*}
    Q^{\tilde{\pi}_{\dagger}, \widehat{R}"}(s, a) - 
    Q^{\tilde{\pi}_{\dagger}, \widehat{R}^{\targetpi}}(s, a)
    &=
    \widehat{R}"(s, a) - \widehat{R}^{\targetpi}(s, a)
    \le -\epsilon'(s, a) \cdot \ind{s \in \posStates^{\targetpi}}
\end{align*}
Recall however that by definition of
$\tilde{\pi}_{\dagger}$,
\begin{align*}
  Q^{\tilde{\pi}_{\dagger}, \widehat{R}^{\targetpi}}(s, \tilde{\pi}_{\dagger}(s)) \ge 
  Q^{\tilde{\pi}_{\dagger}, \widehat{R}^{\targetpi}}(s, a).
\end{align*}
We can therefore conclude that
\begin{align*}
  Q^{\tilde{\pi}_{\dagger}, \widehat{R}"}(s, \tilde{\pi}_{\dagger}(s)) \ge 
  Q^{\tilde{\pi}_{\dagger}, \widehat{R}"}(s, a) + \epsilon'(s, a)\cdot \ind{s \in \posStates^{\targetpi}}.
\end{align*}
This means that 
$\left(
  R=\widehat{R}",
  Q=Q^{\tilde{\pi}_{\dagger}, \widehat{R}"},
  V=V^{\tilde{\pi}_{\dagger}, \widehat{R}"},
  \targetpi=\tilde{\pi}_{\dagger}\right)$
satisfy all of the constraints of \eqref{prob.rp_approx}.
Since $\targetpi(s) = \tilde{\pi}_{\dagger}(s)$
for all $s\in \posStates^{\targetpi}$, this means that
$\left(
  R=\widehat{R}",
  Q=Q^{\tilde{\pi}_{\dagger}, \widehat{R}"},
  V=V^{\tilde{\pi}_{\dagger}, \widehat{R}"},
  \targetpi=\targetpi\right)$
satisfy the constraints of \eqref{prob.rp_approx} as well.
Therefore, by optimality of $\widehat{R}'$,
\begin{align*}
    \norm{\overline{R} - \widehat{R}'}_2 &\le 
    \norm{\overline{R} - \widehat{R}"}_2
    \\&\le 
    \norm{\overline{R} - \widehat{R}^{\targetpi}}_2 + 
    \norm{\widehat{R}^{\targetpi} - \widehat{R}"}_2
    \\&=
    \norm{\overline{R} - \widehat{R}^{\targetpi}}_2 + 
    \norm{\epsilon'}_2.
\end{align*} 
\end{proof}


\section{\qgreedy~Algorithm}\label{app.qgreedy_algo}

In this section, we present the \qgreedy~algorithm and prove its correctness. Recall that this algorithm finds a solution to the optimization problem:
\begin{align*}
&\minmaxgapqval = 
 	\min_{\pi
 	\in \allowedpi
 	}\max_{s \in S_{\textnormal{pos}}^{\pi}}
 	\big(
 	Q^{*, \overline{R}}(s, \optpi(s)) - 
 	Q^{*, \overline{R}}(s, \pi(s))
 	\big).
\end{align*}
The intuition behind the \qgreedy~algorithm as follows. It starts by finding the state $s$ which has the highest value of 
$\delta(s) = \min_{a\in A_s^{\adm}} Q^{*, \overline{R}}(s, \optpi(s)) - 
Q^{*, \overline{R}}(s,a)$ among admissible state-action pairs---this value provides an upper bound on  $\minmaxgapqval$ and is tight if $s$ is reachable by any admissible policy. If this is not the case, then there might exists a policy $\pi$ which results in lower value of $\maxgapqval$ by not reaching $s$ (making $\occstate^{\pi}(s) = 0$). Therefore, after finding $s$, the algorithm proceeds by finding the set of state-action pairs that are ``connected'' to $s$ in that policies defined on these pairs reach $s$ with strictly positive probability.
These state action pairs are removed from the admissible set of state-action pairs, and the algorithm proceeds with the next iteration.
The output is defined by the minimum of all of the gaps $\delta$ found in each iteration,
and the policy can be reconstructed from the set of state-action pairs that are ``connected'' to the state that defines this gap.
~\\
The pseudo-code of \qgreedy~can be found in Algorithm \ref{alg:algorithm} and provides a more detailed description of the algorithm. 
\begin{algorithm}[tb]
\caption{\qgreedy}
\label{alg:algorithm}
\textbf{Input}: MDP $\overline{M}$,
admissible action set $A_{s}^{\adm}$ for each state $s$.
\\
\textbf{Output}: 
$\minmaxgapqval$, 
Policy $\pi \in \argmin_{\pi} \Delta_{Q}^\pi$.
\begin{algorithmic}[1] 
\STATE Calculate Q-values $Q^{*, \overline{R}}$.
\STATE Let $\optpi(s) = \max_{a}Q^{*, \overline{R}}(s, a)$.
\STATE Let $\text{ADM}^{(0)}=
\{
(s, a) | a\in A_s^{\adm}
\}
$.
\STATE Let $\tilde{S}^{(0)} = S$.
\STATE Let $S_{\sigma} = 
\{s| \sigma(s) \ne 0\}
$.
\STATE Let $t=0$.
\WHILE{$S_{\sigma}\subseteq \tilde{S}^{(t)}$}
\STATE Let $\text{ADM}^{(t)}_s = 
\{a| (s, a) \in \text{ADM}^{(t)}\}$ for all $s \in \tilde{S}^{(t)}$.
\STATE Let $\delta^{(t)}(s) = \min_{a\in \text{ADM}_s^{(t)}}
Q^{*, \overline{R}}(s, \optpi(s)) - 
Q^{*, \overline{R}}(s,a)
$ for all 
$s\in \tilde{S}^{(t)}$.
\STATE Let $s_{t} = \argmax_{s\in \tilde{S}^{(t)}} \delta^{(t)}(s)$ and
$\Delta_t=\delta^{(t)}(s_t)$.
\STATE
Let $\pi_t(s) = \argmin_{a\in \text{ADM}_s^{(t)}}
Q^{*, \overline{R}}(s, \optpi(s)) - 
Q^{*, \overline{R}}(s,a)
$ for all $s\in \tilde{S}^{(t)}$
and choose $\pi_t(s)$ arbitrarily otherwise.
\STATE Let $\tilde{S}^{(t + 1)} = \tilde{S}^{(t)}$ and
$\text{ADM}^{(t + 1)} = 
\text{ADM}^{(t)}
$.
\STATE Let $S_{\delta} = \{s_t\}$
\WHILE{$S_{\delta} \ne \emptyset$}
\label{alg:inner_loop_begin}
\STATE
$\tilde{S}^{(t + 1)} = \tilde{S}^{(t + 1)} \backslash S_{\delta}$.
\label{alg:remove_s}
\STATE
$\text{ADM}^{(t + 1)} = 
\text{ADM}^{(t + 1)}
\backslash \{(s, a)|  P(s, a, \tilde{s}) > 0 \text{ for some } \tilde{s}\notin \tilde{S}^{(t + 1)}\}$.
\label{alg:remove_sa}
\STATE
Let $S_{\delta} = \{
s \in \tilde{S}^{(t + 1)}| (s, a) \notin \text{ADM}^{(t + 1)} \text{ for all } a
\}
$
\ENDWHILE
\label{alg:inner_loop_end}
\STATE $t=t + 1$.
\ENDWHILE
\STATE Let $t^*=\argmin_{t}\Delta_t$
\STATE \textbf{return} $\Delta_{t^*}, \pi_{t^*}$
\end{algorithmic}
\end{algorithm}
The following result formally shows that \qgreedy~outputs a correct result. 
\begin{lemma}
Let $\Delta_{A}, \pi_{A}$ denote the output of the algorithm \ref{alg:algorithm}. Then it holds that
\begin{align*}
    \minmaxgapqval = \Delta_{A} = \Delta_{Q}^{\pi_{A}},
\end{align*}
where $\Delta_{Q}^{\pi}$ is defined as
\begin{align}
    \maxgapqval = \max_{s\in \posStates^{\pi}}\left(
Q^{*, \overline{R}}(s, \optpi(s))-
Q^{*, \overline{R}}(s, \pi(s))
\right).
\label{eq.q_gap}
\end{align}
\end{lemma}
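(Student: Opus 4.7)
The plan is to establish the chain
\begin{align*}
\minmaxgapqval \;\le\; \Delta_Q^{\pi_A} \;=\; \Delta_A \;\le\; \minmaxgapqval,
\end{align*}
where by construction $\pi_A = \pi_{t^*}$ and $\Delta_A = \Delta_{t^*} = \min_t \Delta_t$. The first inequality reduces to admissibility of $\pi_A$ together with an easy upper bound, while the last inequality is the nontrivial direction.

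First I would verify $\pi_A \in \allowedpi$. By construction of $\text{ADM}^{(t)}$ every $a \in \text{ADM}^{(t^*)}_s$ lies in $A_s^{\adm}$, and by the pruning in line 16 no such action places probability mass outside $\tilde{S}^{(t^*)}$. Hence any trajectory of $\pi_A$ starting from $S_\sigma \subseteq \tilde{S}^{(t^*)}$ stays inside $\tilde{S}^{(t^*)}$, so $\occstate^{\pi_A}(s) = 0$ for all $s \notin \tilde{S}^{(t^*)}$, and both clauses of admissibility hold. For the bound $\Delta_Q^{\pi_A} \le \Delta_A$, observe that for every $s \in \posStates^{\pi_A} \subseteq \tilde{S}^{(t^*)}$ the minimizing choice defining $\pi_{t^*}(s)$ gives $Q^{*,\overline{R}}(s, \optpi(s)) - Q^{*,\overline{R}}(s, \pi_A(s)) = \delta^{(t^*)}(s) \le \delta^{(t^*)}(s_{t^*}) = \Delta_A$.

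For the harder direction $\Delta_A \le \minmaxgapqval$, I would argue by contradiction: suppose some admissible policy $\pi$ satisfies $\Delta_Q^{\pi} < \Delta_A$. The key invariant I would maintain across outer iterations $t$ is
\begin{align*}
\posStates^{\pi} \subseteq \tilde{S}^{(t)} \quad\text{and}\quad (s, \pi(s)) \in \text{ADM}^{(t)} \text{ for all } s \in \posStates^{\pi}.
\end{align*}
The base case $t=0$ is immediate from admissibility of $\pi$. In the inductive step, I first show $s_t \notin \posStates^{\pi}$: otherwise the induction hypothesis yields $(s_t, \pi(s_t)) \in \text{ADM}^{(t)}$, hence $Q^{*,\overline{R}}(s_t, \optpi(s_t)) - Q^{*,\overline{R}}(s_t, \pi(s_t)) \ge \delta^{(t)}(s_t) = \Delta_t \ge \Delta_A$, contradicting $\Delta_Q^{\pi} < \Delta_A$.

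The main obstacle is propagating the invariant through the inner while-loop (lines 14--17), which repeatedly expands $S_\delta$ and prunes $\tilde{S}^{(t+1)}$ and $\text{ADM}^{(t+1)}$. I would handle this by a nested induction whose driving fact is the flow-closedness of $\posStates^{\pi}$ under $\pi$: by the Bellman flow equation \eqref{eq.bellman.occstate}, if $s \in \posStates^{\pi}$ and $P(s, \pi(s), s') > 0$, then $s' \in \posStates^{\pi}$. Since $s_t \notin \posStates^{\pi}$, for any $s \in \posStates^{\pi}$ the action $\pi(s)$ never puts mass on $s_t$, so $(s, \pi(s))$ survives line 16 on the first pass and $s$ is never enqueued into $S_\delta$ on line 17; iterating this argument across the inner passes shows nothing in $\posStates^{\pi}$ is ever removed. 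Since $S_\sigma \subseteq \posStates^{\pi} \subseteq \tilde{S}^{(t)}$ for every $t$, the outer loop would never terminate, yet $\tilde{S}^{(t+1)} \subsetneq \tilde{S}^{(t)}$ at each step (as $s_t$ is always removed), forcing termination within $|S|$ iterations. This contradiction delivers $\minmaxgapqval \ge \Delta_A$ and completes the chain.
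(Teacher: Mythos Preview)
Your proof is correct and follows essentially the same approach as the paper: you establish the squeeze $\minmaxgapqval \le \Delta_Q^{\pi_A} \le \Delta_A \le \minmaxgapqval$ via the same admissibility check for $\pi_A$, the same per-state $\delta^{(t^*)}$ bound, and the same contradiction argument maintaining the invariant $\posStates^{\pi} \subseteq \tilde{S}^{(t)}$ together with $(s,\pi(s)) \in \text{ADM}^{(t)}$ through the inner pruning loop using flow-closedness of $\posStates^{\pi}$. The paper's proof is organized into the same three pieces (their Parts~1--3), with only cosmetic differences in presentation.
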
\begin{proof}
Before we discuss the algorithm's correctness,  observe that it guaranteed to terminate since $S$ is finite, $S_{\sigma}$ is nonempty, and $|\tilde{S}^{(t)}|$ strictly decreases in each iteration. We now prove the correctness of the algorithm. We divide the proof into three parts.
~\\
\textbf{Part 1:} We show that
$\Delta_{Q}^{\pi_{A}} \le \Delta_{A}$.
In order to understand why this is the case, for each iteration $t$, consider the policy $\pi_t$. 
We first claim that
$\posStates^{\pi_{t}} \subseteq \tilde{S}^{(t)}$. Note that this is the reason we have defined $\pi_t$ only for the states $s\in \tilde{S}^{(t)}$ in the algorithm, the value of $\pi_t(s)$ for $s\notin \tilde{S}^{(t)}$ is not important and can be chosen arbitrarily.
~\\
In order to prove the claim, note that it is obviously true for $t=0$ because $\tilde{S}^{(0)}=S$.
As for $t\ge 1$, observe that 
an agent following $\pi_t$ initially starts in a state 
in $S_{\sigma} \subseteq\tilde{S}^{(t)}$
and therefore the agent is in $\tilde{S}^{(t)}$ in the beginning.
Therefore, in order to reach a state in $S \backslash \tilde{S}^{t}$,
at some point the agent would need to take an action that could lead to, i.e., with
strictly positive probability would transition to, 
$S \backslash \tilde{S}^t$.
This is not possible however as 
$\pi_t(s)\in \text{ADM}_s^{(t)}$
and 
all state-action pairs in $\tilde{S}^{(t)}$ that could lead to  $S \backslash \tilde{S}^{(t)}$ were removed from 
$\text{ADM}^{(t)}$ during its construction in Line \ref{alg:remove_sa}.
~\\
Given this result, it is clear that 
\begin{align*}
    \Delta_t
    &=
    \delta^{(t)}(s_t)
    \\&=
    \max_{s\in \tilde{S}^{(t)}}
        \delta^{(t)}(s)
    \\&\ge 
    \max_{s\in \posStates^{\pi_t}}
        \delta^{(t)}(s)
    \\&=
    \max_{s\in \posStates^{\pi_t}}
        Q^{*, \overline{R}}(s, \optpi(s)) - 
        Q^{*, \overline{R}}(s, \pi_t(s))
    \\&=
    \Delta^{\pi_t}_{Q}.
\end{align*}
Therefore, 
$\Delta_{Q}^{\pi_t}\le \Delta_{t}$.
Since this holds for all $t$, the claim is proved.
~\\
\textbf{Part 2:}
We show that 
$\minmaxgapqval \ge  \Delta_{A}$.
We use proof by contradiction.
Assume that this is not the case and
$\minmaxgapqval <  \Delta_{A}$.
This means that
$\Delta_{Q}^\pi < \Delta_{A}$ for some $\pi\in \allowedpi$. 
Now, observe that since the loop terminates for some $t$,
there exists a $t$ such that
$S_{\sigma} \not\subseteq \tilde{S}^{(t)}$. Since
$S_{\sigma}\subseteq\posStates^\pi$, this means that there exists a $t$ such that $\posStates^\pi \not\subseteq \tilde{S}^{(t)}$. We claim that this contradicts the assumption $\Delta_{Q}^\pi < \Delta_{A}$. 
Concretely, we will use the assumption 
$\Delta_{Q}^\pi < \Delta_{A}$ and
show by induction on $t$ that 
$\posStates^\pi \subseteq \tilde{S}^{(t)}$ and $(s, \pi(s))\in \text{ADM}^{(t)}$ for all $s\in \posStates^\pi$. 
~\\
For $t=0$, the claim holds because since $\pi \in \allowedpi$.
Assume the claim holds for $t$, we will show that it holds for $t + 1$ as well.
We first claim that $s_t\notin \posStates^\pi$. Concretely,
$(s_t, \pi(s_t)) \in \text{ADM}^{(t)}$ by the induction hypotheses. If $s_t \in \posStates^\pi$, this would imply that
\begin{align*}
    \Delta_{Q}^\pi
    &\ge 
    Q^{*, \overline{R}}(s_t, \optpi(s_t))
    -
     Q^{*, \overline{R}}(s_t, \pi(s_t))
     \\&\ge 
     \min_{a\in \text{ADM}^{t}_{s_t}}
     Q^{*, \overline{R}}(s_t, \optpi(s_t))
    -
     Q^{*, \overline{R}}(s_t, a)
     \\&=
     \delta^{(t)}(s_t)
     \\&= \Delta_{t}
     \\&
     \ge \Delta_{A},
\end{align*}
which contradicts the assumption $\Delta_{Q}^{\pi} < \Delta_{A}$. Therefore, 
$s_{t}\notin \posStates^{\pi}$.
Now consider the inner loop in lines \ref{alg:inner_loop_begin}-\ref{alg:inner_loop_end}. 
We claim by induction that the loop does not remove any states $s\in \posStates^\pi$ 
from $\tilde{S}^{(t+1)}$ and
does not remove
 any state-action pairs $(s, \pi(s))$ such that $s\in \posStates^\pi$ from 
 $\text{ADM}^{(t+1)}$.
 Since $s_t\notin \posStates^\pi$, this is true the first time
line \ref{alg:remove_s} is executed.
In each execution of the loop,
assuming the constraint is not violated in line
\ref{alg:remove_s}, then it will not be violated in line \ref{alg:remove_sa} either.
This is because if $(s, \pi(s))$ is removed from $\text{ADM}^{(t+1)}$ for some $s\in \posStates^\pi$,
then $P(s, a, s') > 0$ for some $s'\notin \tilde{S}^{(t+1)}$. 
However, $P(s, a, s') > 0$ implies that  $s'\in \posStates^\pi$ and therefore $s'\in \posStates^\pi$ was already removed from $\tilde{S}^{(t + 1)}$. Likewise, if a state $s\in \posStates^\pi$ is removed in later executions of \ref{alg:remove_s}, then it must be the case that $(s, a)\notin \text{ADM}^{(t+1)}$ for all $a$. This means that at some point, the state-action pair $(s, \pi(s))$ must have been removed from $\text{ADM}^{(t+1)}$ which means the constraint must have already been violated. Therefore, the constraint is not violated at any point. This means that the induction is complete and we have reached a contradiction using the assumption
$\Delta_{Q} < \Delta_{A}$. Therefore, the assumption was wrong and
$\Delta_{Q} \ge \Delta_{A}$.
~\\
\textbf{Part 3:}
Putting both parts together, note that
\begin{align*}
    \Delta_{A} \le \Delta_{Q} \le
    \Delta_{Q}^{\pi_{A}} \le \Delta_{A},
\end{align*}
where the first inequality follows from Part 2, the second inequality follows from the definition of
$\Delta_{Q}$ and the final inequality follows from
Part 1. Therefore the proof is complete.
\end{proof}


\section{Proofs of the Results in Section \nameref{sec.setting}}\label{app.setting_proofs}

In this section, we provide proofs of our results in Section \nameref{sec.setting}, namely, Proposition \ref{prop.solvability.reward_design}.

\subsection{Proof of Proposition \ref{prop.solvability.reward_design}}
\textbf{Statement:} {\em If $\allowedpi$ is not empty, there always exists an optimal solution to the optimization problem \eqref{prob.reward_design}.}
\begin{proof} to prove the statement, we first show that the following three claims hold.
\begin{claim}
Consider a function that evaluates the objective of the optimization problem
	\eqref{prob.reward_design} for a given $R$:
\begin{align*}
    l(R) = \max_{\pi \in \optEpsDet(R)} \norm{\overline{R} - R}_2 -\lambda\score^{\pi, \overline{R}}.
\end{align*} 
This function, $l(R)$, is lower-semi continuous.
\end{claim}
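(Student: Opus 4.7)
The plan is to decompose $l(R)$ as a sum of a continuous term and a second term that depends on $R$ only through the correspondence $\optEpsDet(R)$, and then exploit the strict inequality in the definition of $\optEpsDet(R)$ to show the second term is upper-semi continuous. Concretely, since $\lambda \ge 0$, I rewrite
\begin{align*}
l(R) = \norm{\overline{R}-R}_{2} \; -\; \lambda \cdot \min_{\pi \in \optEpsDet(R)} \score^{\pi,\overline{R}}.
\end{align*}
The first summand is continuous in $R$, so it suffices to prove that $h(R) := \min_{\pi \in \optEpsDet(R)} \score^{\pi,\overline{R}}$ is upper-semi continuous; subtracting $\lambda h(R)$ then gives lower-semi continuity of $l$.

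The key steps I would carry out are as follows. First, note that $\PiDet$ is finite, and for each fixed $\pi$ the score $\score^{\pi,R}$ is a linear (hence continuous) function of $R$; therefore $R \mapsto \max_{\pi' \in \PiDet}\score^{\pi',R}$ is continuous as a finite maximum of continuous functions. Defining $g_\pi(R) := \score^{\pi,R} - \max_{\pi' \in \PiDet}\score^{\pi',R} + \epsilon$, each $g_\pi$ is continuous and $\optEpsDet(R) = \{\pi \in \PiDet : g_\pi(R) > 0\}$. Second, fix an arbitrary $R$ and sequence $R_n \to R$; note $\optEpsDet(R)$ is nonempty (it contains any $\overline{R}$-free optimal policy, since $\epsilon > 0$) and finite, so the minimum is attained at some $\pi^\star \in \optEpsDet(R)$ with $g_{\pi^\star}(R) > 0$. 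Third, by continuity of $g_{\pi^\star}$, one has $g_{\pi^\star}(R_n) > 0$ for all $n$ large enough, i.e., $\pi^\star \in \optEpsDet(R_n)$ eventually; hence
\begin{align*}
h(R_n) \;=\; \min_{\pi \in \optEpsDet(R_n)} \score^{\pi,\overline{R}} \;\le\; \score^{\pi^\star,\overline{R}} \;=\; h(R)
\end{align*}
for all sufficiently large $n$. Taking $\limsup$ gives $\limsup_n h(R_n) \le h(R)$, i.e., $h$ is upper-semi continuous. Combined with continuity of the norm term and $\lambda \ge 0$, I obtain $\liminf_n l(R_n) \ge l(R)$, which is exactly lower-semi continuity.

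I do not expect a serious obstacle in this proof; the substantive point is the strictness of the inequality in the definition of $\optEpsDet(R)$. Strictness guarantees that the set $\{R : \pi \in \optEpsDet(R)\}$ is open for each $\pi$, which is what lets a witness $\pi^\star$ survive into $\optEpsDet(R_n)$ along the perturbed sequence. If the inequality were non-strict, one would only obtain upper hemicontinuity of the correspondence $R \mapsto \optEpsDet(R)$ and the $\min$ over it could jump downward in the limit, breaking the argument. The only care needed is to verify nonemptiness of $\optEpsDet(R)$ (so the minimum is attained) and to invoke finiteness of $\PiDet$ so that the continuity of the arg structure is automatic.
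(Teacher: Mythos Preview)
Your proof is correct and rests on the same key insight as the paper's: the strict inequality in the definition of $\optEpsDet(R)$ makes membership an open condition, so the witness policy achieving the extremum at $R$ remains $\epsilon$-optimal under nearby $R_n$. The execution differs slightly. The paper argues by contradiction: it supposes $l(R_i) < l(R) - \delta$ along a sequence $R_i \to R$, takes the maximizer $\pi$ of $l(R)$, and then establishes that $\pi \in \optEpsDet(R_i)$ eventually via a pigeonhole argument (if not, some $\tilde\pi$ would witness $\pi \notin \optEpsDet(R_i)$ infinitely often, and passing to the limit contradicts $\pi \in \optEpsDet(R)$). Your route is more direct: you isolate $h(R) = \min_{\pi \in \optEpsDet(R)} \score^{\pi,\overline{R}}$, observe that each $g_\pi$ is continuous so $\{R : g_\pi(R) > 0\}$ is open, and immediately conclude the minimizer $\pi^\star$ persists. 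This avoids both the contradiction setup and the subsequence extraction, and makes the role of strictness more transparent. One minor slip: where you write ``any $\overline{R}$-free optimal policy'' you presumably mean any $R$-optimal policy; the nonemptiness of $\optEpsDet(R)$ follows because a maximizer of $\score^{\cdot,R}$ over the finite set $\PiDet$ always exists and satisfies the strict $\epsilon$-gap trivially.
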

 \begin{proof}
 First note that the function is real-valued (i.e., $l(R) \notin\{\infty, -\infty\}$) as the set of all deterministic policies is finite. To prove the claim, we need to show that
for all $R$:
\begin{align*}
    \forall \delta : \exists \alpha : \forall \tilde R: 
    \norm{\tilde{R} - R}_2 \le \alpha \implies 
    l(\tilde{R}) \ge l(R) - \delta.
\end{align*}
Assume to the contrary that there exists an $R$ such that
\begin{align*}
    \exists \delta: \forall \alpha: 
    \exists \tilde{R}_\alpha:
    \norm{\tilde{R}_\alpha - R}_2 \le \alpha \land
    l(\tilde{R}_\alpha) < l(R) - \delta.
\end{align*}
By setting $\alpha_i=\frac{1}{2^i}$, we obtain a series
$\{R_i\}_{i=1}^\infty$ such that
$R_i$ tends to $R$ and
\begin{align*}
    \forall i: 
    l(R_i) < l(R) - \delta.
\end{align*}
Take $\pi$ to be an arbitrary policy in 
\begin{align*}
\argmax_{\pi \in \optEpsDet(R)}
\norm{\overline{R} - R}_2 -\lambda\score^{\pi, \overline{R}}.
\end{align*}
We claim that there exists $N$ such that
\begin{align*}
    \forall i \ge N: 
    \pi \in \optEpsDet(R_i).
\end{align*}
If this would not be the case, then there would exist an infinite sub-sequence of $R_i$s such that for all of them 
$\pi \notin \optEpsDet(R_i)$
and therefore
there exists a deterministic $\tilde{\pi}_i$ such that
$\score^{\tilde{\pi}_i, R_i} \ge \score^{\pi, R_i} + \epsilon$. Since the number of deterministic policies is finite, at least one of these deterministic policies would occur infinitely often. This would mean that there is a policy $\tilde{\pi}$ and an infinite subsequence of $R_j$s that tends to $R$, and for all of the $j$s in the subsequence
\begin{align*}
    \score^{\tilde{\pi}, R_j} \ge \score^{\pi, R_j} + \epsilon.
\end{align*}
Since $\score(R, \tilde{\pi})$ is continuous in $R$ for fixed $\tilde \pi$, this would imply that
\begin{align*}
    \score^{\tilde{\pi}, R} \ge \score^{\pi, R} + \epsilon,
\end{align*}
which contradicts the assumption 
$\pi \in \optEpsDet(R)$. Therefore, as we stated above, there exists $N$ such that
\begin{align*}
    \forall i \ge N: 
    \pi \in \optEpsDet(R_i).
\end{align*}

Now, note that
\begin{align*}
    \forall i \ge N: 
    l(R_i) = \max_{\pi \in \optEpsDet(R_i)} \norm{\overline{R} - R_i}_2 -\lambda\score^{\pi, \overline{R}} \ge \norm{\overline{R} - R_i}_2 -\lambda\score^{\pi, \overline{R}}.
\end{align*}
Therefore, we have that
\begin{align*}
-\delta &>
    l(R_i) - l(R) \\&\ge 
    \norm{\overline{R} - R_i}_2 -\lambda\score^{\pi, \overline{R}} - 
    \norm{\overline{R} - R}_2 + \lambda\score^{\pi, \overline{R}} = \norm{\overline{R} - R_i}_2 - \norm{\overline{R} - R}_2,
\end{align*}
which is a contradiction, since $\norm{\overline{R} - R_i}_2 - \norm{\overline{R} - R}_2$ goes to 0 as $i \to \infty$. This proves the claim.
 \end{proof}
 
 \begin{claim}
 The set $\{R: \optEpsDet(R) \subseteq \allowedpi \}$ is closed. 
 \end{claim}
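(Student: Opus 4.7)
The plan is to show that the complement of the set, namely $U := \{R : \optEpsDet(R) \not\subseteq \allowedpi\}$, is open in $\mathbb{R}^{|S|\times|A|}$, which immediately implies that the original set is closed. The key structural fact I will exploit is that the membership $\pi \in \optEpsDet(R)$ is defined by a \emph{strict} inequality $\score^{\pi,R} > \max_{\pi' \in \PiDet}\score^{\pi', R} - \epsilon$, and that $\PiDet$ is finite.

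First, I would observe that for any fixed deterministic policy $\pi$, the score map $R \mapsto \score^{\pi,R} = \sum_s \occstate^{\pi}(s)\cdot R(s,\pi(s))$ is linear in $R$ (the occupancy $\occstate^{\pi}$ does not depend on $R$), hence continuous. Consequently, the map $R \mapsto \max_{\pi' \in \PiDet}\score^{\pi',R}$ is a maximum over a finite family of continuous functions and is therefore continuous. It follows that for each deterministic policy $\pi$ the set
\begin{equation*}
U_\pi := \bigl\{R : \score^{\pi,R} > \max_{\pi' \in \PiDet}\score^{\pi', R} - \epsilon\bigr\}
\end{equation*}
is open, since it is the preimage of $(0,\infty)$ under a continuous function.

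Now by definition, $R \in U$ iff some deterministic $\pi \notin \allowedpi$ satisfies $\pi \in \optEpsDet(R)$, which is exactly $R \in U_\pi$. Hence
\begin{equation*}
U \;=\; \bigcup_{\pi \in \PiDet \setminus \allowedpi} U_\pi.
\end{equation*}
Because $\PiDet$ is finite, this is a finite union of open sets, hence open, and the complement $\{R : \optEpsDet(R) \subseteq \allowedpi\}$ is closed.

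I do not anticipate a serious obstacle: the whole argument reduces to the continuity of $R \mapsto \score^{\pi,R}$ for each $\pi$ and the finiteness of $\PiDet$. The only subtle point worth double-checking is that the \emph{strict} inequality in the definition of $\optEpsDet(R)$ is essential — had the inequality been $\ge$, the sets $U_\pi$ would be closed and we would instead obtain that the original set is open, which is not what is claimed. The strict inequality aligns perfectly with the lower-semicontinuity of $l$ established in the previous claim, and together the two claims will let us invoke a Weierstrass-type argument to conclude attainability of the optimum.
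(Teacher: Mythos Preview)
Your proof is correct and relies on the same two ingredients as the paper's argument: continuity of $R\mapsto \score^{\pi,R}$ for each fixed $\pi$ and finiteness of $\PiDet$. The only cosmetic difference is that the paper works directly, rewriting the set as a finite union over $\pi\in\allowedpi$ of finite intersections over $\tilde\pi\notin\allowedpi$ of the closed half-spaces $\{R:\score^{\pi,R}\ge\score^{\tilde\pi,R}+\epsilon\}$, whereas you pass to the complement; your route is arguably a touch cleaner since it avoids the extra step of identifying an admissible witness that dominates every inadmissible policy.
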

 \begin{proof}
To see why, note that $R$ is in this set if and only if
\begin{align*}
\exists \pi \in \allowedpi: \forall
\tilde{\pi} \in \PiDet \backslash \allowedpi: 
\score^{\pi, R} \ge \score^{\tilde{\pi}, R} + \epsilon
\end{align*}
For a fixed $\pi, \tilde \pi$, 
the set
$\{R: \score^{\pi, R} \ge \score^{\tilde{\pi}, R} + \epsilon\}$
is closed. Since the set $\{R: \optEpsDet(R) \subseteq \allowedpi\}$ is a finite union of a finite intersection of such sets, $\{R: \optEpsDet(R) \subseteq \allowedpi\}$ is closed as well.
 \end{proof}
\begin{claim}
If 
$\allowedpi$ is not empty, \eqref{prob.reward_design} is feasible.
\end{claim}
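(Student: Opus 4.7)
The plan is to exhibit a single reward function $R$ that lies in the feasible set of~\eqref{prob.reward_design}, i.e., one satisfying $\optEpsDet(R) \subseteq \allowedpi$. Since $\allowedpi$ is nonempty by assumption, I would fix an arbitrary $\pi \in \allowedpi$. The intuition is to strongly reward admissible actions uniformly across the state space, so that any policy which takes an inadmissible action at a state it reaches with strictly positive probability must score worse than $\pi$ by more than $\epsilon$.

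Concretely, I would set
\begin{align*}
R(s, a) = \begin{cases} M & \text{if } a \in A^{\adm}_s, \\ 0 & \text{otherwise,} \end{cases}
\end{align*}
for a constant $M > 0$ to be chosen. Using $\score^{\pi', R} = \sum_{s} \occstate^{\pi'}(s) R(s, \pi'(s))$, the first step is to observe that every $s$ with $\occstate^{\pi}(s) > 0$ satisfies $\pi(s) \in A^{\adm}_s$ by admissibility of $\pi$, hence $\score^{\pi, R} = M \sum_{s} \occstate^{\pi}(s) = M$. Since $R \le M$ pointwise, this also gives $\max_{\pi'' \in \PiDet} \score^{\pi'', R} = M$.

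Next, I would show that any $\pi' \notin \allowedpi$ achieves $\score^{\pi', R} \le M(1 - \occstate_{\min})$. The definition of $\allowedpi$ guarantees a state $s^{\ast}$ with $\occstate^{\pi'}(s^{\ast}) > 0$ and $\pi'(s^{\ast}) \notin A^{\adm}_{s^{\ast}}$, so $R(s^{\ast}, \pi'(s^{\ast})) = 0$. Bounding the remaining terms of $\score^{\pi', R}$ by $M$ and invoking $\occstate^{\pi'}(s^{\ast}) \ge \occstate_{\min}$ yields the claim.

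Finally, choosing $M$ with $M \cdot \occstate_{\min} \ge \epsilon$ (e.g.\ $M = \epsilon / \occstate_{\min}$) gives $\score^{\pi', R} \le M - \epsilon = \max_{\pi'' \in \PiDet} \score^{\pi'', R} - \epsilon$ for every $\pi' \notin \allowedpi$, so $\pi' \notin \optEpsDet(R)$. Contrapositively, $\optEpsDet(R) \subseteq \allowedpi$, which establishes feasibility. The only subtlety to verify is that $\occstate_{\min} > 0$; this holds because $\PiDet$ and $S$ are finite, so the minimum defining $\occstate_{\min} = \min_{\pi \in \PiDet} \occstate^{\pi}_{\min}$ is attained and each $\occstate^{\pi}_{\min}$ is positive by construction. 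I do not expect a major obstacle beyond this mild positivity observation, since the rest is a direct explicit construction.
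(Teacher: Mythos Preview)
Your proposal is correct and follows essentially the same approach as the paper: both construct an explicit reward function that assigns a large constant $M$ (respectively $t$) to ``good'' state--action pairs and $0$ elsewhere, then show any inadmissible policy loses at least $M\cdot\occstate^{\pi'}(s^\ast)\ge M\cdot\occstate_{\min}$ at a violating state, which exceeds $\epsilon$ for $M$ large enough. The only cosmetic difference is that you reward every admissible action while the paper rewards only the actions of the fixed $\pi\in\allowedpi$ on states with $\occstate^{\pi}(s)>0$, and you invoke $\occstate_{\min}$ directly rather than arguing policy-by-policy before appealing to finiteness.
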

\begin{proof}
    Assume $\pi \in \allowedpi$. Let $t\ge 0$ be an arbitrary positive number. Define reward function $R$ as
    \begin{align*}
      R(s, a) = t\cdot\ind{ 
      \occstate^\pi(s) > 0 \land 
      a=\pi(s)
      }.
    \end{align*}
    Note that this implies
    \begin{align*}
        \score^{\pi, R} &= \sum_{s}\occstate^\pi(s) R(s, \pi(s)) 
        \\&=
        \sum_{s\in \posStates^\pi}\occstate^\pi(s) R(s, \pi(s))
        \\&=
        \sum_{s\in \posStates^\pi}\occstate^\pi(s) \cdot t
        \\&=
        t.
    \end{align*}
    We show that if $t$ is large enough, $R$ is feasible, i.e., for all $\tilde{\pi}\notin\allowedpi$, $\tilde{\pi}\notin\optEpsDet(R)$. Since the number of deterministic policies is finite, it suffices to show  for a fixed that $\tilde{\pi}\notin\allowedpi$, $\tilde{\pi}\notin\optEpsDet(R)$ for large enough $t$. To prove this, note that since $\tilde{\pi}\notin \allowedpi$, there exists a state $\tilde{s}$ such that
    $\occstate^{\tilde{\pi}}(\tilde{s}) > 0$ and
    $\tilde{\pi}(\tilde{s})\notin A^{\adm}_{\tilde{s}}$. Since $\pi$ was admissible, this means that either
    $\pi(\tilde{s})\ne \tilde{\pi}(\tilde{s})$ or $\occstate^\pi(\tilde{s}) = 0$. Either way, $R(s, \tilde{\pi}(\tilde{s})) = 0$. Therefore,
    \begin{align*}
        \score^{\pi, R} &= \sum_{s}
        \occstate^{\tilde{\pi}}(s)\cdot R(s, \tilde{\pi}(s)) 
        \\&=
        \sum_{s\ne \tilde{s}}
        \occstate^{\tilde{\pi}}(s)\cdot
        R(s, \tilde{\pi}(s)) 
        \\&\le
        t - \occstate^{\tilde{\pi}}(\tilde{s}) \cdot t.
    \end{align*}
    Setting $t> \frac{\epsilon}{\occstate^{\tilde{\pi}}(\tilde{s})}$ proves the claim.
\end{proof}

Let us now prove the statement of the proposition.  Since the optimization problem 
\eqref{prob.reward_design} has no limits on $R$, in other words the set of all feasible $R$ in the optimization problem is not bounded, we cannot claim that the feasible set is compact. Note however that since the optimization problem is feasible for any fixed $\overline{R}$, there is an upper bound on its value. Furthermore, the second term in the objective, i.e, $\score^{\pi, \overline{R}}$ is bounded for any fixed $\overline{R}$. This means that for every fixed $\overline{R}$, there exists a number $\Theta$ such that the optimization problem \eqref{prob.reward_design} is equivalent to
\begin{align*}
\notag
&\quad
\min_{R}\max_{\pi}
\norm{\overline{R}-R}_2 -\lambda\score^{\pi, \overline{R}}
\\
&\quad \mbox{ s.t. }  \quad
\optEpsDet (R) \subseteq \allowedpi
\\&\quad \quad \quad\quad
\pi \in\optEpsDet(R)
\\&\quad\quad\quad\quad
\norm{R - \overline{R}}_2 \le \Theta,
\end{align*}
This turns the problem into minimizing a lower-semi-continous function over a compact set which has an optimal solution (i.e., the infimum is attainable).
\end{proof}


\section{Proofs of the Results in Section \nameref{sec.computational_challenges} and Additional Results}\label{app.computational_challenges}

In this section, we provide proofs of our results in Section \nameref{sec.computational_challenges}, namely, 
Theorem \ref{thm.copmutational_hardness}, and Proposition \ref{prop.reward_design.approx}.
We also provide
an additional computational complexity result for the optimization problem \eqref{prob.reward_design.approx}.


\subsection{Proof of Theorem \ref{thm.copmutational_hardness}}

{\bf Statement:} {\em 
For any constant $p \in (0,1)$, it is NP-hard to distinguish between instances of \eqref{prob.reward_design_simple} that have optimal values at most $\xi$ and instances that have optimal values larger than $\xi \cdot \sqrt{(|S| \cdot |A|)^{1-p}}$.
The result holds even when the parameters $\epsilon$ and $\gamma$ in \eqref{prob.reward_design_simple} are fixed to arbitrary values subject to $\epsilon > 0$ and $\gamma \in (0,1)$.
}

\begin{proof}

\begin{figure}[h]
\centering
\tikzset{
->, 
>=Stealth, 
node distance=3cm, 
every state/.style={thick, fill=gray!10}, 
}
\resizebox{1\linewidth}{!}{
    \begin{tikzpicture}[scale=0.5]
    \tikzstyle{every node}=[font=\footnotesize] 
    
    \node[state, initial, initial where=above] (s0) {$s_0$};
    \node[state, below of=s0, yshift=16mm, scale=0.2] (a0) {};
    
    \foreach \i in {0,...,9}
    {
    \pgfmathsetmacro\j{\i*4}
    \node[state, below left of=a0, draw=black!\j, fill=black!\i, xshift=0mm -\i mm, yshift=10mm -\i mm] {};
    \node[state, below left of=a0, draw=black!\j, fill=black!\i, xshift=15mm -\i mm, yshift=10mm -\i mm] {};
    \node[state, below left of=a0, draw=black!\j, fill=black!\i, xshift=30mm -\i mm, yshift=10mm -\i mm] {};
    \node[state, below left of=a0, draw=black!\j, fill=black!\i, xshift=60mm -\i mm, yshift=10mm -\i mm] {};
    \node[state, below left of=a0, draw=black!\j, fill=black!\i, xshift=85mm -\i mm, yshift=5mm -\i mm] {};
    }
    
    \node[state, below left of=a0, xshift=-10mm] (s1) {$s_1$};
    \node[state, below left of=a0, xshift=5mm] (s2) {$s_2$};
    \node[state, below left of=a0, xshift=20mm] (s3) {$s_3$};
    \node[below left of=a0, xshift=35mm] (sdots) {$\dots~\dots$};
    \node[state, below left of=a0, xshift=50mm] (sm) {$s_{3k}$};
    
    \node[state, below left of=a0, xshift=75mm, yshift=-5mm] (sx) {$s_*$};
    \node[state, below of=sx, xshift=-5mm, yshift=5mm, scale=0.2] (ax) {};
    
    \foreach \i in {0,...,9}
    {
    \pgfmathsetmacro\j{\i*4}
    \node[state, below of=sm, draw=black!\j, fill=black!\i, xshift = 50mm -\i mm, yshift=0mm -\i mm] {};
    \node[state, below left of=s1, draw=black!\j, fill=black!\i, xshift = 2mm -\i mm, yshift=33mm -\i mm] {};
    }
    
    
    \node[state, below of=sm, xshift = 40mm, yshift=-10mm] (sxx) {$\tilde{s}_1$};
    \node[state, below left of=s1, xshift=25mm, yshift=-5mm] (t1) {$t_1$};
    \node[state, below left of=s1, xshift=40mm, yshift=-5mm] (t2) {$t_2$};
    \node[below left of=s1, xshift=57mm, yshift=-5mm] (dots*) {$\dots~\dots$};
    \node[state, below left of=s1, xshift=75mm, yshift=-5mm] (tl) {$t_l$};
    
    \node[state, below of = dots*, yshift=-5mm] (final) {$\tilde{s}_f$};
    \node[state, below left of=s1, xshift=-8mm, yshift=23mm] (s1x) {$\tilde{s}_0$};

    \draw 
    (s0) edge[left] node{} (a0)
    (a0) edge[dashed, thick, bend right, above] node{} (s1)
    (a0) edge[dashed, thick, bend right=20, left] node{} (s2)
    (a0) edge[dashed, thick, bend right=5, left] node{} (s3)
    (a0) edge[dashed, thick, bend left, above] node{} (sm)
    (a0) edge[dashed, thick, bend left=30, above] node{} (sx)
    
    (ax) edge[dashed, thick, bend left=31, above] node{} (t1)
    (ax) edge[dashed, thick, bend left=21, left] node{} (t2)
    (ax) edge[dashed, thick, bend left=15, left] node{} (tl)
    
    (s1) edge[left] node{} (t1)
    (sm) edge[right, pos=0.5] node{} (dots*)
    (s2) edge[left, pos=0.2] node{} (t1)
    (s1) edge[left, pos=0.7] node{} (dots*)
    (s2) edge[, pos=0.5] node{} (t2)
    (s3) edge[above, pos=0.3] node{} (t2)
    (sdots) edge[right, pos=0.27] node{} (tl)
    (sdots) edge[left, pos=0.5] node{} (t1)
    (sdots) edge[right, pos=0.5] node[xshift=1mm, fill=white,inner sep=1pt]{$x_{ij} = \frac{m}{\gamma}(\frac{\epsilon}{1-\gamma} + \delta) - \gamma$} node[left,fill=white,xshift=0.1mm,inner sep=0pt]{\fbox{$a_{j}$}} (dots*)
    
    (s2) edge[above, draw=red, bend right=30] node{} (s1x)
    (s3) edge[above, draw=red, bend right] node{} (s1x)
    (sm) edge[above, draw=red, bend right, pos=0.75] node{\fbox{$a_\dagger$} $z_{3k} = 0$} (s1x)
    (s1) edge[below, draw=red, bend right=25] node[below, fill=white,xshift=0.2mm,yshift=-0.2mm,inner sep=0pt]{\fbox{$a_\dagger$}} node[yshift=-5mm]{$z_{1} = 0$} (s1x)
    
    (sx) edge[right, bend left=50, right] node[right,pos=0.35,fill=white,xshift=-0.1mm,inner sep=0pt]{\fbox{$a_1$}} node[pos=0.5]{$y = \gamma \cdot \frac{k}{l} + \frac{m}{ \gamma} (\frac{\epsilon}{1-\gamma} + \delta)$} (sxx)
    (sx) edge[right, thick, draw=red, bend left, pos=0.55] node{$z_*=0$} node[left, pos=0.55, fill=white,xshift=0.2mm,inner sep=0pt]{\fbox{$a_\dagger$}} (ax)
    
    (t1) edge[left, pos=0.5] node{$\omega_1=0$} (final)
    (t2) edge[right, pos=0.6] node{$\omega_2=0$} (final)
    (tl) edge[right, pos=0.65] node{$\omega_l=0$} (final)
    (final) edge[loop below] node{$0$} (final)
    (sxx) edge[below, bend left, right, pos=0.5] node{\quad$r=0$} (final)
    (s1x) edge[left, bend right=50, left, pos=0.5] node{$u=0\quad$} (final)
    ;
    \end{tikzpicture}
}
\caption{Reduction. Solid edges represent actions and dashed edges represent non-deterministic transitions. Red edges are {\em not} admissible.
Labels in boxes are names of some important actions in the reduction, of the corresponding edges; other values on the edges denote rewards for the corresponding actions. Each of the states $s_1, \dots, s_{3k}$ and $s_*$ has $N$ copies, and copies of each state are connected to other states in the same way. \label{fig:reduction-P1}}
\end{figure}
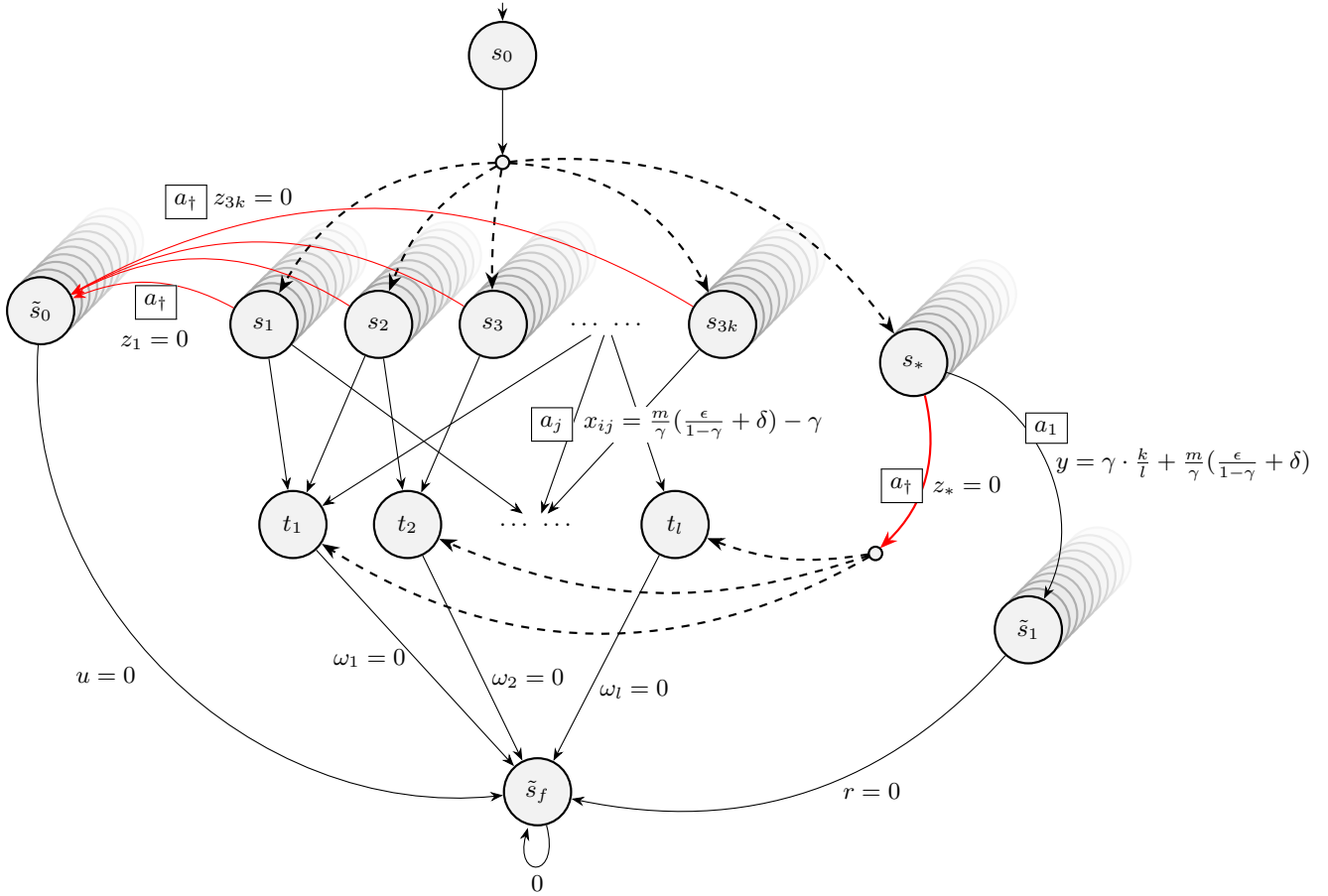

We show a reduction from the NP-complete problem {\sc Exact-3-Set-Cover} (X3C) \cite{karp1972reducibility,garey1979computers}.
An instance of X3C is given by a set $E= \{e_1, \dots, e_{3k}\}$ of $3k$ elements and a collection $\mathcal{S}$ of $3$-element subsets of $E$.
It is a yes-instance if there exists a sub-collection $\mathcal{Q} \subseteq \mathcal{S}$ of size $k$ such that $\cup_{S \in \mathcal{Q}} S = E$, and a no-instance otherwise.

Given an X3C instance, we construct the following instance of \eqref{prob.reward_design_simple}.
The underlying MDP is illustrated in Figure~\ref{fig:reduction-P1} with the following specifications, where we let 
$N = \left\lceil 3 k \cdot \varphi^{1-p} \cdot \left(\frac{9l}{\gamma}\right)^2 \right\rceil
$, $\delta = \frac{\gamma^2}{8 \cdot m \cdot l}$, and $m=(3k+1)N$; the value of $\varphi$ will be defined shortly.
Intuitively, we need $N$ to be sufficiently large and $\delta >0$ to be sufficiently close to $0$.

\begin{itemize}

\item
$s_0$ is the starting state, in which taking the only action available leads to a non-deterministic transition to each of the states $s_1,\dots, s_{3k}$ and $s_*$ with probability $\frac{1}{m}$.

\item
In each state $s_i$, $i =1 ,\dots, 3k$, taking action $a_\dagger$ leads to the state transitioning to $\tilde{s}_0$, yielding a reward $z_i := 0$.
Action $a_\dagger$ is {\em not} admissible in any of these states.

\item
Suppose there are $l$ subsets $S_1, \dots, S_l$ in the collection $\mathcal{S}$.
For each subset $S_j$, we create a state $t_j$.
We also create $l$ actions $a_1,\dots, a_l$.
If $e_i \in S_j$, then taking action $a_j$ in state $s_i$ leads to the state transitioning to $t_j$ and yields a reward $x_{ij} := \frac{m}{\gamma}(\frac{\epsilon}{1-\gamma} + \delta) - \gamma$.
From each state $t_j$, the only action available leads to state $\tilde{s}_f$, yielding a reward $\omega_j := 0$.

\item
In state $s_*$, taking action $a_\dagger$ leads to the state transitioning to each $t_j$ with probability $1/l$, yielding a reward $z_* := 0$; this action is {\em not} admissible.
Taking the other available action --- let it be $a_1$ --- leads to $\tilde{s}_1$ and a reward $y := \gamma \cdot \frac{k}{l} + \frac{m}{ \gamma} \left(\frac{\epsilon}{1-\gamma} + \delta\right)$ is yielded.
\item 
In states $\tilde{s}_0$ there is a single action yielding a reward of $u=0$ and transitioning to the state $\tilde{s}_f$. Similarly, in $\tilde{s}_1$ there is a single action yielding a reward of $r=0 $ and transitioning to 
$\tilde{s}_f$.
\item
In state $\tilde{s}_f$, there is a single action, yielding a reward $0$ and transitioning back to $s_0$
\end{itemize}
After the above construction is done, we start to create copies of some of the states, which will be essential for the reduction.
We repeat the following step $N-1$ times to create $N-1$ sets of copies:
\begin{itemize}
\item
Create a copy of each state
$s \in \{s_1, \dots, s_{3k}\} \cup \{s_*, \tilde{s}_0,\tilde{s}_1\}$.
Connect these new copies in the following way:
Connect the copy $\bar{s}$ of each state $s$ to the copy $\bar{s}'$ of every other state $s'$ (only those created in this step) the same way $s$ and $s'$ are connected;
In addition, connect $\bar{s}$ to every other state $s'$ which does not have a copy the same way 
$s$ and $s'$ are connected.
\end{itemize}
Finally, we let
\begin{align*}
    \varphi = \left(
6k \cdot \left(\frac{9l}{\gamma}\right)^2
\cdot
(3k + l + 5) \cdot (l + 1)
\right)^{1/p}.
\end{align*}
Now note that
$\varphi\ge 1$ and therefore $N\ge 3$ and $N \le 2(N-1) \le 2 \cdot \left( 3k \cdot \varphi^{1-p} \cdot
\left(\frac{9l}{\gamma}\right)^2 \right)$, which implies
\begin{equation}
\label{eq:bound-instance-size}
|S| \cdot |A| \le 
    N \cdot (3k + l + 5) \cdot (l + 1) 
\le \varphi^{1-p}\cdot 2 \cdot \left( 3k \cdot 
\left(\frac{9l}{\gamma}\right)^2 \right)
\cdot (3k + l + 5) \cdot (l + 1) 
\le 
\varphi.
\end{equation}

Without loss of generality, we can also assume that
$k \le |\mathcal{S}|$;
the X3C problem is always a no-instance when $|\mathcal{S}| < k$ since this implies $|E| > 3|\mathcal{S}|$. 
Therefore
the X3C problem remains NP-hard with this restriction.\footnote{We can further assume that $k \ge 1/\epsilon$ without loss of generality given that $\epsilon$ is fixed, in which case we have $x_{ij} = \frac{m}{\gamma} \left(\frac{\epsilon}{1-\gamma} + \delta\right) - \gamma > 0$, so the reduction would not rely on negative rewards.}
The restriction implies that $k + 1 \le 2|\mathcal{S}| = 2l$, which will be useful in the sequel.

Observe that by the above construction, regardless of what policy is chosen, the agent will end up in $\tilde{s}_f$ in exactly three steps and will stay there forever. Therefore, the reward of this state does not matter as it cancels out when considering
$\score^{\pi, R} - \score^{\tilde{\pi}, R}$ for all
$\pi, \tilde{\pi}, R$. We now proceed with the proof.

\paragraph{Correctness of the Reduction}
Let $\xi = \sqrt{k}$.
We will show next that if the X3C instance is a yes-instance, then this \eqref{prob.reward_design_simple} instance admits an optimal solution $R$ with $\norm{\overline{R} - R}_2 \le \xi$; otherwise, any feasible solution $R$ of \eqref{prob.reward_design_simple} is such that $\norm{\overline{R} - R}_2 > \xi \cdot \sqrt{(|S| \cdot |A|)^{1-p}}$.

\medskip

{\em First, suppose that the X3C instance if a yes-instance.}
By definition, there exists a size-$k$ set $Q \subseteq \{1,\dots, l\}$, such that $\cup_{j \in Q} S_j = E$.
Consider a solution $R$ obtained by increasing each $\omega_j$ to $1$ if $j \in Q$.
Since $|Q| = k$ , we have $\left\| \overline{R} - R \right\|_2 = \sqrt{k} = \xi$.
We verify that $R$ is a feasible solution, with $\optEpsDet (R) \subseteq \allowedpi$. 
In particular, in each state $s_i$, $i= 1, \dots, 3k$, we verify that taking action $a_\dagger$ would results in a loss of greater than $\epsilon$ in the policy score, as compared with taking an action $a_j$ such that $e_i \in S_j$ and $j \in Q$; we know such an $a_j$ exists because $Q$ is an exact set cover.
Similarly, we show that in state $s_*$, taking action $a_1$ is at least $\epsilon$ better than taking action $a_\dagger$. Note this proves the claim that $\optEpsDet (R) \subseteq \allowedpi$: if a policy takes an inadmissible action in one of the states, changing that action to $a_j$ or $a_1$ will cause an increase of at least $\epsilon$ in its score. This means that the inadmissible policy could not have been in $\optEpsDet(R)$.
Note that since there may be many copies of some states in the MDP, when referring to a state
$s_i$ or a state $s_*$, we are technically referring to one of these copies. For convenience however,
we will not consider this dependence in our notation.

Formally, let $\hat{\pi}$ be a policy that chooses that action $a_\dagger$ in some $s_i$ and denote by ${\pi}$ a policy such that $\pi(s)=\hat{\pi}(s)$ for all $s\ne s_i$ and $\pi(s)=a_j$ for $s=s_i$ where $j$ is chosen such that $e_i\in S_j$ and $j\in Q$. It follows that
\begin{align*}
    \frac{\score^{\pi, R} - \score^{\hat{\pi}, R}}{1-\gamma}
    &=
    \frac{1}{m}\cdot\gamma\left(
        V^{\pi, R}(s_i) - V^{\hat{\pi}, R}(s_i)
    \right)
    \\&=
    \frac{1}{m}\cdot\gamma\left(
        R(s_i, a_j) + \gamma V^{\pi, R}(t_j) - R(s_i, a_\dagger) 
         -
        \gamma R(\tilde{s}_0)
    \right)
    \\&=
    \frac{1}{m}\cdot\gamma\left(
        x_{ij} + \gamma \cdot 1 -
        0 - \gamma \cdot 0
    \right)
    \\&=
    \frac{1}{m}\cdot\gamma\left(
        \frac{m}{\gamma} \left(\frac{\epsilon}{1-\gamma} + \delta \right) - \gamma + \gamma - 0
    \right)
    \\&=
    \frac{\epsilon}{1-\gamma} + \delta
    \\
    &\ge \frac{\epsilon}{1-\gamma}
\end{align*}
Similarly, if $\hat{\pi}(s_*)=a_\dagger$, by defining $\pi$ such that
$\pi(s)=\hat{\pi}(s)$ for $s\ne s_*$ and
$\pi(s) = a_1$ for $s=s_*$, 
\begin{align*}
     \frac{\score^{\pi, R} - \score^{\hat{\pi}, R}}{1-\gamma}
    &=
    \frac{1}{m}\cdot\gamma\left(
        V^{\pi, R}(s_*) - V^{\hat{\pi}, R}(s_*)
    \right)
    \\&=
    \frac{1}{m}\cdot\gamma\left(
        R(s_*, a_1) + \gamma R(\tilde{s}_1) - R(a_{\dagger}) -  \frac{\gamma}{l}\sum_{j}V^{\pi, R}(t_j)
    \right)
    \\&=
    \frac{1}{m}\cdot\gamma\left(
        \gamma\cdot\frac{k}{l} + \frac{m}{\gamma} \left(\frac{\epsilon}{1-\gamma} + \delta \right) - \frac{\gamma k}{l}
    \right)
    \\&=
    \frac{1}{m}\cdot\gamma\big( \frac{m}{\gamma}(\frac{\epsilon}{1-\gamma} + \delta)
    \big)
    \\&=
    \frac{\epsilon}{1-\gamma} + \delta \ge \frac{\epsilon}{1-\gamma}
\end{align*}
It is therefore clear that in both cases, $\hat{\pi}$ cannot be in $\optEpsDet(R)$ as $\score^{\pi, R} - \score^{\hat{\pi}, R}\ge \epsilon$.

\medskip
{\em Conversely, suppose that the X3C instance is a no-instance.}
Consider an arbitrary feasible solution $R$, i.e., $\optEpsDet (R) \subseteq \allowedpi$.
Suppose that the parameters $\omega_j$, $x_{ij}$, $y$, $z_j$, $u$ and $r$ are modified in $R$ to $\tilde{\omega}_j$, $\tilde{x}_{ij}$, $\tilde{y}$, $\tilde{z}_j$, $\tilde{u}$ and $\tilde{r}$ respectively. Note that while technically these values may
be modified differently for different copies of the copied states, our results focus on one
copy and obtain bounds on these parameters. Since our choice of copy is arbitrary, the bound holds for any copy.
~\\
We will consider two cases and we will show that in both cases it holds that
\begin{align*}
  (\tilde{r} - r)^2 +
  (\tilde{y} - y)^2 +
  (\tilde{z}_* - z_*)^2 +
  (\tilde{u} - u)^2 +
  \sum_{ij}(\tilde{x}_{ij} - x_{ij})^2 +
  \sum_{i}(\tilde{z}_{i} - z_{i})^2
  \ge \frac{1}{3} \cdot (\frac{\gamma}{8l})^2.
\end{align*}
Since there are $N$ copies of each of these values, it follows that
\begin{align*}
  \norm{\overline{R} - R}_2 
  &\ge \sqrt{\frac{N}{3} \cdot \left(\frac{\gamma}{8l}\right)^2}  
  \\&\ge
  \sqrt{k \cdot \varphi^{1-p}\cdot\left(\frac{9l}{\gamma}\right)^2
  \cdot \left(\frac{\gamma}{8l} \right)^2
  }
  \\&> \sqrt{k \cdot \varphi^{1-p}}
  \\&
  \overset{\eqref{eq:bound-instance-size}}{\ge}
  \sqrt{k \cdot (|S| \cdot |A|)^{1-p}}
  \\&=
  \xi \cdot \sqrt{(|S| \cdot |A|)^{1-p}}
\end{align*}
which completes the proof.
\paragraph{Case 1.} $\sum_{j = 1}^l \tilde{\omega}_j \ge k + 1/2$.
Let $\hat{\pi}$ be an optimal policy under $R$. 
Hence, $\hat{\pi} \in \optEpsDet (R) \subseteq \allowedpi$, which means that $\hat{\pi}(s_*) \neq a_\dagger$ as $a_\dagger$ is not admissible.
Now, consider an alternative policy $\pi$, such that $\pi(s) = \hat{\pi}(s)$ for all $s\neq s_*$, and $\pi(s_*) = a_\dagger$.
Since $\pi$ is not admissible, we have $\pi \notin \optEpsDet (R)$, which means $\rho^{\pi, R} \le \rho^{\hat{\pi}, R} - \epsilon$. 
Note that 
\begin{align*}
\frac{\rho^{\pi, R} - \rho^{\hat{\pi}, R}
}{1-\gamma}
&=  \frac{1}{m} \cdot \gamma \left( V^{\pi,R}(s_*) - V^{\hat{\pi},R}(s_*) \right) \\
&= \frac{\gamma}{m} \left( R\left(s_*, a_\dagger \right) + \frac{\gamma}{l} \sum_{j = 1}^l V^{\pi, R}(t_j) - R \left(s_*, a_1 \right) - 
\gamma R(\tilde{s}_1)
\right) \\
&= \frac{\gamma}{m} \left( \tilde{z}_* + \frac{\gamma}{l} \sum_{j = 1}^l  \tilde{\omega}_j -  \tilde{y} - \gamma \tilde{r} \right) \\
&= \frac{\gamma}{m} \left( (\tilde{z}_* - z_*) +
\frac{\gamma}{l} \sum_{j = 1}^l  \tilde{\omega}_j -  (\tilde{y} - y) - \gamma (\tilde{r} - r) 
+ \left(z_{*} - y - \gamma \cdot r\right)
\right) \\
&= \frac{\gamma}{m} \left( (\tilde{z}_* - z_*) +
\frac{\gamma}{l} \sum_{j = 1}^l  \tilde{\omega}_j -  (\tilde{y} - y) - \gamma (\tilde{r} - r) 
+ \left(0 - \gamma \cdot \frac{k}{l} - \frac{m}{\gamma} \cdot (\frac{\epsilon}{1-\gamma} + \delta) + \gamma \cdot 0\right)
\right) \\
&= \frac{\gamma}{m} \left( \frac{\gamma}{l} \left(\sum_{j = 1}^l  \tilde{\omega}_j -  k\right) + \left(\tilde{z}_* - z_* \right) - \left(\tilde{y} - y \right) -
\gamma\left(\tilde{r} -r\right)
\right) -
(\frac{\epsilon}{1-\gamma} + \delta).
\end{align*}
Now that $\rho^{\pi, R} \le \rho^{\hat{\pi}, R} - \epsilon$, plugging this in the above equation and rearranging the terms leads us to the following result:
\begin{align*}
\left( \sum_{j = 1}^l  \tilde{\omega}_j -  k \right) + \frac{l}{\gamma} \left( \tilde{z}_* - z_* \right) -  \frac{l}{\gamma} \left(\tilde{y} - y \right) 
- l\cdot\left(\tilde{r} - r \right) 
\le \delta \cdot \frac{m \cdot l}{\gamma^2} \le 1/4.
\end{align*}
Given the assumption that $\sum_{j = 1}^l \tilde{\omega}_j \ge k + 1/2$ with this case, we then have
\begin{align*}
\gamma \left(\tilde{r} - r \right) +
    \left(\tilde{y} - y \right) - \left(\tilde{z}_* - z_*\right) \ge \frac{\gamma}{4l}.
\end{align*}
Now note that for any three real numbers $a, b, c$, it holds by Cauchy–Schwarz that 
\begin{align*}
    a^2 + b^2 + c^2 \ge \frac{(\gamma a+ b - c)^2}{1 + 1 + \gamma} \ge 
\frac{(\gamma a+ b - c)^2}{3}.
\end{align*}
Applying this result, we have
\begin{align}
\left(\tilde{r} - r \right)^2 
+
\left(\tilde{y} - y \right)^2 + \left(\tilde{z}_* - z_*\right)^2 
&\ge
\frac{1}{3} \cdot \left(\frac{\gamma}{4l}\right)^2
\label{eq:reduction-y-z-lb}
\\\notag&\ge
\frac{1}{3} \cdot \left(\frac{\gamma}{8l}\right)^2.
\end{align}

\paragraph{Case 2.} $\sum_{j = 1}^l \tilde{\omega}_j < k + 1/2$.
Let $Q = \left\{j: \tilde{\omega}_j \ge \frac{k+ 1/2}{k+1} \right\}$.
Hence, the size of $Q$ is at most $k$: otherwise, there are at least $k+1$ numbers in $\tilde{\omega}_1, \dots, \tilde{\omega}_l$ bounded by $\frac{k+ 1/2}{k+1}$ from below, which would imply that $\sum_{j = 1}^l \tilde{\omega}_j \ge k + 1/2$.

By assumption, the X3C instance is a no-instance, so by definition, $Q$ cannot be an exact cover, which means that there exists an element $e_\ell \in E$ not in any subset $S_j$, $j \in Q$.
Accordingly, in the MDP constructed, besides $\tilde{s}_0$, state $s_\ell$ only connects subsequently to states $t_\eta$, with $\eta \notin Q$ (and hence, $\tilde{\omega}_\eta < \frac{k+ 1/2}{k+1}$).

Similarly to the analysis of Case 1, 
let $\hat{\pi}$ be an optimal policy under $R$, so $\hat{\pi}(s_\ell) \neq a_\dagger$ as $a_\dagger$ is not admissible.
Hence, $\hat{\pi} (s_\ell) = a_\eta$ for some $\eta \in \{1,\dots,l\}$ such that $e_\ell \in S_\eta$; in this case, $\eta \notin Q$.

Consider an alternative policy $\pi$, such that $\pi(s) = \hat{\pi}(s)$ for all $s\neq s_\ell$, and $\pi(s_\ell) = a_\dagger$, so $\pi$ is not admissible.
By assumption $R$ is a feasible solution, which means $\pi \notin \optEpsDet (R)$ and hence, $\rho^{\pi, R} \le \rho^{\hat{\pi}, R} - \epsilon$. 
We have
\begin{align*}
- \frac{\epsilon}{1-\gamma} 
&\ge \frac{\rho^{\pi, R} - \rho^{\hat{\pi}, R}}{1-\gamma}
\\&=  \frac{1}{m} \cdot \gamma \left( V^{\pi,R}(s_\ell) - V^{\hat{\pi},R}(s_\ell) \right) \\
&=  \frac{\gamma}{m} \Big( R\left(s_\ell, a_\dagger \right)
+ 
\gamma R\left(\tilde{s}_0 \right)
- R \left(s_\ell, a_\eta \right) - \gamma \cdot \tilde{\omega}_\eta \Big) \\
&= \frac{\gamma}{m} \left( \tilde{z}_\ell + \gamma \tilde{u} - \tilde{x}_{\ell,\eta} - \gamma \cdot \tilde{\omega}_\eta \right) \\
&= \frac{\gamma}{m} \left( \tilde{z}_\ell + \gamma \tilde{u} - \tilde{x}_{\ell,\eta} - \gamma \cdot \tilde{\omega}_\eta \right) 
+\frac{\gamma}{m} \cdot (z_{\ell} + \gamma \cdot  u - x_{\ell, \eta})
\\
&= \frac{\gamma}{m} \left( \left( \tilde{z}_\ell - z_\ell \right)
+ 
\gamma\left( \tilde{u} - u \right)
- \left(\tilde{x}_{\ell,\eta} - x_{\ell, \eta} \right) \right) + \frac{\gamma^2}{m} (1 - \tilde{\omega}_\eta) - \left(\frac{\epsilon}{1-\gamma} + \delta \right),
\end{align*}
which means
\begin{align*}
\left(\tilde{x}_{\ell,\eta} - x_{\ell, \eta} \right) - \left( \tilde{z}_\ell - z_\ell \right) 
- 
\gamma \left( \tilde{u} - u \right) 
&\ge \gamma \left( 1 - \tilde{\omega}_\eta \right) - \frac{m}{\gamma} \cdot \delta \\
&> \frac{1/2}{k+1} \cdot \gamma - \frac{m}{\gamma} \cdot \delta \\
&= \gamma \left( \frac{1}{2(k+1)} - \frac{1}{8l} \right) \\
&\ge \gamma \cdot \frac{1}{8l},
\end{align*}
where we use the inequality $k+1 \le 2l$, which is implied by the assumption that $k \le |\mathcal{S}|$ as we mentioned previously.
In the same way we derived \eqref{eq:reduction-y-z-lb}, we can obtain the following lower bound:
\begin{align*}
 \left(\tilde{u} - u \right)^2
 + 
    \left(\tilde{x}_{\ell,\eta} - x_{\ell, \eta} \right)^2 + \left( \tilde{z}_\ell - z_\ell \right)^2 \ge \frac{1}{3} \cdot \left(\frac{\gamma}{8l}\right)^2,
\end{align*}
which completes the proof.
\end{proof}

We now show that the hardness result in Theorem \ref{thm.copmutational_hardness}, holds for the optimization problem
\eqref{prob.reward_design.approx} as well.
\begin{theorem}
\label{thm.copmutational_hardness-P2}
For any constant $p \in (0,1)$, it is NP-hard to distinguish between instances of \eqref{prob.reward_design.approx} that have optimal values at most $\xi$ and instances that have optimal values larger than $\xi \cdot \sqrt{(|S| \cdot |A|)^{1-p}}$.
The result holds even when the parameters $\epsilon$ and $\gamma$ in \eqref{prob.reward_design.approx} are fixed to arbitrary values subject to $\epsilon > 0$ and $\gamma \in (0,1)$.
\end{theorem}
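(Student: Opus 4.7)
My plan is to adapt the X3C-based reduction from the proof of Theorem \ref{thm.copmutational_hardness}, reusing its MDP construction but rescaling all rewards by a factor $K$. The key difficulty is that the constraint of \eqref{prob.reward_design.approx} is strictly stronger than that of \eqref{prob.reward_design_simple}: it requires every $\pi' \in \optEpsDet(R)$ to agree with the chosen admissible $\pi$ on $\posStates^{\pi}$, rather than merely being admissible.

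For the no-instance direction I treat $\lambda$ as part of the instance and set it to $0$. Then any feasible $(\pi, R)$ for \eqref{prob.reward_design.approx} has $\optEpsDet(R) \subseteq \allowedpi$: any $\pi' \in \optEpsDet(R)$ agrees with $\pi$ on $\posStates^{\pi}$, and Lemma \ref{lm.same_occupancy} gives $\occstate^{\pi'} = \occstate^{\pi}$ and hence $\pi' \in \allowedpi$. So $R$ is feasible for \eqref{prob.reward_design_simple}, and the no-case lower bound of Theorem \ref{thm.copmutational_hardness} transfers directly. For the yes-instance direction, the difficulty is that in the original construction the score gap between the exact-cover action $a_j$ (with $j \in Q$, $e_i \in S_j$) and another admissible action $a_{j'}$ (with $j' \notin Q$, $e_i \in S_{j'}$) at a state $s_i$ is only $\Theta((1-\gamma)\gamma^2/m)$, which drops below $\epsilon$ for large instances, so other admissible policies may lie in $\optEpsDet(R^*)$. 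I rescale every reward by $K := \lceil \epsilon m/((1-\gamma)\gamma^2) \rceil + 1$ so that this gap strictly exceeds $\epsilon$, take $\pi^*$ to be the admissible policy induced by the exact cover (i.e., $\pi^*(s_i) = a_j$ for the unique $j \in Q$ with $e_i \in S_j$, and $\pi^*(s_*) = a_1$), and let $R^*$ be the scaled yes-case reward. The yes-case cost is $K\sqrt{k}$; the change of variables $\tilde R \mapsto \tilde R/K$ reduces the scaled no-case analysis to the original one with $\epsilon/K$ in place of $\epsilon$ (the reduction's parameter $\delta = \gamma^2/(8ml)$ is independent of $\epsilon$, so the argument goes through unchanged), giving a no-case lower bound $K \sqrt{N/3}\cdot\gamma/(8l)$. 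Since both bounds scale by the same $K$, the multiplicative gap $\sqrt{(|S|\cdot|A|)^{1-p}}$ is preserved.

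The main obstacle will be verifying that the scaled $(\pi^*, R^*)$ is feasible for \eqref{prob.reward_design.approx}, i.e., that every $\pi' \in \optEpsDet(R^*)$ agrees with $\pi^*$ on $\posStates^{\pi^*}$. The argument combines the reasoning of Theorem \ref{thm.copmutational_hardness} (which rules out inadmissible actions) with an induction over the number of states in $\posStates^{\pi^*}$ where $\pi'$ disagrees with $\pi^*$, in the spirit of Part 1 of the proof of Proposition \ref{lm.rp_approx}: the single-disagreement case is handled by the $K$-scaling, and the multi-disagreement case is reduced via an intermediate policy with strictly fewer disagreements and no smaller score under $R^*$, contradicting $\epsilon$-optimality of $\pi'$.
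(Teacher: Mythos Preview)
Your overall strategy---set $\lambda=0$, reuse the X3C reduction, and rescale to force a large enough gap between the exact-cover policy and all other admissible policies---is exactly the paper's. The no-case reduction to \eqref{prob.reward_design_simple} via Lemma~\ref{lm.same_occupancy} and the induction over disagreements for the yes-case are also the right moves. However, there is a genuine gap in how you perform the rescaling, and it breaks your no-case argument.

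You scale $\overline{R}_{\epsilon}$ (the Theorem~\ref{thm.copmutational_hardness} construction built with the given $\epsilon$) by $K$. After the change of variables $\tilde R\mapsto\tilde R/K$, you are comparing feasible $\tilde R/K$ to the base reward $\overline{R}_{\epsilon}$ under optimality parameter $\epsilon/K$, and you claim this is ``the original analysis with $\epsilon/K$ in place of $\epsilon$'' because $\delta=\gamma^2/(8ml)$ is $\epsilon$-free. But $\delta$ is not the only $\epsilon$-dependent quantity in the construction: the base rewards $x_{ij}$ and $y$ both contain the term $\tfrac{m}{\gamma}\cdot\tfrac{\epsilon}{1-\gamma}$. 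In the no-case computations of Theorem~\ref{thm.copmutational_hardness} (both Case~1 and Case~2), this term is exactly what cancels against the $-\tfrac{\epsilon}{1-\gamma}$ coming from the feasibility constraint $\rho^{\pi}\le\rho^{\hat\pi}-\epsilon$, leaving only the small $\delta$-contribution. With your mismatch (rewards carry $\epsilon$, constraint carries $\epsilon/K$), the cancellation leaves behind an additive
\[
\frac{\epsilon(1-1/K)}{1-\gamma}\cdot\frac{ml}{\gamma^2}\;\approx\;(K-1)\,l,
\]
which is enormous and destroys the inequalities that yield the per-copy lower bound $\tfrac{1}{3}\bigl(\tfrac{\gamma}{8l}\bigr)^2$. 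Your claimed no-case bound $K\sqrt{N/3}\cdot\gamma/(8l)$ therefore does not follow.

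The paper's fix is a small but essential twist on your idea: it sets $\overline{R}':=\eta\cdot\overline{R}_{\epsilon/\eta}$, i.e., it first rebuilds the Theorem~\ref{thm.copmutational_hardness} construction with parameter $\epsilon/\eta$ and only then scales by $\eta$ (with $\eta=\tfrac{\epsilon m}{\gamma^2(1-\gamma)}$). After the change of variables, both the base reward and the optimality parameter carry $\epsilon/\eta$, so Theorem~\ref{thm.copmutational_hardness} applies verbatim at parameter $\epsilon/\eta$ (the theorem holds for every $\epsilon>0$), and both the yes-case cost and the no-case lower bound scale by the same factor $\eta$. Your yes-case and the induction plan are fine; replacing $K\cdot\overline{R}_{\epsilon}$ by $K\cdot\overline{R}_{\epsilon/K}$ repairs the argument.
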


\begin{proof}
Throughout the proof, we assume that $\lambda = 0$.
To prove the theorem, we use the same reduction 
as the one used in the proof of Theorem~\ref{thm.copmutational_hardness} with slight modification.
Fomrally, given an instance of the X3C problem $(E, \mathcal{S})$
and parameter $\epsilon$,
let $\overline{R}_{\epsilon}$ be the reward function $\overline{R}$ as 
defined in the proof of Theorem \ref{thm.copmutational_hardness-P2}
with the same underlying MDP. We have made the dependence on $\epsilon$
explicit here for reasons that will be clear shortly.
As before, the underlying MDP is shown in Figure \ref{fig:reduction-P1}.
Similarly, for instances of the X3C problem where an exact cover $Q$ exists,
let $R_{\epsilon}$ be the reward function defined in Part 1 of the same proof.
Note that $R_{\epsilon}$ was constructed using $Q$ and therefore implicitly depends on it.
Define $\xi = \sqrt{k}$ as before. 

Given these definitions, recall that our proof showed 
that $(\overline{R}_{\epsilon}, R_{\epsilon}, \xi)$
satisfied the following two properties.
\begin{itemize}
  \item For instances of the problem where exact cover is possible,
  $\norm{R_{\epsilon} - \overline{R}_{\epsilon}}_2\le \xi$ and
  $R_{\epsilon}$ satisfied the constraints of \eqref{prob.reward_design}
  with parameter $\epsilon$, i.e,
  \begin{align*}
    \optEpsDet(R_{\epsilon}) \subseteq \allowedpi,
  \end{align*}
  which is equivalent to
  \begin{align*}
    \forall \pi \notin \allowedpi: \score^{\pi, R_{\epsilon}} \le \max_{\pi'} \score^{ \pi', R_{\epsilon}} - \epsilon.
  \end{align*}
  \item For instances of the X3C problem where an exact cover does not exist,
  for any reward function $\widetilde{R}$ satisfying
  \begin{align*}
    \forall \pi \notin \allowedpi: \score^{\pi, \widetilde{R}} \le \max_{\pi'} \score^{ \pi', \widetilde{R}} - \epsilon,
  \end{align*}
  it follows that $\norm{\widetilde{R} - \overline{R}_{\epsilon}}_2 > \sqrt{(|S| \cdot |A|)^{1-p}} \cdot \xi$.
\end{itemize}
Now, for $\eta > 0$, define the reward functions 
$\overline{R'}_{\epsilon, \eta} := \eta \cdot \overline{R}_{\frac{\epsilon}{\eta}}$
and
${R'}_{\epsilon, \eta} := \eta \cdot {R}_{\frac{\epsilon}{\eta}}$
and set $\xi'_{\eta} := \eta \cdot \xi$.
Since the score function $\score^{\pi, R}$ is linear in $R$, it follows that
$(\overline{R'}_{\epsilon, \eta}, R'_{\epsilon, \eta}, \xi'_{\eta})$
satisfy the same two properties listed above.

In order to prove the theorem statement, for a given instance
of X3C and a given parameter $\epsilon$, we will need to provide an MDP with reward function
$\overline{R'}$ and a parameter $\xi'$ such that:
\begin{itemize}
  \item If exact cover of the X3C instance is possible,
  the cost of the optimization problem \eqref{prob.reward_design.approx} 
  with parameter $\epsilon$ is less than or equal to $\xi'$.
  \item If exact cover of the X3C instance is not possible,
  the cost of the optimization problem \eqref{prob.reward_design.approx} 
  with parameter $\epsilon$ is more than $\sqrt{(|S| \cdot |A|)^{1-p}} \cdot \xi'$.
\end{itemize}
Set $\eta=\frac{\epsilon \cdot m}{\gamma^2 \cdot (1-\gamma)}$.
We claim that $\overline{R'} = \overline{R'}_{\epsilon, \eta}$
and $\xi'=\xi'_{\eta}$ satisfy the above properties.
The second property is easy to check.
Formally, if the cost of \eqref{prob.reward_design.approx} is less than
equal to
$\sqrt{(|S| \cdot |A|)^{1-p}} \cdot \xi'$, so is the cost of \eqref{prob.reward_design}. By the second
property of $(\overline{R'}_{\epsilon, \eta}, R'_{\epsilon, \eta}, \xi'_{\eta})$
discussed before, it follows that the X3C instance has an exact cover.
~\\
For the first property, assume that the X3C instance has an exact cover
$Q$ and consider
$R'=R'_{\epsilon, \eta}$. 
By the first property of
$(\overline{R'}_{\epsilon, \eta}, R'_{\epsilon, \eta}, \xi'_{\eta})$,
it follows that $\norm{R' - \overline{R'}}_2\le \xi'$.
Furthermore,
\begin{align}
  \forall \pi \notin \allowedpi: \score^{\pi, R'} \le \max_{\pi'} \score^{ \pi', R'} - \epsilon.
  \label{eq:dec28_1504}
\end{align}
Now, consider the policy $\hat{\pi}$ as follows:
for each $i = 1,\dots, 3k$, $\hat{\pi}(s_i) = a_j$ such that
$e_i \in S_j$ and $j \in Q$ (such $j$ exist
and is unique given that $Q$ is an exact set cover);
and $\hat{\pi}(s_*) = a_1$.
We claim that $(\overline{R'}, \hat{\pi})$ are feasible for \eqref{prob.reward_design.approx}
which would prove the claim.
~\\
Formally, let $\pi\ne \hat{\pi}$ be a deterministic policy.
We need to show that $\score^{\pi, R'} \le \score^{\hat{\pi}, R'} - \epsilon$.
We assume without loss of generality that
$\pi\in \allowedpi$. If we prove this, then since the optimal policy 
in $R'$ is admissible by \eqref{eq:dec28_1504}, it follows that
$\hat{\pi}$ is the optimal policy and therefore the case of
$\pi \notin \allowedpi$ is covered by \eqref{eq:dec28_1504}.
~\\
With this assumption in mind, note that since $\pi$ and $\hat{\pi}$ are admissible,
there for each $i\in \{1, \dots, 3k\}$, there exist
$j(i)$ and $\hat{j}(i)$ such that $\pi(s_i)=t_{j(i)}$ and $\hat{\pi}(s_i) = t_{\hat{j}(i)}$.
Furthermore, $\pi(s_*) = \hat{\pi}(s_*)$ since both policies are admissible.
It therefore follows that
\begin{align*}
  \frac{\score^{\hat{\pi}, R'} - \score^{\pi, R'}}{1-\gamma} &=
  \frac{\gamma}{m} \cdot \left(
    \sum_{i=1}^{3k} \left(
      V^{\hat{\pi}, R'}(s_i)
      -
      V^{{\pi}, R'}(s_i)
    \right)
  \right)
  \\&=
  \frac{\gamma}{m} \cdot \left(
    \sum_{i=1}^{3k} \left(
      x_{ij} + \gamma \cdot V^{\hat{\pi}, R'}(t_{\hat{j}(i)}) -
      x_{ij} - \gamma \cdot V^{{\pi}, R'}(t_{{j}(i)}) 
    \right)
  \right)
  \\&=
  \frac{\gamma^2}{m} \cdot \left(
    \sum_{i=1}^{3k} \left(
      V^{\hat{\pi}, R'}(t_{\hat{j}(i)}) -
      V^{{\pi}, R'}(t_{{j}(i)}) 
    \right)
  \right)
\end{align*}
Note however that by construction of ${R'}$,
for any $1 \le j \le l$,
\begin{align*}
  V^{\hat{\pi}, R'}(t_{j}) &= {R}'_{\epsilon, \eta}(t_j)
  =
  \eta \cdot R_{\frac{\epsilon}{\eta}}(t_j)
  = 
  \eta \cdot \ind{S_j\in Q}.
\end{align*}
It therefore follows that $V^{\hat{\pi}, R'}(t_{\hat{j}(i)}) = \eta$
since $Q$ is a cover.
Furthermore, 
since $Q$ is an \emph{exact} cover,
$e_i$ is only covered by $S_{\hat{j}(i)}$ which means
$V^{\pi, R'}(t_{j(i)}) = \eta \cdot \ind{j(i) = \hat{j}(i)}$.
Threfore,
\begin{align*}
  \frac{\score^{\hat{\pi}, R'} - \score^{\pi, R'}}{1-\gamma} &=
  \frac{\gamma^2}{m} \cdot \left(
    \sum_{i=1}^{3k} \left(
      V^{\hat{\pi}, R'}(t_{\hat{j}(i)}) -
      V^{{\pi}, R'}(t_{{j}(i)}) 
    \right)
  \right)
  \\&=
  \frac{\gamma^2}{m} \cdot \eta \cdot \left(
  \sum_{i=1}^{3k}\ind{\pi(s_i) \ne \hat{\pi}(s_i)}
  \right)
\end{align*}
Note however that $\pi(s_i)\ne \hat{\pi}(s_i)$ for some
$1\le i \le 3k$. This is because $\pi \ne \hat{\pi}$
and since both are admissible policies, they can only disagree
on some state $s_i$.
Therefore, $  \sum_{i=1}^{3k}\ind{\pi(s_i) \ne \hat{\pi}(s_i)} \ge 1$ which implies
\begin{align*}
  \score^{\hat{\pi}, R'} - \score^{\pi, R'} \ge
  (1-\gamma) \cdot \frac{\gamma^2}{m} \cdot \eta = \epsilon,
\end{align*}
which completes the proof.
\end{proof}


\subsection{Proof of Proposition \ref{prop.reward_design.approx}}\label{sec.proof_of_prop_reward_design_approx}

{\bf Statement:} {\em 
Let $\RsolA$ and $\RsolB$ be the optimal solutions to
	\eqref{prob.reward_design} and \eqref{prob.reward_design.approx} respectively and let $l(R)$ be a function that outputs the objective of the optimization problem
	\eqref{prob.reward_design}, i.e.,
	\begin{align*}
	l(R)= \max_{\pi\in \optEpsDet(R)}\norm{\overline{R} - R}_{2} - \lambda\score^{\pi, \overline{R}}.
	\end{align*}
	Then $\RsolB$ satisfies the constraints of \eqref{prob.reward_design}, i.e, $\optEpsDet(\RsolB) \subseteq \allowedpi$, and
	\begin{align*}
	l(\RsolA) \le l(\RsolB) \le l(\RsolA) + \frac{\epsilon}{\occstate_{\min}}\cdot\sqrt{|S|\cdot|A|}.
	\end{align*}
}
\begin{proof}

 We first prove that $\widehat{R}_2$ is feasible for \eqref{prob.reward_design}. This would prove the left inequlity given the optimality of $\widehat{R}_1$.
 In order to prove this, we just need to show that for all $\pi \in \allowedpi$,
 \begin{align*}
     \{
     \pi': \pi'(s) = \pi(s) \text{ if } \occstate^{\pi}(s) > 0
     \} \subseteq
     \allowedpi
 \end{align*}
 Given Lemma \ref{lm.same_occupancy}, if 
 $\pi'(s) = \pi(s)$ for all $s$ such that
 $\occstate^\pi(s) > 0$, then 
 $\occstate^{\pi}=\occstate^{\pi'}$.
 Therefore, for all states $s$, either $\occstate^\pi(s) > 0$, in which case
 $\pi'(s) = \pi(s) \in A^{\adm}_s$, or
 $\occstate^\pi(s) = 0$, in which case $\occstate^{\pi'}(s) = 0$. Therefore, $\pi' \in \allowedpi$ and the claim is proved.
	
	 As for the right inequality,
	let $\pi_1$ be an optimal policy under
	$\RsolA$. Given the constraints of
	\eqref{prob.reward_design}, $\pi_1\in \allowedpi$.
	Define $R'$ as
	\begin{align*}
		R'(s,a) = \begin{cases}
			\RsolA(s,a) - \frac{\epsilon}{\occstate_{\min}}
			&\CasesIf a \ne \pi_1(s) \land 
		\occstate^{\pi_1}(s) > 0
			\\
			\RsolA(s,a) &\CasesOW
		\end{cases}.
	\end{align*}
  We first show that $R', \pi_1$ are feasible for \eqref{prob.reward_design.approx}. 
	Let $\pi$ be a policy such that
	$\pi(\tilde{s}) \ne \pi_1(\tilde{s})$ for some $\tilde{s}$ that satisfies $\occstate^{\pi_1}(\tilde{s}) > 0$.
  We claim that this means there exists a state $s\in \posStates^{\pi_1} \cap \posStates^{\pi}$
  such that $\pi(s) \ne \pi_1(s)$.
  If this is not the case, then 
  Lemma \ref{lm.same_occupancy} (from section \nameref{app.background}) implies that
  $\occstate^{\pi_1} = \occstate^{\pi}$.
  This further implies that $\posStates^\pi = \posStates^{\pi_1}$ and therefore
  since $\tilde{s} \in \posStates^{\pi_1}$ and $\pi(\tilde{s}) \ne \pi_1(\tilde{s})$,
  we have reached a contradiction.
  Therefore, 
  there exists a state state $s\in \posStates^{\pi_1} \cap \posStates^{\pi}$
  such that $\pi(s) \ne \pi_1(s)$.
  Without loss of generality, assume that $\tilde{s}$ is this state.
  ~\\
  It follows that
	\begin{align*}
	    \score^{\pi_1, R'} - \score^{\pi, R'}
	    &=
	    (\score^{\pi_1, \widehat{R}_1} - \score^{\pi, \widehat{R}_1})
	    +
	    (\score^{\pi_1, R'} - \score^{\pi_1, \widehat{R}_1})
	    + 
	    (\score^{\pi, \widehat{R}_1} - \score^{\pi, R'})
	\end{align*}
	The first term is non-negative since $\pi_1$ was assumed to be optimal under $\widehat{R}_1$. The second term equals zero given the definition of $R'$. As for the last term,
	\begin{align*}
	    \score^{\pi, \widehat{R}_1} - \score^{\pi, R'}
	    &=
	    \sum_{s}\occstate^{\pi}(s)(\widehat{R}_1(s, \pi(s)) - R'(s, \pi(s)))
	    \\&=
	    \occstate^{\pi}(\tilde{s})(\widehat{R}_1(\tilde{s}, \pi(\tilde{s})) - R'(\tilde{s}, \pi(\tilde{s}))
	    +
	    \sum_{s\ne \tilde{s}}
	    \occstate^{\pi}(s)(\widehat{R}_1(s, \pi(s)) - R'(s, \pi(s))) 
	    \\&=
	    \occstate^{\pi}(\tilde{s})\cdot\frac{\epsilon}{\occstate_{\min}} +
	    \sum_{s\ne \tilde{s}}
	    \occstate^{\pi}(s)(\widehat{R}_1(s, \pi(s)) - R'(s, \pi(s))) 
	    \\&\overset{(i)}{\ge}
	    \occstate^{\pi}(\tilde{s})\cdot\frac{\epsilon}{\occstate_{\min}}
	    \\&\overset{(ii)}{\ge}
        \epsilon
	\end{align*}
	where $(i)$ follows from the fact that
	$\widehat{R}_1-R'$ is non-negative and $(ii)$ follows from the definition of $\occstate_{\min}$ and the fact $\tilde{s}\in \posStates^{\pi}$.
  Therefore, $R', \pi_1$ are feasible for \eqref{prob.reward_design.approx}. 

	Now, note that
	\begin{align*}
		\norm{\overline{R} - R'}_2 &\le 
		\norm{\overline{R} - \RsolA}_2  
		+
		\norm{\RsolA-R'}_2 
		\\&\le 
		\norm{\overline{R} - \RsolA}_2  
		+
		\sqrt{\sum_{s,a\ne\pi'(s)}
			\left (\frac{\epsilon}{\occstate_{\min}} \right)^2	
		}
		\\&\le 
		\norm{\overline{R} - \RsolA}_2  
		+
		\frac{\epsilon}{\occstate_{\min}}\sqrt{|S| \cdot |A|}.
	\end{align*}
	This means that
	\begin{align*}
		\norm{\overline{R} - R'}_2 -\lambda \score^{\pi_1,\overline{R}}&\le 
	\norm{\overline{R} - \RsolA}_2 - \lambda\score^{\pi_1,\overline{R}} + \frac{\epsilon}{\occstate_{\min}}\sqrt{|S| \cdot |A|} 
	\\&\le 
	\max_{\pi \in \optEpsDet(\RsolA)}
		\norm{\overline{R} - \RsolA}_2 - \lambda\cdot\score^{\pi,\overline{R}} + \frac{\epsilon}{\occstate_{\min}}\sqrt{|S| \cdot |A|} 
		\\&= 
		l(\RsolA) + \frac{\epsilon}{\occstate_{\min}}\sqrt{|S| \cdot |A|},
	\end{align*}
	where the second inequality is due to the fact that $\pi_1$ is optimal under $\RsolA$, so it belongs to the set $\optEpsDet(\RsolA)$.
	
	Now, let $\pi_2$ be a deterministic optimal policy under 
	$\RsolB$.
  Since $\RsolB$ was the solution to \eqref{prob.reward_design.approx}, for any $\pi\in \optEpsDet(\widehat{R}_2)$,
  it holds that
  $\pi(s) = \pi_2(s)$ for all $s\in \posStates^{\pi_2}$
  and therefore by Lemma \ref{lm.same_occupancy}, 
	$\score^{\pi, \overline{R}}=\score^{\pi_2, \overline{R}}$.
	Denote the solution to the optimization problem \eqref{prob.reward_poisoning_attack} (reward poisoning attack) for the target policy $\targetpi = \pi_1$ by $\widehat{R}_{\mathcal{A}}$. We obtain
	\begin{align*}
		l(\RsolB) &= \max_{\pi\in \optEpsDet(\widehat{R}_2)}\norm{\overline{R} - \RsolB}_2 - \lambda\cdot \score^{\pi, \overline{R}}\\&= 
		\norm{\overline{R} - \RsolB}_2 - \lambda\score^{\pi_2, \overline{R}}\\&
		\overset{(i)}{\le}
		\norm{\overline{R} - \widehat{R}_{\mathcal{A}}}_2 - \lambda\cdot \score^{\pi_1, \overline{R}}
		\\&\overset{(ii)}{\le}
			\norm{\overline{R} - R'}_2 -\lambda \cdot \score^{\pi_1, \overline{R}}
			\\&\le l(\RsolA) + \frac{\epsilon}{\occstate_{\min}}\cdot \sqrt{|S|\cdot|A|}.
	\end{align*}
	where $(i)$ follows from the optimality of 
	$\RsolB$ for
	\eqref{prob.reward_design.approx} and $(ii)$ follows from the definition of the reward poisoning attack \eqref{prob.reward_poisoning_attack}.
  We have therefore shown the right inequality of the Lemma's statement holds and the proof is complete.
\end{proof}


\section{Proofs of the Results from Section \nameref{sec.special_mdps}}\label{app.proof.special_mdps}

In this section, we provide proofs of our results in Section \nameref{sec.special_mdps}: Lemma \ref{lm.special_mdp.cost_of_poisoning} and Theorem
\ref{thm.spec_mdp_attack_form}.
Recall that for special MDPs since the transition probabilities are independent of the agent's policy, so is the state occupancy measure.
Concretely, 
since the Bellman flow constraint
\eqref{eq.bellman.occstate} characterizing
$\occstate^\pi$ is independent of policy, 
so is $\occstate^\pi$.
We therefore
use $\occstate$ instead of $\occstate^\pi$ to denote the state occupancy measure for special MDPs. Similarly, since $\posStates^\pi$ depends on $\pi$ only through $\occstate^\pi$, we use $\posStates$ instead of $\posStates^\pi$.

Before we prove the main results, we present and prove the following two lemmas.


\begin{lemma}\label{lm.surplus_Equation}
Consider a special MDP with a reward function $\overline{R}$ and a policy of interest $\targetpi$.
For all $s \in S$ s.t. $\occstate(s) > 0$, there exists a unique $x$ such that:
\begin{align}\label{eq.lm.surplus_Equation}
    \sum_{a \ne \targetpi(s)} \pos{\overline{R}(s, a) - x} = x - \overline{R}(s, \targetpi(s)) + \frac{\epsilon}{\occstate(s)}.
\end{align}
\end{lemma}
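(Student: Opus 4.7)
The plan is to show that the two sides of \eqref{eq.lm.surplus_Equation}, viewed as functions of $x$, behave in opposite monotonic directions, so that their difference is a strictly monotone continuous function with values of opposite sign at $\pm\infty$; existence and uniqueness of a crossing then follow from the intermediate value theorem and strict monotonicity.

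Concretely, define
\begin{align*}
f(x) = \sum_{a \ne \targetpi(s)} \pos{\overline{R}(s, a) - x} - \left( x - \overline{R}(s, \targetpi(s)) + \frac{\epsilon}{\occstate(s)} \right).
\end{align*}
First I would note that each summand $\pos{\overline{R}(s,a)-x}$ is continuous and non-increasing in $x$ (piecewise linear, with slope $-1$ when $x < \overline{R}(s,a)$ and slope $0$ otherwise), so the whole sum on the left is continuous and non-increasing. The right-hand side is affine in $x$ with slope $+1$, hence strictly increasing. Therefore $f$ is continuous and strictly decreasing in $x$, which immediately gives uniqueness: $f$ can vanish at most at one point.

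For existence, I would check the two limits. As $x \to -\infty$, every term $\pos{\overline{R}(s,a)-x}$ equals $\overline{R}(s,a)-x$, so the left-hand side behaves like $(|A|-1)(-x)$ plus a constant, while the right-hand side behaves like $x$ plus a constant; thus $f(x) \to +\infty$. As $x \to +\infty$, the left-hand side is eventually $0$ (once $x$ exceeds $\max_{a\ne\targetpi(s)}\overline{R}(s,a)$), while the right-hand side tends to $+\infty$; thus $f(x)\to -\infty$. By continuity of $f$ and the intermediate value theorem, there is some $x$ with $f(x)=0$, and by the strict monotonicity argument above this $x$ is unique. Note that $\occstate(s)>0$ is used only to ensure the term $\epsilon/\occstate(s)$ is well defined and finite; no additional obstacle arises. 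The argument is essentially routine once the monotonicity of both sides is observed, so I do not expect any real difficulty beyond stating these observations cleanly.
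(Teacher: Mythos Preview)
Your proposal is correct and essentially identical to the paper's proof: both argue that the difference of the two sides is continuous, strictly decreasing (since the positive-part sum is nonincreasing and the affine right side is strictly increasing), and has limits $\pm\infty$, then invoke the intermediate value theorem. The only cosmetic difference is that the paper absorbs the constant $\overline{R}(s,\targetpi(s)) - \epsilon/\occstate(s)$ into the target value rather than into $f$.
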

\begin{proof}
Fix the state $s$. 
    Consider the following function
    \begin{align*}
    f: \mathbb{R} \to \mathbb{R}, \quad
        f(x) := 
         \sum_{a \ne \targetpi(s)} \pos{\overline{R}(s, a) - x} -x.
    \end{align*}
    This function is strictly decreasing because
    $x\to -x$ is strictly decreasing and
    $x\to \pos{\overline{R}(s, a) - x}$ is decreasing. Furthermore, $\lim_{x\to \infty} f(x)=-\infty$ and $\lim_{x\to-\infty}f(x) = 
    \infty
    $.
    Therefore given the intermediate value theorem, there exists a unique number $x$ such that
    \begin{align*}
        f(x)=-\overline{R}(s, \targetpi(s))+
        \frac{\epsilon}{\occstate(s)}.
    \end{align*}
\end{proof}

\begin{lemma}\label{lm.special_mdp_attack_character}
Consider a special MDP with reward function $\overline{R}$. Let $\targetpi \in \PiDet$ be an arbitrary deterministic policy.
 Define $\widehat{R}^{\targetpi}$ as
\begin{align*}
    \widehat{R}^{\targetpi}(s, a) = \begin{cases}
     x_s + \frac{\epsilon}{\occstate(s)} \quad &\mbox{ if } \occstate(s) > 0 \land a = \targetpi(s)\\
     x_s \quad &\mbox{ if } \occstate(s) > 0 \land a \ne \targetpi(s) \land \overline{R}(s, a) \ge x_s \\
     \overline{R}(a, s) \quad &\mbox{ otherwise}
    \end{cases},
\end{align*}
where $x_s$ is the solution to the Eq. \eqref{eq.lm.surplus_Equation}.
$\widehat{R}^{\targetpi}$ is an optimal solution to the optimization problem \eqref{prob.reward_poisoning_attack}.
\end{lemma}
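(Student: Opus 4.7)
The plan is to exploit the defining property of special MDPs, namely that the state occupancy $\mu$ does not depend on the chosen policy, to reduce the optimization \eqref{prob.reward_poisoning_attack} to a collection of independent, one-state convex programs, and then solve each program in closed form via KKT.

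\emph{Step 1: reformulate the feasibility constraint.} Since $\mu$ is policy-independent in a special MDP, the score decomposes as $\score^{\pi,R}=\sum_s \mu(s)R(s,\pi(s))$, and the optimal $R$-policy is the myopic one: $\pi^*(s)\in\argmax_a R(s,a)$. I will show that the constraint $\optEpsDet(R)\subseteq\{\pi:\pi(s)=\targetpi(s)\text{ if }\mu(s)>0\}$ is equivalent to
\begin{align*}
R(s,\targetpi(s))-R(s,a)\ \ge\ \tfrac{\epsilon}{\mu(s)}\quad\text{for all }s\in\posStates,\ a\ne\targetpi(s),
\end{align*}
where $\posStates=\{s:\mu(s)>0\}$. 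The ``only if'' direction is obtained by testing the constraint on the single-state deviation that agrees with $\targetpi$ everywhere except at a chosen $s_0\in\posStates$: the score drop is exactly $\mu(s_0)(R(s_0,\targetpi(s_0))-R(s_0,a))$, and this must be $\ge\epsilon$. The ``if'' direction follows because any $\pi'$ disagreeing with $\targetpi$ at some $s_0\in\posStates$ suffers a drop of at least $\mu(s_0)\cdot\epsilon/\mu(s_0)=\epsilon$ coming from that term alone (all other terms being non-negative, as $\targetpi$ is myopically optimal).

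\emph{Step 2: decouple across states.} Under this reformulation, the objective $\|\overline R-R\|_2^2=\sum_s\sum_a(R(s,a)-\overline R(s,a))^2$ and the constraints are separable over $s$. For $s\notin\posStates$ there is no constraint, so the optimum takes $R(s,a)=\overline R(s,a)$, matching the stated form. For $s\in\posStates$ we must solve
\begin{align*}
\min\ \sum_a\bigl(R(s,a)-\overline R(s,a)\bigr)^2\ \text{ s.t. }\ R(s,\targetpi(s))-R(s,a)\ge c_s\ \ (a\ne\targetpi(s)),
\end{align*}
with $c_s=\epsilon/\mu(s)$.

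\emph{Step 3: solve the per-state convex program via KKT.} Writing Lagrange multipliers $\lambda_a\ge 0$ and eliminating, stationarity gives $R(s,a)=\overline R(s,a)-\lambda_a/2$ for $a\ne\targetpi(s)$ and $R(s,\targetpi(s))=\overline R(s,\targetpi(s))+\tfrac12\sum_a\lambda_a$. Complementary slackness then forces $R(s,a)=\min\bigl(\overline R(s,a),\,y_s\bigr)$ where $y_s:=R(s,\targetpi(s))-c_s$. Substituting back into the stationarity condition for $\targetpi(s)$ yields
\begin{align*}
y_s+c_s-\overline R(s,\targetpi(s))\ =\ \sum_{a\ne\targetpi(s)}\pos{\overline R(s,a)-y_s},
\end{align*}
which is precisely equation \eqref{eq.lm.surplus_Equation} with $x_s=y_s$. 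By Lemma~\ref{lm.surplus_Equation} this equation has a unique solution $x_s$, and the resulting reward function coincides with $\widehat R^{\targetpi}$ in the statement.

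\emph{Expected obstacle.} The only non-routine step is Step~1, since the constraint involves the set $\optEpsDet(R)$ of $\epsilon$-optimal policies rather than a single policy; here I will rely on the policy-independence of $\mu$ so that one-state deviations are the tightest constraints (multi-state deviations can only accumulate score drops). After Step~1, the remainder is a standard separable quadratic program whose KKT solution is easily recognized as the claimed $\widehat R^{\targetpi}$, with existence and uniqueness of $x_s$ supplied by Lemma~\ref{lm.surplus_Equation}.
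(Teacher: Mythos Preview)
Your proposal is correct. You and the paper share the same skeleton: both identify, via single-state deviations, that feasibility of $R$ for \eqref{prob.reward_poisoning_attack} in a special MDP is equivalent to the separable family of constraints $R(s,\targetpi(s))-R(s,a)\ge \epsilon/\mu(s)$ for $s\in\posStates$, $a\ne\targetpi(s)$, after which the problem decouples across states into independent quadratic programs.

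Where you diverge is in the optimality argument for the per-state problem. You \emph{derive} the optimizer constructively via KKT: stationarity and complementary slackness force $R(s,a)=\min(\overline{R}(s,a),y_s)$ for $a\ne\targetpi(s)$ and pin down $y_s$ through the balance equation, which you then recognize as \eqref{eq.lm.surplus_Equation}. The paper instead \emph{verifies} that the stated $\widehat{R}^{\targetpi}$ is optimal by a direct projection inequality: it expands $\|\overline{R}-R\|_2^2-\|\overline{R}-\widehat{R}^{\targetpi}\|_2^2\ge 2\langle \widehat{R}^{\targetpi}-R,\overline{R}-\widehat{R}^{\targetpi}\rangle$ and shows the inner product is nonnegative using the defining identity \eqref{eq.lm.surplus_Equation} (rewritten as \eqref{eq.key_point_special_R_hat}) together with the per-state feasibility bound for $R$. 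This is exactly the obtuse-angle characterization of Euclidean projection onto a closed convex set, applied by hand. Your route is more systematic and explains where the threshold form and the surplus equation come from; the paper's route is shorter once the answer is given and avoids invoking KKT machinery. Both are standard and fully rigorous here since each per-state feasible set is a closed convex polyhedron.
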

\begin{proof}
In order to show feasibility, let $\tilde{\pi}\in \PiDet$ be a deterministic policy such that 
$\tilde{\pi}(\tilde{s})\ne \targetpi(s)$ for some $\tilde{s}$ such that
$\occstate({\tilde{s}}) > 0$. It is clear that
\begin{align*}
    \score^{\targetpi, \widehat{R}^{\targetpi}}-\score^{\tilde{\pi}, \widehat{R}^{\targetpi}}
    &=
    \sum_{s}
    \occstate(s)
    \big(
      \widehat{R}^{\targetpi}
    (s, \targetpi(s)) - \widehat{R}^{\targetpi}(s, \tilde{\pi}(s))
    \big)
    \\&=
    \occstate(\tilde{s})\big(
      \widehat{R}^{\targetpi}(\tilde{s}, \targetpi(\tilde{s})) - \widehat{R}^{\targetpi}(\tilde{s}, \tilde{\pi}(\tilde{s}))
    \big) + 
    \sum_{s\ne \tilde{s}}
    \occstate(s)\big(
      \widehat{R}^{\targetpi}(s, \targetpi(s)) - \widehat{R}^{\targetpi}(s, \tilde{\pi}(s))
    \big)
    \\&\overset{(i)}{\ge}
    \occstate(\tilde{s})\big(
      \widehat{R}^{\targetpi}(\tilde{s}, \targetpi(\tilde{s})) - \widehat{R}^{\targetpi}(\tilde{s}, \tilde{\pi}(\tilde{s}))
    \big)
    \\&\overset{(ii)}{\ge}
    \occstate(\tilde{s})\frac{\epsilon}{\occstate(\tilde{s})}
    \ge \epsilon
\end{align*}
where $(i)$ and $(ii)$ both follow from the definition of $\widehat{R}^{\targetpi}$;
(i) follows from the fact that $\widehat{R}^{\targetpi}(s, \targetpi(s))\ge \widehat{R}^{\targetpi}(s, a)$
for all $s, a$ such that $\occstate(s)> 0$ 
and $(ii)$ follows from the fact that $\tilde{\pi}(\tilde{s})\ne \targetpi(\tilde{s})$.

We now show that if $R$ is also feasible for the optimization problem \eqref{prob.reward_poisoning_attack}, then $\norm{\overline{R} - R}_2 \ge \norm{\overline{R} - \widehat{R}^{\targetpi}}_2$. 
The key point about $\widehat{R}^{\targetpi}$, is the definition of $x_s$ in Equation \eqref{eq.lm.surplus_Equation}. 
Concretely,
it is clear that for $s\in \posStates$:
\begin{align}
    \notag \overline{R}(s, \targetpi(s)) - \widehat{R}^{\targetpi}(s, \targetpi(s))
    &=
    \overline{R}(s, \targetpi(s)) -
    x_{s} - \frac{\epsilon}{\occstate(s)}
    \\\notag&=
    -\big(
        x_s + \frac{\epsilon}{\occstate(s)} - \overline{R}(s, \targetpi(s))
    \big)
    \\\notag&\overset{\eqref{eq.lm.surplus_Equation}}{=}
    -\sum_{a\ne \targetpi(s)}\pos{\overline{R}(s, a) - x_s}
    \\\notag&=
    -\sum_{a\ne \targetpi(s)}\ind{\overline{R}(s, a) > x_s}\cdot
    (\overline{R}(s, a) - x_s)
    \\\notag&=
    -\sum_{a\ne \targetpi(s)}(\overline{R}(s, a) - \widehat{R}^{\targetpi}(s, a))
    \\&=
    \sum_{a\ne \targetpi(s)}(\widehat{R}^{\targetpi}(s, a) - \overline{R}(s, a))
    \label{eq.key_point_special_R_hat}
\end{align}
Now note that
\begin{align*}\norm{\overline{R} - R}_2^2 - 
    \norm{\overline{R} - \widehat{R}^{\targetpi}}_2^2
    &=
    \vecdot{\overline{R} - R - \overline{R} + \widehat{R}^{\targetpi}}{\overline{R} - R + \overline{R} - \widehat{R}^{\targetpi}}
    \\&=
    \vecdot{\widehat{R}^{\targetpi} - R}{\overline{R} - R + \overline{R}  - \widehat{R}^{\targetpi} }
    \\&=
    \vecdot{\widehat{R}^{\targetpi} - R}{\overline{R} - R + \overline{R} + \widehat{R}^{\targetpi}  - 2\widehat{R}^{\targetpi} }
    \\&=
    \norm{\widehat{R}^{\targetpi} - R}_2^2 + 2\vecdot{\widehat{R}^{\targetpi} - R}{\overline{R} - \widehat{R}^{\targetpi}}
    \\&\ge 
    2\vecdot{\widehat{R}^{\targetpi} - R}{\overline{R} - \widehat{R}^{\targetpi}}
    \\&= 2\Big(
        \vecdot{\overline{R} - \widehat{R}^{\targetpi}}{\widehat{R}^{\targetpi}} -
        \vecdot{\overline{R} - \widehat{R}^{\targetpi}}{R}
    \Big)
\end{align*}
It therefore suffices to show that the above quantity is non-negative. Note however,
\begin{align}
    \notag
    \vecdot{\overline{R} - \widehat{R}^{\targetpi}}{R}
    &=
    \sum_{(s, a)}
    (\overline{R}(s, a) - \widehat{R}^{\targetpi}(s, a))\cdot R(s, a)
    \\\notag&=
    \sum_{s\in \posStates, a}
    (\overline{R}(s, a) - \widehat{R}^{\targetpi}(s, a))\cdot R(s, a)
    + 
    \sum_{s\notin \posStates, a}
    (\overline{R}(s, a) - \widehat{R}^{\targetpi}(s, a))\cdot R(s, a)
    \\\notag&=
    \sum_{s\in \posStates, a}
    (\overline{R}(s, a) - \widehat{R}^{\targetpi}(s, a))\cdot R(s, a) + 
    \sum_{s\notin \posStates, a} 0\cdot R(s, a)
    \\\notag&=
    \sum_{s\in \posStates, a}
    (\overline{R}(s, a) - \widehat{R}^{\targetpi}(s, a))\cdot R(s, a)
    \\\notag&=
    \sum_{s \in \posStates} (\overline{R}(s, \targetpi(s)) - \widehat{R}^{\targetpi}(s, \targetpi(s)))\cdot R(s, \targetpi(s)) +
    \sum_{s\in \posStates, a\ne \targetpi(s)}
    (\overline{R}(s, a) - \widehat{R}^{\targetpi}(s, a))\cdot R(s, a)
    \\\notag&\overset{(i)}{=}
    \sum_{s \in \posStates} R(s, \targetpi(s)) \cdot\sum_{a\ne \targetpi(s)} 
    (\widehat{R}^{\targetpi}(s, a) - \overline{R}(s, a)) +
    \sum_{s\in \posStates, a\ne \targetpi(s)}
    (\overline{R}(s, a) - \widehat{R}^{\targetpi}(s, a))\cdot R(s, a)
    \\&=
    \sum_{s\in \posStates, a\ne \targetpi(s)}
    \big(\overline{R}(s, a) - \widehat{R}^{\targetpi}(s, a)\big)\cdot\big(
    R(s, a) - R(s, \targetpi(s))
    \big).
    \label{eq.vecdot_decomposition}
\end{align}
where $(i)$ follows from \eqref{eq.key_point_special_R_hat}.
Now note that since $R$ was assumed to be feasible, if $s\in \posStates$ and $a\ne \targetpi(s)$, then by defining $\tilde{\pi}$ as the policy that chooses $\targetpi(\tilde{s})$
in states $\tilde{s}\ne s$ and chooses $a$ in state $s$, it follows that
\begin{align*}
    \frac{\epsilon}{\occstate(s)} &\le 
    \frac{\score^{\targetpi, R}- \score^{\tilde{\pi}, R}
    }{\occstate(s)}
    \\&=
    \sum_{\tilde{s}} \frac{
    \occstate(\tilde{s})\big(
    R(\tilde{s}, \targetpi(\tilde{s})) - R(\tilde{s}, \tilde{\pi}(\tilde{s}))\big)
    }{\occstate(s)}
    \\&=    
     \frac{
    \occstate(s)\big(
    R(s, \targetpi(s)) - R(s, \tilde{\pi}(s))\big)
    }{\occstate(s)}
    \\&=
    \frac{
    \occstate(s)\big(
    R(s, \targetpi(s)) - R(s,a)\big)
    }{\occstate(s)}
    \\&=
    R(s, \targetpi(s)) - R(s, a).
\end{align*}
Therefore,
\begin{align*}
    \vecdot{\overline{R} - \widehat{R}^{\targetpi}}{R}
    &= 
    \sum_{s\in \posStates, a\ne \targetpi(s)}
    \big(\overline{R}(s, a) - \widehat{R}^{\targetpi}(s, a)\big)\cdot\big(
    R(s, a) - R(s, \targetpi(s))
    \big)
    \\&\le
    -\sum_{s\in \posStates, a\ne \targetpi(s)}
    \big(\overline{R}(s, a) - \widehat{R}^{\targetpi}(s, a)\big)\cdot\big(
    \frac{\epsilon}{\occstate(s)}
    \big)
\end{align*}
Using \eqref{eq.vecdot_decomposition} for $\widehat{R}^{\targetpi}$ (note that since no assumptions on $R$ were made in deriving the identity, it is valid for $R=\widehat{R}^{\targetpi}$),
\begin{align*}
    \vecdot{\overline{R} - \widehat{R}^{\targetpi}}{\widehat{R}^{\targetpi}}
    &=\sum_{s\in \posStates, a\ne \targetpi(s)}
    \big(\overline{R}(s, a) - \widehat{R}^{\targetpi}(s, a)\big)\cdot\big(
    \widehat{R}^{\targetpi}(s, a) - \widehat{R}^{\targetpi}(s, \targetpi(s))
    \big)
    \\&\overset{(i)}{=}
    -\sum_{s\in \posStates, a\ne \targetpi(s)}
    \big(\overline{R}(s, a) - \widehat{R}^{\targetpi}(s, a)\big)\cdot\big(
    \frac{\epsilon}{\occstate(s)}
    \big)
\end{align*}
where $(i)$ follows form the definition of $\widehat{R}^{\targetpi}$. Concretely, for state action pairs $s\in \posStates, a\ne \targetpi(s)$ such that
$\widehat{R}^{\targetpi}(s, a)\ne\overline{R}(s, a)$,
$\overline{R}(s, a)\ge x_s$ and therefore
\begin{align*}
    \widehat{R}^{\targetpi}(s, a) = x_s=\widehat{R}^{\targetpi}(s, \targetpi(s)) - \frac{\epsilon}{\occstate(s)}
\end{align*}
We therefore obtain 
$
\vecdot{\overline{R} - \widehat{R}^{\targetpi}}{\widehat{R}^{\targetpi}}
\ge 
\vecdot{\overline{R} - \widehat{R}^{\targetpi}}{R}
$
and the proof is concluded.
\end{proof}


\subsection{Proof of Lemma \ref{lm.special_mdp.cost_of_poisoning}}
{\bf Statement:} {\em 
Consider a {\em special} MDP with reward function $\overline{R}$, and let $\optallowpi(s) = \argmax_{a \in A_{s}^{\adm}}\overline{R}(s,a)$.
Then the cost of the optimal solution to the optimization problem \eqref{prob.reward_poisoning_attack} with $\targetpi = \optallowpi$ is less than or equal to the cost of the optimal solution to the optimization problem \eqref{prob.reward_poisoning_attack} with $\targetpi = \pi$ for any $\pi \in \allowedpi$.
}
\begin{proof}
~\\
    \textbf{Part 1:} We first prove the claim for single-state MDPs which are equivalent to multi-arm bandits. Since there is a single state, for ease of notation we drop the dependence on the state $s$ when referring to quantities that would normally depend on $s$ such as $\overline{R}(s, a)$ and $\pi(s)$.
    ~\\
    Let $a_1, a_2$ be two actions such that
    $\overline{R}(a_1) \ge \overline{R}(a_2)$. Denote by $\pi_1$ and $\pi_2$ the policies that deterministically choose $a_1$ and $a_2$ respectively and
    and denote by
    $\widehat{R}^{\pi_1}, \widehat{R}^{\pi_2}$ the solutions to the optimization problem \eqref{prob.reward_poisoning_attack} 
    with $\targetpi=\pi_1$ and $\targetpi=\pi_2$ respectively. 
    We will show that
    \begin{align*}
        \norm{\overline{R} - \widehat{R}^{\pi_2}}_2 \ge \norm{\overline{R} - \widehat{R}^{\pi_1}}_2.
    \end{align*}
    ~\\
    Consider the reward function $R'$ defined as
	\begin{align*}
		R'(a) = \begin{cases}
			\widehat{R}^{\pi_2}(a_1) \CasesIf a=a_2\\
			\widehat{R}^{\pi_2}(a_2) \CasesIf a=a_1\\
			\widehat{R}^{\pi_2}(a) \CasesOW
		\end{cases};
	\end{align*}
	In other words, we have swtiched the reward for $a_1$ and $a_2$ in $\widehat{R}^{\pi_2}$. 
	
	We claim that
	\begin{align*}
		||\overline{R} - \widehat{R}^{\pi_2}||_2
    \ge
    ||\overline{R} - R'||_2
    \ge
    ||\overline{R} - \widehat{R}^{\pi_1}||_2.
	\end{align*}
	To prove the second inequality, note that 
	$a_1$ is $\epsilon$-robust optimal in $R'$ since
	$a_2$ was $\epsilon$-robust optimal in $R'$ and 
	$R'$ was obtained from switching $a_1, a_2$ in $\widehat{R}^{\pi_2}$.
	Therefore the inequality follows from the optimality of $\widehat{R}^{\pi_1}$.
	
	As for the first inequality, note that it can be rewritten as
	\begin{align*}
		& 
		||\overline{R} - \widehat{R}^{\pi_2}||_2^2 \ge ||\overline{R} - R'||_2^2
		\\\iff &
		\sum_{i} (\overline{R}(a_i) - \widehat{R}^{\pi_2}(a_i))^2\ge 
		\sum_{i} (\overline{R}(a_i) - R'(a_i))^2
		\\\overset{(i)}{\iff}& 
		(\overline{R}(a_1) - \widehat{R}^{\pi_2}(a_1))^2 + (\overline{R}(a_2) - \widehat{R}^{\pi_2}(a_2))^2
		\ge 
		(\overline{R}(a_1) - \widehat{R}^{\pi_2}(a_2))^2 + (\overline{R}(a_2) - \widehat{R}^{\pi_2}(a_1))^2
		\\\iff & 
		-2\overline{R}(a_1)\widehat{R}^{\pi_2}(a_1) - 2\overline{R}(a_2)\widehat{R}^{\pi_2}(a_2)\ge 
		-2\overline{R}(a_1)\widehat{R}^{\pi_2}(a_2) - 2\overline{R}(a_2)\widehat{R}^{\pi_2}(a_1)
		\\\iff &
		\overline{R}(a_1)\widehat{R}^{\pi_2}(a_2) + \overline{R}(a_2)\widehat{R}^{\pi_2}(a_1)
		- 
		\overline{R}(a_1)\widehat{R}^{\pi_2}(a_1) - \overline{R}(a_2)\widehat{R}^{\pi_2}(a_2)
		\ge 0
		\\\iff &
		(\overline{R}(a_1) - \overline{R}(a_2))(\widehat{R}^{\pi_2}(a_2) - \widehat{R}^{\pi_2}(a_1))\ge 0,
	\end{align*}
	where $(i)$ follows from the definition of $R'(a_i)$.
	Note however that the last equation holds trivially since
	$\overline{R}(a_1) \ge \overline{R}(a_2)$ by assumption and
	$\widehat{R}^{\pi_2}(a_2) > \widehat{R}^{\pi_2}(a_1)$ by definition of $\widehat{R}^{\pi_2}$.
	Note further that if
	$\overline{R}(a_1) > \overline{R}(a_2)$, then
	the inequality is strict.

    The statement of the lemma now follows by setting
    $a_1 = \argmax_{a\in A_{s}^{\adm}}\overline{R}(a)$ and
    $a_2$ to be any $a\in A_{s}^{\adm}$.
    
\textbf{Part 2:}
    We now extend this result to multi-state special MDPs.
	Since the MDP is special, 
	given Lemma \ref{lm.special_mdp_attack_character}
	we can view the attack as separate single-state attacks for $s\in \posStates$ with parameters
	$\frac{\epsilon}{\occstate(s)}$.
	Since $\optallowpi(s)$ equals
	$\argmax_{a\in A_{s}^{\adm}}\overline{R}(s, a)$ by definition, Part 1 implies that the cost of the optimization problem \eqref{prob.reward_poisoning_attack} with $\targetpi=\optallowpi$ is not more than the cost of the optimization problem with $\targetpi=\pi$ for all $\pi\in \allowedpi$.
\end{proof}


\subsection{Proof of Theorem \ref{thm.spec_mdp_attack_form}}

{\bf Statement:} {\em Consider a special MDP with reward function $\overline{R}$. Define $\widehat R(s, a) = \overline R(s, a)$ for $\occstate(s) = 0$ and otherwise
\begin{align*}
    \widehat R(s, a) = \begin{cases}
     x_s + \frac{\epsilon}{\occstate(s)} \quad &\mbox{ if } a = \optallowpi(s)\\
     x_s \quad &\mbox{ if } a \ne \optallowpi(s) \land \overline{R}(s, a) \ge x_s \\
     \overline{R}(s, a) \quad &\mbox{ otherwise}
    \end{cases},
\end{align*}
where $x_s$ is the solution to the equation
\begin{align*}
     \sum_{a \ne \optallowpi(s)} \pos{\overline{R}(s, a) - x} = x - \overline{R}(s, \optallowpi(s)) + \frac{\epsilon}{\occstate(s)}.
\end{align*}
Then, $(\optallowpi, \widehat R)$
is an optimal solution to \eqref{prob.reward_design.approx}.
}
\begin{proof}
The claim follows from Lemmas 
\ref{lm.special_mdp.cost_of_poisoning} and \ref{lm.special_mdp_attack_character}. Concretely, 
given Lemma \ref{lm.special_mdp.cost_of_poisoning}, the solution to the optimization problem \eqref{prob.reward_design.approx} is $\optallowpi$ and
$\widehat{R}^{\targetpi}$ where $\widehat{R}^{\targetpi}$ is the solution to \eqref{prob.reward_poisoning_attack} with $\targetpi=\optallowpi$. The claim now follows from Lemma \ref{lm.special_mdp_attack_character} which characterizes $\widehat{R}^{\targetpi}$.

\end{proof}


\section{Proofs of the Results in Section \nameref{sec.general_mdps}}\label{app.proofs.general_mdps}

In this section, we provide proofs of our results in Section \nameref{sec.general_mdps}, namely Theorem \ref{thm.general_mdp.charact.bounds.opt_const}
and Theorem \ref{thm.general_mdp.charact.bounds.qgreedy}.
Before we present the proofs, we introduce and prove two auxiliary lemmas.

\begin{lemma}\label{lm.constructive_answer}
    For an arbitrary policy $\pi$, define $R', V, Q$ as
    \begin{align*}
        &R'(s, a) =
        \begin{cases}
            \overline{R}(s, a) +\delta^\pi(s) \quad\text{if}\quad 
            \occstate^{\pi}(s)> 0
            \text{ and }
            a=\pi(s)\\
            \overline{R}(s, a) - \epsilon'_{\pi}(s, a)\quad\text{if}\quad
            \occstate^{\pi}(s)> 0 \text{ and } a\ne \pi(s)\\
            \overline{R}(s, a)\quad\text{o.w.}
        \end{cases}\\
        &V(s) = V^{*, \overline{R}}(s)\\
        &Q(s, a) = R'(s, a) + \gamma\sum_{s'}P(s, a, s')V(s'),
    \end{align*}
    where
    \begin{align*}
    \label{eq.def_delta_pi}
        \delta^\pi(s) = 
        \begin{cases}
            Q^{*, \overline{R}}(s, \optpi(s)) - Q^{*, \overline{R}}(s, \pi(s)) \quad&\text{if}\quad \occstate^\pi(s) > 0
            \\
            0 \quad&\text{otherwise}
        \end{cases},
    \end{align*}
    and $\epsilon'_\pi$ is defined as in
    \eqref{eq.epsilon_prime}.
    The vectors $R', V, Q$ are feasible in 
    \eqref{prob.rp_approx} with $\epsilon'$ set to
    $\epsilon'_\pi$ and $\targetpi=\pi$.
    Furthermore, given Proposition \ref{lm.rp_approx},
    $R'$ is feasible for
    \eqref{prob.reward_poisoning_attack} with $\targetpi=\pi$ and therefore $(R', \pi)$ are feasible for
    \eqref{prob.reward_design.approx}.
\end{lemma}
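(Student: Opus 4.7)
The plan is to verify directly the four constraints of \eqref{prob.rp_approx} with $\targetpi = \pi$ and $\epsilon' = \epsilon'_\pi$, from which feasibility for \eqref{prob.reward_poisoning_attack} and \eqref{prob.reward_design.approx} follow by Proposition \ref{lm.rp_approx}. Constraint \eqref{constraint.rqv} holds by the very definition of $Q$. For the remaining three, the unifying observation is that $V = V^{*,\overline R}$ together with the specific offset $\delta^\pi(s)$ is engineered so that for $s \in \posStates^\pi$,
\[
Q(s,\pi(s)) \;=\; \overline R(s,\pi(s)) + \delta^\pi(s) + \gamma \sum_{s'} P(s,\pi(s),s') V^{*,\overline R}(s') \;=\; Q^{*,\overline R}(s,\pi(s)) + \delta^\pi(s) \;=\; Q^{*,\overline R}(s,\optpi(s)) \;=\; V^{*,\overline R}(s),
\]
which immediately gives \eqref{constraint.vqone}. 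Similarly, for $a \ne \pi(s)$ at such an $s$, one computes $Q(s,a) = Q^{*,\overline R}(s,a) - \epsilon'_\pi(s,a)$, and since $V^{*,\overline R}(s) \ge Q^{*,\overline R}(s,a)$, constraint \eqref{constraint.ge} drops out.

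For \eqref{constraint.vqzero}, I would note that when $s \notin \posStates^\pi$ we have $R'(s,a) = \overline R(s,a)$ for every $a$, hence $Q(s,a) = Q^{*,\overline R}(s,a) \le V^{*,\overline R}(s) = V(s)$, as needed. Each of these checks is an elementary unwinding of the Bellman equation for $Q^{*,\overline R}$ and the definitions of $R'$, $\delta^\pi$, and $\epsilon'_\pi$, so none should present a real obstacle.

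Having established feasibility of $(R',V,Q)$ in \eqref{prob.rp_approx}, I would invoke Proposition \ref{lm.rp_approx} to conclude that $R'$ satisfies the constraint $\optEpsDet(R') \subseteq \{\pi' \mid \pi'(s) = \pi(s) \text{ if } \occstate^\pi(s) > 0\}$ of \eqref{prob.reward_poisoning_attack} with $\targetpi = \pi$; strictly speaking the proposition is stated for the optimizer, but its proof (Part 1) only uses that the triple satisfies the four constraints of \eqref{prob.rp_approx}, and that argument applies verbatim to our $R'$. This is exactly the constraint appearing in \eqref{prob.reward_design.approx}, so $(R',\pi)$ is feasible there.

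The only subtlety I anticipate is ensuring that the various edge cases in the piecewise definition of $R'$ fit together cleanly. In particular, one should check that $\epsilon'_\pi(s,a)$ is well-defined (which is guaranteed by Lemma \ref{lm.mu_min_pos}) and that the constraint check for \eqref{constraint.ge} at states $s \in \posStates^\pi$ does not inadvertently reference transitions to states $s' \notin \posStates^\pi$ in a way that breaks anything—but since $V$ is $V^{*,\overline R}$ globally and transitions under arbitrary actions can freely leave $\posStates^\pi$, the bookkeeping is harmless. Thus the whole proof amounts to four one-line verifications plus a citation of the earlier proposition.
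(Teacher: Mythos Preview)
Your proposal is correct and follows essentially the same approach as the paper: verify the four constraints of \eqref{prob.rp_approx} directly via the Bellman identity for $Q^{*,\overline R}$ and the definitions of $R'$, $\delta^\pi$, and $\epsilon'_\pi$, then invoke Proposition~\ref{lm.rp_approx}. Your remark that Part~1 of that proposition's proof applies to any feasible triple (not just the optimizer) is a nice clarification the paper leaves implicit.
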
\begin{proof}
We check all of the conditions. \eqref{constraint.rqv} holds by definition of $Q$.
Now note that since $V=V^{*, \overline{R}}$ and $R'(s, a)=\overline{R}(s, a)$ for all $s\notin \posStates^{\pi}$, 
we conclude that $Q(s, a)=Q^{*, \overline{R}}(s, a)$ for all $s\notin \posStates^{\pi}$.
Therefore, since
$V^{*, \overline{R}}(s)\ge Q^{*, \overline{R}}(s, a)$ for all $s, a$, the constraint \eqref{constraint.vqzero} holds as well.
The constraint \eqref{constraint.vqone} holds by definition of $\qgapqval^\pi$ as
\begin{align*}
Q(s, \pi(s)) - V(s)
&=
R'(s, \pi(s)) + \gamma
\sum_{s'}P(s, \pi(s), s')V^{*, \overline{R}}(s')
- V^{*, \overline{R}}(s)
\\&=
\overline{R}(s, \pi(s)) + 
\delta^\pi(s) +
\gamma
\sum_{s'}P(s, \pi(s), s')V^{*, \overline{R}}(s')
- V^{*, \overline{R}}(s)
\\&=
\overline{R}(s, \pi(s)) + 
V^{*, \overline{R}}(s) -
Q^{*, \overline{R}}(s, \pi(s))
 +
\gamma
\sum_{s'}P(s, \pi(s), s')V^{*, \overline{R}}(s')
- V^{*, \overline{R}}(s)
\\&=
\left(
\overline{R}(s, \pi(s)) + 
\gamma
\sum_{s'}P(s, \pi(s), s')V^{*, \overline{R}}(s')
 -
Q^{*, \overline{R}}(s, \pi(s))
\right)
 +
 \left(
 V^{*, \overline{R}}(s)
- V^{*, \overline{R}}(s)
  \right)
  \\&=0.
\end{align*}
Furthermore, \eqref{constraint.ge} holds because for all $s\in \posStates^\pi, a\ne \pi(s)$,
\begin{align*}
    Q(s, a) &=
    \overline{R}(s, a) - \epsilon'(s, a) + \gamma\sum_{s'}P(s, a, s')V^{*, \overline{R}}(s')
    \\&=
    Q^{*, \overline{R}}(s, a) - 
    \epsilon'(s, a)
    \\&\le
    V^{*, \overline{R}}(s) - 
    \epsilon'(s, a)
    \\&\overset{\eqref{constraint.vqone}}{=}
    Q(s, \pi(s))-\epsilon'(s, a).
\end{align*}
Therefore, all 4 sets of constraints are satisfied which proves feasibility.
Finally, given Proposition \ref{lm.rp_approx},
$R'$ is feasible for
\eqref{prob.reward_poisoning_attack} with $\targetpi=\pi$ and therefore $R', \pi$ are feasible for
\eqref{prob.reward_design.approx}.
\end{proof}
\begin{lemma}{Lemma 7 in \citep{ma2019policy}}\label{lm.ma2019policy_lemma_7}
    For arbitrary reward functions $R_1$ and $R_2$,
	\begin{align*}
    (1-\gamma) \cdot 
    \norm{Q^{*, R_1} - Q^{*, R_2}}_{\infty}
	 \le
	||R_1 - R_2||_\infty 
	\end{align*}
\end{lemma}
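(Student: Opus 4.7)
The plan is to prove the bound via the standard contraction argument for the Bellman optimality operator. For a reward function $R$, define the operator $T_R$ acting on $|S|\times|A|$ value functions by
\begin{align*}
(T_R Q)(s,a) = R(s,a) + \gamma \sum_{s'} P(s,a,s') \max_{a'} Q(s',a').
\end{align*}
It is classical that $T_R$ is a $\gamma$-contraction in the sup norm and that $Q^{*,R}$ is its unique fixed point. I would invoke these facts rather than reprove them; the contraction property itself is immediate from the elementary inequality $|\max_{a'} Q(s',a') - \max_{a'} Q'(s',a')| \le \|Q - Q'\|_\infty$ combined with $\sum_{s'} P(s,a,s') = 1$.

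Using the fixed-point identities $Q^{*,R_1} = T_{R_1} Q^{*,R_1}$ and $Q^{*,R_2} = T_{R_2} Q^{*,R_2}$, I would insert the intermediate term $T_{R_1} Q^{*,R_2}$ and write
\begin{align*}
Q^{*,R_1} - Q^{*,R_2} = \bigl( T_{R_1} Q^{*,R_1} - T_{R_1} Q^{*,R_2} \bigr) + \bigl( T_{R_1} Q^{*,R_2} - T_{R_2} Q^{*,R_2} \bigr).
\end{align*}
The first bracket has sup-norm at most $\gamma \cdot \|Q^{*,R_1} - Q^{*,R_2}\|_\infty$ by the contraction property. The second bracket equals $R_1 - R_2$ pointwise, because both operators use the same transition kernel and the same argument $Q^{*,R_2}$, so the transition-expectation terms cancel exactly and only the reward parts remain. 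Taking sup norms and applying the triangle inequality yields
\begin{align*}
\|Q^{*,R_1} - Q^{*,R_2}\|_\infty \le \gamma \cdot \|Q^{*,R_1} - Q^{*,R_2}\|_\infty + \|R_1 - R_2\|_\infty,
\end{align*}
and rearranging gives the claimed bound $(1-\gamma) \cdot \|Q^{*,R_1} - Q^{*,R_2}\|_\infty \le \|R_1 - R_2\|_\infty$.

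There is no real obstacle here; the entire argument is a textbook contraction perturbation bound. The only care needed is bookkeeping around the intermediate term, and verifying that $T_{R_1} Q^{*,R_2} - T_{R_2} Q^{*,R_2}$ collapses cleanly to $R_1 - R_2$. If one preferred a more direct route, an alternative would be to write $Q^{\pi, R_1}(s,a) - Q^{\pi, R_2}(s,a) = \mathbb{E}[\sum_{t \ge 1} \gamma^{t-1}(R_1 - R_2)(s_t,a_t) \mid \pi, s, a]$, which is bounded in absolute value by $\|R_1 - R_2\|_\infty / (1-\gamma)$, and then apply $|\max_\pi f(\pi) - \max_\pi g(\pi)| \le \max_\pi |f(\pi) - g(\pi)|$ to pass from $Q^\pi$ to $Q^*$; but the contraction argument is cleaner and self-contained.
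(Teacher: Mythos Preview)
Your proof is correct. Note, however, that the paper does not supply its own proof of this lemma: it is stated as a citation (``Lemma 7 in \cite{ma2019policy}'') and used as a black box in the subsequent corollary. Your contraction argument is the standard one and would be an appropriate self-contained justification; there is nothing to compare against in the present paper.
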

\begin{corollary}\label{cor.lower_bound}
Let $\widehat{R}^{\pi}$ be the solution to 
the optimization problem \eqref{prob.reward_poisoning_attack} with $\targetpi=\pi$.
Define $\maxgapqval$ as in \eqref{eq.q_gap}.
The following holds.
\begin{align*}
    \norm{\overline{R} -\widehat{R}^{\pi}}_{2} \ge 
    \frac{1-\gamma}{2}\cdot\maxgapqval
\end{align*}
\end{corollary}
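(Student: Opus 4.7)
The plan is to invoke Lemma~\ref{lm.ma2019policy_lemma_7}, which converts reward perturbations into bounds on $Q^{*}$-value perturbations, together with the fact that $\pi$ must be optimal under $\widehat{R}^{\pi}$ on the states it visits. First I would pick $s^{*} \in \posStates^{\pi}$ attaining the maximum in \eqref{eq.q_gap}, so that
$\maxgapqval = Q^{*,\overline{R}}(s^{*},\optpi(s^{*})) - Q^{*,\overline{R}}(s^{*},\pi(s^{*}))$.

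Next, because $\widehat{R}^{\pi}$ is feasible for \eqref{prob.reward_poisoning_attack}, the short argument used around equation \eqref{eq.dec9_1920} in the proof of Proposition~\ref{lm.rp_approx} shows that $\pi(s) \in \arg\max_{a} Q^{*,\widehat{R}^{\pi}}(s,a)$ for every $s \in \posStates^{\pi}$; in particular $Q^{*,\widehat{R}^{\pi}}(s^{*},\pi(s^{*})) \ge Q^{*,\widehat{R}^{\pi}}(s^{*},\optpi(s^{*}))$. Adding and subtracting the two $Q^{*,\widehat{R}^{\pi}}$ terms and using this inequality to discard the middle bracket yields the telescoping bound
\begin{align*}
\maxgapqval
&\le \bigl[Q^{*,\overline{R}}(s^{*},\optpi(s^{*})) - Q^{*,\widehat{R}^{\pi}}(s^{*},\optpi(s^{*}))\bigr]
+ \bigl[Q^{*,\widehat{R}^{\pi}}(s^{*},\pi(s^{*})) - Q^{*,\overline{R}}(s^{*},\pi(s^{*}))\bigr] \\
&\le 2\,\norm{Q^{*,\overline{R}} - Q^{*,\widehat{R}^{\pi}}}_{\infty}.
\end{align*}
Finally, I would apply Lemma~\ref{lm.ma2019policy_lemma_7} to the right-hand side and use the trivial estimate $\norm{\cdot}_{\infty} \le \norm{\cdot}_{2}$, obtaining $\maxgapqval \le \tfrac{2}{1-\gamma}\,\norm{\overline{R}-\widehat{R}^{\pi}}_{2}$, which rearranges to the claim.

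The only subtle point in this plan is the middle step: without knowing that $\pi(s^{*})$ maximises $Q^{*,\widehat{R}^{\pi}}(s^{*},\cdot)$, the middle bracket in the decomposition has an uncontrolled sign and the argument collapses. Fortunately this optimality property is already established as an intermediate claim in the proof of Proposition~\ref{lm.rp_approx}, so once it is quoted, the corollary reduces to a clean two-line combination with Lemma~\ref{lm.ma2019policy_lemma_7}.
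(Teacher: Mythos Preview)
Your proposal is correct and follows essentially the same route as the paper's proof: pick the maximising state, use optimality of $\pi$ under $\widehat{R}^{\pi}$ on $\posStates^{\pi}$ to flip the sign of the $Q^{*,\widehat{R}^{\pi}}$ gap, telescope to $2\norm{Q^{*,\overline{R}}-Q^{*,\widehat{R}^{\pi}}}_\infty$, and apply Lemma~\ref{lm.ma2019policy_lemma_7} followed by $\norm{\cdot}_\infty\le\norm{\cdot}_2$. The only difference is cosmetic: the paper simply asserts that ``$\pi$ is optimal in $\widehat{R}^{\pi}$'' at $s_{\max}$, whereas you explicitly cite the intermediate claim from the proof of Proposition~\ref{lm.rp_approx} to justify $\pi(s^{*})\in\arg\max_a Q^{*,\widehat{R}^{\pi}}(s^{*},a)$, which is arguably the cleaner way to phrase that step.
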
\begin{proof}
    Define $s_{\max}$ as 
    \begin{align*}
    \argmax_{s\in \posStates^\pi}
    \big(
    Q^{*, \overline{R}}(s, \optpi(s)) - 
    Q^{*, \overline{R}}(s, \pi(s))
    \big).
\end{align*}
    
    Since $s_{\max}\in \posStates^\pi$ and $\pi$ is optimal in $\widehat{R}^{\pi}$,
    it is clear that
    $Q^{*, \widehat{R}^{\pi}}(s_{\max}, \pi(s_{\max})) \ge 
    Q^{*, \widehat{R}^{\pi}}(s_{\max}, \optpi(s_{\max}))$. However, 
    $Q^{*, \overline{R}}(s_{\max}, \pi(s_{\max}))
    \le Q^{*, \overline{R}}(s_{\max}, \optpi(s_{\max})) -
    \maxgapqval
    $. 
    Summing up the two inequalities,
    \begin{align*}
        \maxgapqval &\le 
     Q^{*, \overline{R}}(s_{\max}, \optpi(s_{\max}))
     - 
     Q^{*, \overline{R}}(s_{\max}, \pi(s_{\max})) + 
     Q^{*, \widehat{R}^{\pi}}(s_{\max}, \pi(s_{\max})) 
      -  
      Q^{*, \widehat{R}^{\pi}}(s_{\max},\optpi(s_{\max}))
     \\&=
     \left[
      Q^{*, \overline{R}}(s_{\max}, \optpi(s_{\max}))
      - 
            Q^{*, \widehat{R}^{\pi}}(s_{\max},\optpi(s_{\max}))
     \right]
     + \left[
          Q^{*, \widehat{R}^{\pi}}(s_{\max}, \pi(s_{\max})) 
    - 
         Q^{*, \overline{R}}(s_{\max}, \pi(s_{\max})) 
     \right]
     \\&\le 
     2\cdot\norm{Q^{*, \overline{R}} - 
     Q^{*, \widehat{R}^{\targetpi}}
     }_{\infty}
     \\&\overset{(i)}{\le} 
     \frac{2}{1-\gamma}\cdot\norm{
     \overline{R} - \widehat{R}^{\targetpi}
     }_{\infty}
     \\&\le 
     \frac{2}{1-\gamma}\cdot\norm{
     \overline{R} - \widehat{R}^{\targetpi}
     }_{2}.
    \end{align*}
    where $(i)$ follows from Lemma \ref{lm.ma2019policy_lemma_7}.
\end{proof}


\subsection{Proof of Theorem
\ref{thm.general_mdp.charact.bounds.opt_const}}

To prove Theorem \ref{thm.general_mdp.charact.bounds.opt_const} we will utilize the following lemma.

\begin{lemma}\label{lm.general_mdp.charact.bounds.opt_const}
Let $\pi\in \PiDet$. be an arbitrary deterministic policy.
Define
$\scoregap$ as
$\scoregap = \score^{\optpi, \overline{R}} - \score^{\pi, \overline{R}}$ and let $\widehat{R}$ be the solution to the optimization problem
\eqref{prob.reward_poisoning_attack} with $\targetpi=\pi$. The following holds:
\begin{align*}
    \frac{1-\gamma}{2}\scoregap \le 
    \norm{\widehat{R} - \overline{R}}_2
    \le 
    \frac{1}{\occstate_{\min}^{\pi}} \cdot \scoregap + \frac{\epsilon}{\occstate_{\min}}\sqrt{|S|\cdot|A|}.
\end{align*}
\end{lemma}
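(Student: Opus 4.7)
The plan is to split the proof into the lower and the upper bound, each of which reduces to a short computation once the right identity is invoked.

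For the lower bound, I will relate $\scoregap$ to $\maxgapqval$ and then apply Corollary~\ref{cor.lower_bound}. Using Lemma~\ref{lm.score_diff_q_value} with $\pi_1=\pi$ and $\pi_2=\optpi$ (and the fact that $Q^{\optpi,\overline{R}}=Q^{*,\overline{R}}$), I get
\begin{align*}
\scoregap = \score^{\optpi,\overline{R}}-\score^{\pi,\overline{R}}
= \sum_{s\in \posStates^{\pi}} \occstate^{\pi}(s)\bigl(Q^{*,\overline{R}}(s,\optpi(s))-Q^{*,\overline{R}}(s,\pi(s))\bigr).
\end{align*}
Since each Q-gap is non-negative and $\sum_s\occstate^{\pi}(s)=1$, this is upper-bounded by the maximum gap, i.e.\ $\scoregap\le\maxgapqval$. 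Combining with Corollary~\ref{cor.lower_bound} yields $\norm{\widehat{R}-\overline{R}}_2\ge\tfrac{1-\gamma}{2}\maxgapqval\ge\tfrac{1-\gamma}{2}\scoregap$.

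For the upper bound, I will construct a feasible perturbation $R'$ for \eqref{prob.reward_poisoning_attack} with target $\pi$ and bound its distance to $\overline{R}$; optimality of $\widehat{R}$ then gives $\norm{\widehat{R}-\overline{R}}_2\le\norm{R'-\overline{R}}_2$. The natural choice is the construction of Lemma~\ref{lm.constructive_answer}, which produces
\begin{align*}
\norm{R'-\overline{R}}_2^2 = \sum_{s\in\posStates^{\pi}}\delta^{\pi}(s)^2 \;+\; \sum_{s\in\posStates^{\pi},\,a\ne\pi(s)}\epsilon'_{\pi}(s,a)^2,
\end{align*}
where $\delta^{\pi}(s)=Q^{*,\overline{R}}(s,\optpi(s))-Q^{*,\overline{R}}(s,\pi(s))\ge 0$. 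Using $\sqrt{A^2+B^2}\le A+B$ for $A,B\ge 0$, it suffices to bound the two sums separately.

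The key step is the first sum. Since the terms $\delta^{\pi}(s)$ are non-negative, $\sqrt{\sum_s\delta^{\pi}(s)^2}\le\sum_s\delta^{\pi}(s)$. By the identity for $\scoregap$ derived above and the definition of $\occstate^{\pi}_{\min}$,
\begin{align*}
\sum_{s\in\posStates^{\pi}}\delta^{\pi}(s) \;\le\; \frac{1}{\occstate^{\pi}_{\min}}\sum_{s\in\posStates^{\pi}}\occstate^{\pi}(s)\,\delta^{\pi}(s) \;=\; \frac{\scoregap}{\occstate^{\pi}_{\min}}.
\end{align*}
For the second sum, the bound $\epsilon'_{\pi}(s,a)\le\epsilon/\occstate_{\min}$ (immediate from \eqref{eq.epsilon_prime}) gives $\sqrt{\sum_{s,a}\epsilon'_{\pi}(s,a)^2}\le (\epsilon/\occstate_{\min})\sqrt{|S|\cdot|A|}$. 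Adding the two bounds yields the claimed inequality. The only subtle point—what I expect to be the main obstacle to readers following the argument—is the use of $\sum a_i^2\le(\sum a_i)^2$ on the $\delta^{\pi}$ terms, which works precisely because the Q-gaps are non-negative, and the re-weighting by $1/\occstate^{\pi}_{\min}$ needed to convert a sum over states into a weighted sum that matches $\scoregap$.
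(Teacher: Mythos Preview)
Your proof is correct and follows essentially the same route as the paper. Both the lower bound (via Corollary~\ref{cor.lower_bound} combined with $\scoregap\le\maxgapqval$ from Lemma~\ref{lm.score_diff_q_value}) and the upper bound (via the construction of Lemma~\ref{lm.constructive_answer}, the $\ell_2\le\ell_1$ bound on $\delta^{\pi}$, and the reweighting by $1/\occstate^{\pi}_{\min}$ to recover $\scoregap$) match the paper's argument step for step.
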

\begin{proof}
~\\
{\bf Upper bound: } We prove the upper bound in a constructive manner, using the reward vector $R'$ as defined in Lemma \ref{lm.constructive_answer}.
~\\
Setting $\delta^{\pi}$ as in Lemma 
\ref{lm.constructive_answer}, 
 the cost of  modifying $\overline{R}$ to $R'$ is bounded by:
\begin{align*}
    \norm{R' - \overline{R}}_2 &\le \norm{\delta^\pi}_2 + 
    \norm{\epsilon'_{\pi}}_2
    \\&\le 
    \norm{\delta^\pi}_2 + 
    \frac{\epsilon}{\occstate_{min}} \cdot \sqrt{|S| \cdot |A|}
\end{align*}
It remains to bound the term $\norm{\delta^\pi}_2 $.
From Lemma \ref{lm.score_diff_q_value} and the definition of $\pi$, we have:
\begin{align*}
    \score^{\pi, \overline{R}} - \score^{\optpi, \overline{R}} &= \sum_{s} \occstate^{\pi}(s)
    \cdot 
    \big(
    Q^{\optpi, \overline{R}}(s, \pi'(s))
    -Q^{\optpi, \overline{R}}(s, \optpi(s))
    \big)
    \\
     &=\sum_{s\in \posStates^\pi}
     \occstate^{\pi}(s)
    \big(
    Q^{\optpi, \overline{R}}(s, \pi'(s))
    -Q^{\optpi, \overline{R}}(s, \optpi(s))
    \big)
    \\
    &=\sum_{s\in \posStates^\pi}
    \occstate^\pi(s)\cdot \delta^\pi(s)
    \\
    &\ge\sum_{s\in \posStates^\pi}
    \occstate_{\min}^\pi \cdot \delta^\pi(s)
    \\
    &\overset{(i)}=
    \sum_{s}
    \occstate_{\min}^\pi \cdot \delta^\pi(s)
    \\
    &=
    \occstate^\pi_{\min}\cdot \norm{\delta^\pi}_1
    \\&\ge \occstate^{\pi}_{min} \cdot \norm{\delta^\pi}_2,
\end{align*}
where $(i)$ follows from the fact that $\delta^\pi(s)=0$ for $s\notin \posStates^\pi$.
We can therefore conclude that
\begin{align*}
    \norm{\widehat{R} - \overline{R}}_2 \overset{(i)}{\le}\norm{R' - \overline{R}}_2 &\le \frac{1}{\occstate_{min}^{\pi}} \cdot \left [ \score^{\optpi, \overline{R}} - \score^{\pi, \overline{R}} \right] + \frac{\epsilon}{\occstate_{min}} \cdot \sqrt{|S| \cdot |A|}
    \\&=
        \frac{1}{\occstate_{\min}^{\pi}} \cdot \scoregap + \frac{\epsilon}{\occstate_{\min}}\sqrt{|S|\cdot|A|}.
\end{align*}
where $(i)$ follows from the fact that $R'$ is feasible in the optimization problem \eqref{prob.reward_poisoning_attack}
with $\targetpi = \pi$
by Lemma \ref{lm.score_diff_q_value} while $\widehat{R}$ is optimal for the problem.
~\\
\textbf{Lower Bound: }
Given Corollary \ref{cor.lower_bound}, 
\begin{align*}
    \norm{\overline{R} - \widehat{R}}_2\ge
    \frac{1-\gamma}{2}\cdot \maxgapqval.
\end{align*}
Note however that Lemma \ref{lm.score_diff_q_value} implies 
\begin{align*}
\score^{\optpi, \overline{R}} -
    \score^{\pi, \overline{R}}
 &=
 \sum_{s}\occstate^{\pi}(s)\big(
 Q^{\optpi, \overline{R}}(s, \optpi(s)) - 
 Q^{\optpi, \overline{R}}(s, \pi(s))
 \big)
 \\&=
 \sum_{s\in \posStates^\pi}\occstate^{\pi}(s)\big(
 Q^{*, \overline{R}}(s, \optpi(s)) - 
 Q^{*, \overline{R}}(s, \pi(s))
 \big)
 \\&\le 
 \sum_{s\in \posStates^\pi}\occstate^\pi(s) \maxgapqval = \maxgapqval,
\end{align*}
which proves the claim.
\end{proof}

We can now prove Theorem
\ref{thm.general_mdp.charact.bounds.opt_const}.

~\\
{\bf Statement:} 
{\em 
The relative value $\relval$ is bounded by
\begin{align*}
\alpha_{\score} \cdot \minscoregap \le \relval \le  \beta_{\score} \cdot \minscoregap + \frac{\epsilon}{\occstate_{\min}} \cdot \sqrt{|S| \cdot |A|},
\end{align*}
where $\alpha_{\rho} = \left (\lambda + \frac{1-\gamma}{2} \right )$ and $\beta_{\score} = \left (\lambda + \frac{1}{\occstate_{\min}} \right )$.
}
\begin{proof}
~\\
Given the Lemma \ref{lm.general_mdp.charact.bounds.opt_const}, it is clear that since $\min_{\pi} \scoregap = \minscoregap$,
setting $(\widehat{R}_2, \pi_2)$ as the solution to \eqref{prob.reward_design.approx},
\begin{align*}
    \frac{1-\gamma}{2}\cdot\minscoregap\le \norm{\overline{R} - \RsolB}_2 \le 
    \frac{1}{\occstate_{\min}} \cdot \minscoregap + \frac{\epsilon}{\occstate_{\min}}\cdot \sqrt{|S|\cdot|A|}
\end{align*}
Now note that for any $\pi\in \allowedpi$, including
$\pi_2$,
$\score^{\optpi, \overline{R}} - 
    \score^{\pi, \overline{R}}\ge \minscoregap$, which proves the lower bound. As for the upper bound,
    setting
    $\widehat{R}^{\optallowpi}$ as the solution to 
    \eqref{prob.reward_poisoning_attack} with 
    $\targetpi=\optallowpi$,
    \begin{align*}
        \Phi &= 
        \norm{\overline{R} - \widehat{R}_2}_2 + 
        \lambda\cdot
        [
        \score^{\optpi, \overline{R}}
        -\score^{\pi_2, \overline{R}}
        ]
        \\&\overset{(i)}{\le} 
        \norm{\overline{R} - \widehat{R}^{\optallowpi}}_2 + 
        \lambda\cdot
        [
        \score^{\optpi, \overline{R}}
        -\score^{\optallowpi, \overline{R}}
        ]
        \\&\overset{(ii)}{\le}
        \beta_{\score} \cdot \minscoregap + \frac{\epsilon}{\occstate_{\min}} \cdot \sqrt{|S| \cdot |A|},
    \end{align*}
    where $(i)$ follows from the optimality of
    $(\widehat{R}_2, \pi_2)$ and $(ii)$ follows from
    Lemma \ref{lm.general_mdp.charact.bounds.opt_const} and
    the fact that
    $\score^{\optpi, \overline{R}}
        -\score^{\optallowpi, \overline{R}}
        =\minscoregap$.
\end{proof}


\subsection{Proof of Theorem
\ref{thm.general_mdp.charact.bounds.qgreedy}}

To prove Theorem \ref{thm.general_mdp.charact.bounds.qgreedy}, we utilize the following result. 

\begin{lemma}\label{lm.general_mdp.charact.bounds.qgreedy}
Let $\widehat{R}$ be the solution to
\eqref{prob.reward_poisoning_attack} with $\targetpi=\pi$. The following holds:
\begin{align*}
    \frac{1-\gamma}{2} \cdot \maxgapqval \le \norm{\overline{R} - \widehat{R}}
    \le \sqrt{|S|} \cdot \maxgapqval + \frac{\epsilon}{\occstate_{min}} \cdot \sqrt{|S| \cdot |A|}.
\end{align*}
\end{lemma}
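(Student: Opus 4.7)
The lower bound is an immediate consequence of Corollary~\ref{cor.lower_bound}, which gives $\frac{1-\gamma}{2}\cdot \maxgapqval \le \norm{\overline{R}-\widehat{R}}_2$ directly from the definition of $\maxgapqval$ and Lemma~\ref{lm.ma2019policy_lemma_7}. So the real work is in the upper bound.

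The plan for the upper bound is constructive: exhibit an explicit reward function $R'$ that is feasible for \eqref{prob.reward_poisoning_attack} with $\targetpi = \pi$ and whose distance from $\overline{R}$ is controlled by $\maxgapqval$. The natural candidate is precisely the reward $R'$ supplied by Lemma~\ref{lm.constructive_answer}, which raises $\overline{R}(s,\pi(s))$ by $\delta^{\pi}(s) = Q^{*,\overline{R}}(s,\optpi(s)) - Q^{*,\overline{R}}(s,\pi(s))$ on states $s \in \posStates^{\pi}$ and lowers $\overline{R}(s,a)$ by $\epsilon'_{\pi}(s,a)$ on the off-policy actions. Feasibility of $R'$ for \eqref{prob.reward_poisoning_attack} is already established by Lemma~\ref{lm.constructive_answer} together with Proposition~\ref{lm.rp_approx}, so by optimality of $\widehat{R}$ we have $\norm{\overline{R}-\widehat{R}}_2 \le \norm{\overline{R}-R'}_2$.

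The next step is to bound $\norm{\overline{R}-R'}_2$. Since $R'$ differs from $\overline{R}$ only through the additive corrections $\delta^{\pi}$ on on-policy pairs and $-\epsilon'_{\pi}$ on off-policy pairs, and these two sets of state-action pairs are disjoint, I would decompose
\begin{align*}
\norm{\overline{R}-R'}_2^2 = \sum_{s \in \posStates^{\pi}} (\delta^{\pi}(s))^2 + \sum_{s \in \posStates^{\pi}} \sum_{a \ne \pi(s)} (\epsilon'_{\pi}(s,a))^2,
\end{align*}
and then apply $\sqrt{a^2+b^2} \le a+b$ to split off the two pieces. For the first piece, $|\delta^{\pi}(s)| \le \maxgapqval$ for every $s \in \posStates^{\pi}$ by definition of $\maxgapqval$, so the sum is at most $|S|\cdot (\maxgapqval)^2$, giving a $\sqrt{|S|}\cdot \maxgapqval$ contribution. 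For the second piece, the definition of $\epsilon'_{\pi}$ in~\eqref{eq.epsilon_prime} yields $\epsilon'_{\pi}(s,a) \le \epsilon/\occstate_{\min}$ whenever it is nonzero, so the sum is at most $|S|\cdot|A|\cdot (\epsilon/\occstate_{\min})^2$, giving a $\sqrt{|S|\cdot|A|}\cdot \epsilon/\occstate_{\min}$ contribution. Adding the two contributions delivers the claimed upper bound.

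The main obstacle I anticipate is not any single calculation above but rather making sure the preconditions of Lemma~\ref{lm.constructive_answer} and Proposition~\ref{lm.rp_approx} line up correctly for an arbitrary $\pi$. In particular, the bound $\epsilon'_{\pi}(s,a)\le \epsilon/\occstate_{\min}$ relies on $\min_{\pi'\in D(\pi,s,a)} \occstate^{\pi'}(s)$ being both well-defined and bounded below by $\occstate_{\min}$, which in turn rests on Lemma~\ref{lm.mu_min_pos}; once that is in hand, everything else is a short argument combining the constructive feasibility result, optimality of $\widehat{R}$, and the disjoint-support Pythagorean bookkeeping.
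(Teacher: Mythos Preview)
Your proposal is correct and follows essentially the same route as the paper: the lower bound via Corollary~\ref{cor.lower_bound}, and the upper bound by invoking the constructive $R'$ of Lemma~\ref{lm.constructive_answer}, using optimality of $\widehat{R}$, and then bounding $\norm{\delta^\pi}_2 \le \sqrt{|S|}\cdot\maxgapqval$ and $\norm{\epsilon'_\pi}_2 \le \frac{\epsilon}{\occstate_{\min}}\sqrt{|S|\cdot|A|}$. The only cosmetic difference is that the paper applies the triangle inequality directly to $\norm{R'-\overline{R}}_2 \le \norm{\delta^\pi}_2 + \norm{\epsilon'_\pi}_2$, whereas you spell out the disjoint-support Pythagorean step before using $\sqrt{a^2+b^2}\le a+b$; both yield the same bound.
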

\begin{proof}
The lower bound follows from Corollary \ref{cor.lower_bound}.
As for the upper bound,
define $R'$ as in Lemma \ref{lm.constructive_answer} and note that
\begin{align*}
  \norm{\widehat{R} - \overline{R}}_2 
  \overset{(i)}{\le}
  \norm{R' - \overline{R}}_2 
  \le \norm{\delta^\pi}_2 + 
  \norm{\epsilon'_{\pi}}_2
  \le \sqrt{|S|} \cdot \maxgapqval + \frac{\epsilon}{\occstate_{min}} \cdot \sqrt{|S| \cdot |A|}.
\end{align*}
where $(i)$ follows from the fact that $R'$ is feasible in the optimization problem \eqref{prob.reward_poisoning_attack}
with $\targetpi = \pi$
by Lemma \ref{lm.score_diff_q_value} while $\widehat{R}$ is optimal for the problem.
\end{proof}

We can now prove Theorem
\ref{thm.general_mdp.charact.bounds.qgreedy}.

~\\
\textbf{Statement: }
\emph{
The relative value $\relval$ is bounded by
\begin{align*}
&\alpha_Q \cdot \minmaxgapqval \le 
    \Phi
    \le 
    \beta_Q \cdot \minmaxgapqval
    + \frac{\epsilon}{\occstate_{\min}}\sqrt{|S| \cdot |A|},
\end{align*}
where $\alpha_Q = \left (\lambda\cdot \occstate_{\min} + \frac{1-\gamma}{2}\right )$ and $\beta_Q = \left( \lambda + \sqrt{|S|} \right)$.
}
\begin{proof}
~\\
\textbf{Lower bound:}
Note that for any admissible $\pi$,
by Lemma \ref{lm.score_diff_q_value},
\begin{align*}
    \score^{\optpi, \overline{R}} - 
    \score^{\pi, \overline{R}}
    &=
    \sum_{s}\occstate^{\pi}(s)\big(
        Q^{*, \overline{R}}(s, \optpi(s)) - 
        Q^{*, \overline{R}}(s, \pi(s))
    \big)
    \\&\ge 
    \occstate_{\min}\cdot
    \maxgapqval
    \ge
    \occstate_{\min}\cdot 
    \minmaxgapqval.
\end{align*}
the claim now follows from Lemma
\ref{lm.general_mdp.charact.bounds.qgreedy} and the definition of $\relval$.
~\\
\textbf{Upper bound:} 
Recall that $\piqg=\argmin_{\pi\in \allowedpi} \maxgapqval$.
Using Lemma \ref{lm.score_diff_q_value},
\begin{align*}
    \score^{\optpi, \overline{R}} - 
    \score^{\piqg, \overline{R}}
    &=
    \sum_{s}\occstate^{\piqg}(s)\big(
        Q^{*, \overline{R}}(s, \optpi(s)) - 
        Q^{*, \overline{R}}(s, \piqg(s))
    \big)
    \\&=
    \sum_{s\in \posStates^{\piqg}}
    \occstate^{\piqg}(s)\big(
        Q^{*, \overline{R}}(s, \optpi(s)) - 
        Q^{*, \overline{R}}(s, \piqg(s))
    \big)
    \\&\le 
    \sum_{s\in \posStates^{\piqg}}
    \occstate^{\piqg}(s)\cdot
    \minmaxgapqval
    \\&=
    \minmaxgapqval.
\end{align*}
Similiar to the proof of
Theorem \ref{thm.general_mdp.charact.bounds.opt_const},
the claim now follows form Lemma
\ref{lm.general_mdp.charact.bounds.qgreedy}, the definition of $\relval$ and optimality of $\pi_2$. Concretely, setting
$\widehat{R}^{\piqg}$ as the solution to \eqref{prob.reward_poisoning_attack}
with $\targetpi=\piqg$,
\begin{align*}
        \Phi &= 
        \norm{\overline{R} - \widehat{R}_2}_2 + 
        \lambda\cdot
        [
        \score^{\optpi, \overline{R}}
        -\score^{\pi_2, \overline{R}}
        ]
        \\&\le
        \norm{\overline{R} - \widehat{R}^{\piqg}}_2 + 
        \lambda\cdot
        [
        \score^{\optpi, \overline{R}}
        -\score^{\piqg, \overline{R}}
        ]
        \\&\le
        \beta_{Q} \cdot \minmaxgapqval + \frac{\epsilon}{\occstate_{\min}} \cdot \sqrt{|S| \cdot |A|}.
    \end{align*}
\end{proof}

}
}
{}
\end{document}
